\theoremstyle{plain}
\newtheorem{theorem}{Theorem}[section]
\newtheorem{lemma}[theorem]{Lemma}
\theoremstyle{definition}
\newtheorem{definition}[theorem]{Definition}
\newtheorem{assumption}[theorem]{Assumption}
\theoremstyle{remark}
\def \w {\mathbf{w}}  
\def \bw {\bar{\mathbf{w}}} 
\def \teta {\tilde{\eta}}  
\def \m {\mathbf{m}} 
\def \hm {\hat{\mathbf{m}}} 
\def \bm {\bar{\mathbf{m}}}  
\def \q {\mathbf{q}}  
\def \u {\mathbf{u}}  
\def \p {\mathbf{p}}  
\def \h {\mathbf{h}}  
\def \x {\mathbf{x}} 
\def \y {\mathbf{y}}
\def \bm {\bar{\mathbf{m}}}  
\def \bv {\bar{\mathbf{v}}}  
\def \bv {\bar{v}}  
\def \hv {\hat{v}}  
\def \wb {\bar{\mathbf{w}}} 
\def \hw {\hat{\mathbf{w}}} 
\def \hs {\hat{s}} 
\def \bs {\bar{s}} 
\def \hrho {\hat{\rho}}
\def \z {\mathbf{z}}  
\def \E{\mathbb{E}}   
\def \D {\mathcal{D}}  
\def \I {\mathbb{I}}
\title{Communication-Efficient Federated Group Distributionally Robust Optimization}
\author{%
 Zhishuai Guo, Tianbao Yang\thanks{Corresponding Author}\\
  Department of Computer Science and Engineering\\
  Texas A\&M University \\
  \texttt{zhishguo@tamu.edu,tianbao-yang@tamu.edu} \\
}
\begin{document}

\maketitle

\begin{abstract} 
Federated learning faces challenges due to the heterogeneity in data volumes and distributions at different clients, which can compromise model generalization ability to various distributions. 
Existing approaches to address this issue based on group distributionally robust optimization (GDRO) often lead to high communication and sample complexity.
To this end, this work introduces algorithms tailored for communication-efficient Federated Group Distributionally Robust Optimization (FGDRO). Our contributions are threefold: Firstly, we introduce the FGDRO-CVaR algorithm, which optimizes the average top-K losses while reducing communication complexity to $O(1/\epsilon^4)$, where $\epsilon$ denotes the desired precision level. Secondly, our FGDRO-KL algorithm is crafted to optimize KL regularized FGDRO, cutting communication complexity to $O(1/\epsilon^3)$. Lastly, we propose FGDRO-KL-Adam to to utilize Adam-type local updates in FGDRO-KL, which not only maintains a communication cost of $O(1/\epsilon^3)$ but also shows potential to surpass SGD-type local steps in practical applications.
The effectiveness of our algorithms has been demonstrated on a variety of real-world tasks, including natural language processing and computer vision. 
\end{abstract} 

\section{Introduction}  
Federated learning enables effective model training without the need to share raw data \citep{konevcny2016federated,mcmahan2017communication}. It is essential in contexts where data privacy and ownership are paramount, such as in inter-hospital collaborations \cite{pati2022federated} and mobile device networks \cite{hard2018federated}.
However, clients often have data of varying volumes and distinct distributions, which poses notable challenges in maintaining generalization behavior \citep{mohri2019agnostic,huang2023federated}. Generalization here refers to the model's ability to perform consistently across different clients, including those that have not participated in the training \cite{hu2023generalization,yuan2021we}.

In this study, we tackle the issue using federated group distributionally robust optimization (FGDRO), formulated as follows:
\begin{equation} 
    \min\limits_{\w} F(\w) := \max_{\p \in \Delta_N}  \sum\limits_{i=1}^N \p_i \ell_i(\w) - \lambda \phi(\p).  
    \label{form:general}   
\end{equation} 
Here, $\w$ denotes a machine learning model, and $N$ represents the number of clients. For each client $i$, $\D_i$ represents its local data distribution, and $\ell_i(\w) = \E_{\z\sim \D_i} \ell(\w;\z)$ represents the loss calculated from that local distribution. $\Delta_N$ denotes a $N$-dimensional simplex, which constrains $\sum_i p_i = 1$. 
The vector $\p = [\p_1, ..., \p_N]$ comprises the weights assigned to each of the $N$ clients. The function $\phi(\p)$ acts as a regularization term, with $\lambda>0$ being an adjustable parameter. This framework aims to assign higher weights to machines with greater losses while discouraging substantial deviations of these weights from a specified distribution. 


Our study concentrates on two particular forms of the regularization term $\phi$, which are well-established regularization techniques \cite{deng2020distributionally,lan2023optimal}, each suited to different tasks and data distributions. Specifically, CVaR is defined as $\phi(\p) = \I_{[0, 1/K]}(\p)$. In this scenario, $\phi(\p)$ is set to 0 if each weight $\p_i$ falls within the range of $[0, 1/K]$, and is infinite otherwise.  FGDRO-CVaR focuses on optimizing for worst-case scenarios or the average of the worst-case losses, making it particularly effective in high-stakes applications like healthcare and finance, where avoiding extreme losses is crucial. However, it can be sensitive to outliers or malicious client attacks. FGDRO-KL, on the other hand, uses Kullback-Leibler (KL) divergence,expressed as $\phi(\p) = \sum\nolimits_{i=1}^N \p_i \log(N \p_i)$. This version of $\phi$ penalizes deviations of the weight distribution $\p$ from a uniform distribution. Fundamentally, when $\phi(\p)$ is strongly convex, as in the case of KL divergence, $F(\w)$ can enjoy a smoothness property, while non-strongly convex $\phi(\p)$ would result in non-smooth $F(\w)$ \citep{boyd2004convex}. In contrast to CVaR, KL is a softer regularizer to promote smoother and more stable learning. Thus, it can be beneficial in scenarios where robustness to outliers or malicious clients is needed. FGDRO-KL-Adam further enhances FGDRO-KL by incorporating Adam-type updates.


\begin{table*}[htbp]
  \centering
   \caption{Comparison of communication cost and sample complexity on each machine to achieve $\epsilon$-stationary point or near to $\epsilon$-stationary point, where $\epsilon$-stationary point has a (sub-)gradient $\|\partial F(\w)\|^2 \leq \epsilon^2$. NDP-SONT denotes the naive deployment of the SONT algorithm \citep{hu2023non} in a federated environment, communicating in all iterations.  }   
    \begin{tabular}{c|c|c|c|c} 
    \toprule 
     & \multicolumn{2}{|c|}{FGDRO with a CVaR constraint} & \multicolumn{2}{|c}{FGDRO with a KL regularization} \\ 
     \cline{2-5} 
         &  Communication & Sample  &Communication & Sample      \\
         &  Complexity & Complexity &  Complexity & Complexity     \\ 
         \hline
         DRFA& \multirow{2}{*}{$O\left(\frac{1}{\epsilon^{12}}\right)$} & \multirow{2}{*}{$O\left(\frac{1}{\epsilon^{16}}\right)$} & \multirow{2}{*}{$O\left(\frac{1}{\epsilon^{12}}\right)$} & \multirow{2}{*}{$O\left(\frac{1}{\epsilon^{16}}\right)$}  \\ 
         \cite{deng2020distributionally} & & & \\  
         \hline 
         DR-DSGD &\multirow{2}{*}{--} & \multirow{2}{*}{--} & \multirow{2}{*}{$O(\frac{1}{\epsilon^3})$} &  \multirow{2}{*}{$O(\frac{1}{\epsilon^6})$} \\ 
         \citep{issaid2022dr} &&&&\\ 
        \hline 
        NDP-SONT & \multirow{2}{*}{$O\left(\frac{1}{\epsilon^{6}}\right)$} & \multirow{2}{*}{$O\left(\frac{1}{\epsilon^{6}}\right)$} & \multirow{2}{*}{$O\left(\frac{1}{\epsilon^{6}}\right)$} & \multirow{2}{*}{$O\left(\frac{1}{\epsilon^{6}}\right)$} \\ 
         \citep{hu2023non}& & && \\
        \hline
        \multirow{2}{*}{This Work}  & \multirow{2}{*}{$O\left(\frac{1}{\epsilon^4}\right)$} &   \multirow{2}{*}{$O\left(\frac{1}{\epsilon^8}\right)$} & \multirow{2}{*}{$O\left(\frac{1}{\epsilon^3}\right)$}  &  \multirow{2}{*}{$O\left(\frac{1}{\epsilon^4}\right)$} \\ 
        &&&&\\  
    \bottomrule  
    \end{tabular}%
    \label{tab:summary_convergence}
\end{table*}%

Previous research addressing these optimization problems in federated learning has struggled with high communication and sample complexity issues, which are basically due to inefficient updates of $\p$ on local machines.
For example, \citet{deng2020distributionally} examined a particular case of the general formula~(\ref{form:general}) using a CVaR constraint with $K=1$, and developed an algorithm for KL regularization as well. They update $\p$ only in global communication rounds, while local steps only optimize the local loss function using stochastic gradient descent (SGD). To achieve a $\epsilon$-stationary point or a point near to an $\epsilon$-stationary point, where an $\epsilon$-stationary point has a (sub)gradient $\|\partial F(\w)\|^2 \leq \epsilon^2$, their methods required a communication cost of $O(1/\epsilon^{12})$ and a sample complexity of $O(1/\epsilon^{16})$ on each client.  
For FGDRO with KL regularization, \cite{issaid2022dr} 
achieves a communication cost of $O(1/\epsilon^3)$, but it requires the use of large data batches in local update steps in order to get good approximation for the surrogate of $\p$, resulting in a total sample complexity of $O(1/\epsilon^6)$ per machine.

To overcome these limitations, this paper presents specialized algorithms FGDRO-CVaR and FGDRO-KL for FGDRO with CVaR constraint and KL regularization, respectively. 
Instead of dealing with the constrained primal-dual formulation in (\ref{form:general}), we consider their equivalent forms with a compositional structure that get rid of the high-dimensional constrained variable $\p$.
We summarize the complexity results in Table \ref{tab:summary_convergence}.  

\textbf{For FGDRO with CVaR constraint}, we are the first to consider a constraint-free equivalent form and develop a communication-efficient algorithm for it, significantly reducing communication costs, as shown in Table 1. In addition to sharing machine learning models, we only introduce an additional scalar threshold to select participants in each round, minimizing additional costs. 
The equivalent compositional form is a non-smooth two-level compositional function with one auxiliary variable $s$, which works as a threshold. Only machines whose local losses are greater than $s$ are supposed to contribute to updating the model. In this way, we can simply update the constraint-free scalar variable $s$ locally in each client and average $s$ in communication rounds. However, we do face challenges with non-smooth compositional optimization problems.
Our first algorithm FGDRO-CVaR effectively addresses this issues and achieves a communication cost of $O(1/\epsilon^4)$ and a sample complexity of $O(1/\epsilon^8)$ on each machine.


\textbf{For FGDRO with KL regularization}, while previous literature has explored constraint-free compositional reformulations, they often require large batch sizes on each machine to estimate gradients, making this approach impractical and leading to high sample complexity.  In contrast, we utilize moving averages that can work with smaller data batches while still providing accurate gradient estimates, enhancing the efficiency of our method.
The equivalent compositional form we consider is a smooth three-level compositional function. In this case, the weights for the clients depend on both local loss functions and global loss functions. We use moving average estimators of these statistics and update the estimators locally. In communication rounds, in addition to averaging the model $\w$, the machines will average the estimator of the global loss function. 
We have reduced the communication cost and the computation cost compared to the literature as presented in Table \ref{tab:summary_convergence}. 

To further enhance our approach, we have developed \textbf{an adaptive algorithm for solving FGDRO with KL regularization}, named FGDRO-KL-Adam. Stochastic adaptive methods apply variable step sizes for each coordinate based on historical gradient information, often yielding better results than non-adaptive techniques, as evidenced by a wealth of research \citep{duchi2011adaptive,adam,mcmahan2010adaptive,ada18bottou}. 
In federated learning, while \citet{reddi2020adaptive} have developed a federated adaptive algorithm and shown its effectiveness in various tasks. However, it limits adaptive steps to global updates on the server, with local updates relying on standard SGD, which may lead to suboptimal results. Moreover, their method is primarily designed for Empirical Risk Minimization (ERM) and is not applicable to address compositional optimization problems. 
Our FGDRO-KL-Adam allows local updates to use Adam-type updates, which introduces the challenge of handling unbiased gradients, further complicated by the use and updating of the second-order moment. To this end, we update the first-order momentum and second-order momentum locally and then average them globally during communication rounds. Moreover, our analysis carefully manages the moving estimates of the first and second-order moments, ensuring that the solution provably converges. 

Our FGDRO-KL-Adam enables local updates with Adam-type methods, which raises the challenge of maintaining unbiased gradients, especially with the adjustment of the second-order moment. Our analysis meticulously handles the moving estimates of both first and second-order moments to guarantee provable convergence. The first-order momentum and second-order momentum are updated locally and then averaged during communication rounds.

In summary, our paper contributes in three main areas. First, our FGDRO-CVaR algorithm greatly reduces both communication costs and sample complexity for FGDRO with CvaR constraint problems. Second, our FGDRO-KL algorithm achieves a better sample complexity while maintaining the same communication costs as the existing results. Third, our FGDRO-KL-Adam integrates adaptive step sizes with Adam-type updates, which has the potential to surpass the performance of conventional SGD-based approaches.
Extensive testing on diverse real-world datasets has shown that our approach achieves superior performance while substantially reducing communication overhead. 

\vspace{-0.1in}
\section{Related Work}
Federated learning has gained significant attention due to its potential to train machine learning models using data from various sources while ensuring data privacy \citep{konevcny2016federated,mcmahan2017communication,pati2022federated}. Two central challenges to this field are communication cost and client heterogeneity, which have been extensively explored in the literature \citep{stich2018local,yu2019linear,yu2019parallel,yang2013trading,smith2018cocoa,karimireddy2020scaffold,stich2018sparsified,basu2019qsparse,jiang2018linear,wangni2018gradient,bernstein2018signsgd,kairouz2021advances,khaled2020tighter,woodworth2020minibatch,woodworth2020local,karimireddy2020scaffold,haddadpour2019local}. 
This section will dive into the body of literature that focuses on these specific challenges.

\paragraph{Non-IID Clients in Federated Learning (FL)} 

One of the key challenges in Federated Learning (FL) is managing client heterogeneity, particularly the issue of non-IID (nonindependently and identically distributed) data across client networks. 
Efforts to overcome the negative implications of data diversity have led to the development of model personalization techniques \citep{mansour2020three,deng2020adaptive,liang2020think,zhang2020personalized,long2023multi,li2022soteriafl,yi2023explicit,hanzely2020lower,li2021ditto}. However, these approaches face challenges when dealing with data from unseen or unidentifiable groups. For a comprehensive examination of the challenges and strategies concerning non-IID clients in federated learning, the readers are directed to \citep{huang2023federated}.  


\paragraph{Federated Group Distributionally Robust Optimization}  
Since Group Distributionally Robust Optimization has shown effectivenss in addressing non-iid data in centralized setting \citep{fan2017learning,namkoong2016stochastic,duchi2019variance,qi2021online}, previous research has investigated Federated Group Distributionally Robust Optimization (FGDRO) to address the challenges posed by non-IID clients in federated settings \cite{mohri2019agnostic,deng2020distributionally}. The DRFA algorithm \citep{deng2020distributionally} focuses on a specific instance of (\ref{form:cvar}), applying a CVaR constraint on $\p$ with $K=1$. It samples machines based on updated probabilities to allow local updates, reducing the need for communication, with these probabilities managed by a central server. However, this approach results in significant communication costs of $O(1/\epsilon^{12})$ and sample complexity of $O(1/\epsilon^{16})$ per machine to achieve an $\epsilon$-stationary point. Recent developments in \cite{hu2023non} introduced algorithms for handling Group DRO with a CVaR constraint in centralized settings, but adapting these to federated learning entails substantial communication overheads of $O(1/\epsilon^6)$. 
Moreover, \citep{yu2023scaff} introduced the SCAFF-PD algorithm, which is only applicable in convex scenarios and requires the use of the complete dataset in each training round.  
Regarding KL regularization, \cite{issaid2022dr} achieved a communication cost of $O(1/\epsilon^3)$, but required the use of large data batches, resulting in a total sample complexity of $O(1/\epsilon^6)$ per machine. \cite{zecchin2022communication} has studied FGDRO with KL regularization in a decentralized setting, which would incur a communication cost of $O(1/\epsilon^4)$ if directly applied to a centralized federated learning setting. 



\paragraph{Federated Adaptive Algorithm}  
Stochastic adaptive methods for minimization in non-convex stochastic optimization have garnered significant interest in recent years~\citep{duchi2011adaptive,adam,mcmahan2010adaptive,ada18bottou,ada18orabona,adagradmom18,tieleman2012lecture,chen2018closing,luo2018adaptive,huang2021super,guo2021novel,zhang2019adam}. These methods, known for assigning unique step sizes to each coordinate, often outperform their non-adaptive counterparts.
In federated learning, \citet{reddi2020adaptive} have advanced the field with an adaptive algorithm. However, their methodology predominantly applies adaptive steps at the global server level, with local updates still dependent on SGD. This could lead to suboptimal performance. Furthermore, their approach was tailored for Empirical Risk Minimization (ERM) problems and could not be applied for problems considered in this work. 

\section{Preliminaries}
\label{sec:prelim}
A function $f$ is $C$-Lipschitz if $f(\x) - f(\y) \leq C\|\x-\y\|$. A differentiable function $f$ is $L$-smooth if $\|\nabla f(\x) - \nabla f(\y)\| \leq L\|\x - \y\|$, where $\nabla f(\x)$ denotes the gradient. For a non-differentiable function $f$, its subdifferential $\partial f(\x)$ is defined as a set of all subgradients as $\partial f(\x) = \{\mathbf{v} | f(\y) \geq f(\x) + \langle \mathbf{v}, \y-\x\rangle + o(\|\y-\x\|)  \}$ as $\y \rightarrow \x$. When the context is clear, we also overload the notation $\partial f(\x)$ to denote one subgradient from the subdifferential set. We use $\nabla f(\x; \z)$ or $\partial f(\x; \z)$ to represent an unbiased estimator of gradient or subgradient with a randomly drawn sample $\z$. 
Additionally, a function $f$ is $\rho$-weakly convex if $f(\x) \geq f(\y) + \langle \partial f(\y), \y-\x \rangle - \frac{\rho}{2} \|\y-\x\|^2$. 

For a smooth function $f(\x)$, $\x$ is an $\epsilon$-stationary point if $\|\nabla f(\x)\|^2 \leq \epsilon^2$. 
For non-smooth functions, $\x$ is an $\epsilon$-stationary point if $\|dist(\mathbf{0}, \partial f(\x))\|^2 \leq \epsilon^2$, where $dist(\x, S) = \min_{\x'\in S} \|\x-\x'\|_2$ measures the distance between a point $\x$ and a set $S$. 
For non-smooth functions, since it is usually difficult or even impossible to find an $\epsilon$-stationary point, we instead seek an $\epsilon$-near stationary point.
\begin{definition}  
    $\x$ is an $\epsilon$-near stationary point of $f(\cdot)$ if $\exists \x'$ such that $\|\x-\x'\|_2 \leq \epsilon$ and $dist(\mathbf{0}, \partial f(\x')) \leq \epsilon$. 
\end{definition} 

\section{FGDRO-CVaR} 
\label{sec:cvar} 
In this section, we present our algorithm designed to tackle Federated Group Distributionally Robust Optimization (FGDRO) with a CVaR constraint. This problem poses substantial challenges due to the complexity of both CVaR and simplex constraints. Typically, during local updates, individual machines do not have access to adequate information to appropriately adjust the weight vector $\p$. Prior approaches, such as the one proposed by \citep{deng2020distributionally}, mitigate this issue by updating $\p$ during global communication rounds, but results in slower convergence rates. To this end, we reformulate the problem into an equivalent two-level compositional optimization problem without constraints: 
\vspace{-0.01in}   
\begin{equation} 
\begin{split} 
\min\limits_{\w}\min\limits_s F(\w,s):=\frac{1}{N}\sum\limits_{i=1}^{N} f(g_i(\w) - s)  + \frac{K}{N} s.
\end{split} 
\label{form:cvar}  
\end{equation}
where $f(\cdot) = (\cdot)_+$ and $g_i(\w) = \E_{\z\sim \D_i} \ell(\w, \z)$ is a local loss
and $s$ is intended to serve as a threshold value. With $s = \arg\min_{s'} \frac{1}{N}\sum\nolimits_{i=1}^{N} f(g_i(\w) - s')  + \frac{K}{N} s'$, only the $K$ clients with the highest losses will have losses greater than $s$ \citep{ogryczak2003minimizing,zhu2022auc}. During the training phase, $K$ clients with the highest losses are expected to predominantly influence the optimization process. 


The formulation (\ref{form:cvar}) replaces the constrained high-dimensional vector $\p$ with a single unconstrained scalar variable $s$. 
However, this adjustment introduces new challenges due to the compositional structure and the non-smooth nature of the outer function $f$.  
As a result, it is biased to estimate subgradient $\partial f(g_i(\w) - s) \nabla g_i(\w)$ using a batch of samples. 
To address this, it is common practice to create an accurate estimate of $g_i(\w)$ 
\cite{DBLP:journals/mp/WangFL17,hu2020biased,hu2023non,wang2022finite,jiang2022multi,qistochastic,qi2022attentional}. Specifically, we employ a moving average $\u$ as our accurate estimate:
\begin{equation}
u^r_{i,t} = (1-\beta_1)u^r_{i,t-1} + \beta_1 \ell(\w; \z_{i,t}^r).
\end{equation}

And then the estimators for sub-gradient of $\w$ and $s$, namely $\m, v$ are computed using $u$.
It is notable that for $s$, it is updated locally using local data and averaged between clients in communication rounds. It will converge alongside $\w$ to an $\epsilon$-near stationaty point. This is a fundamental reason why our method achieves a lower communication complexity compared to \cite{deng2020distributionally}, as the latter can only update the weight variables at the server node in the communication rounds. 

\begin{algorithm}[htbp] 
\caption {FGDRO-CVaR}  
\begin{algorithmic}[1]
\STATE{Initialization: $\bar{\w}^{1}$, $\bar{s}^1=0$, $u^0_{i,t}=0$,}  
\FOR{$r=1,...,R$} 
\STATE{$\w^r_{i, 0} = \bar{\w}^{r}$, $s^r_{i, 0} = \bar{s}^{r}$, $u^r_{i, 0} = u^{r-1}_{0,I}$} 
\FOR{$t=1,...,I$} 
\STATE{Each machine samples data $\z_{i,t}^r$} 
\STATE{$u^r_{i,t} = (1-\beta_1)u^r_{i,t-1} + \beta_1 \ell(\w^r_{i,t-1}; \z_{i,t}^r)$}  
\STATE{$v^r_{i,t} = -\partial f(u^r_{i,t} - s^r_{i,t-1}) + \frac{K}{N}$, and $s^r_{i,t} = s^r_{i,t-1} - \eta_2 v^r_{i,t}$} 
\STATE{$\m^r_{i,t} = \partial f(u^r_{i,t} - s^r_{i,t-1}) \nabla\ell(\w^r_{i,t-1}, \z^r_{i,t})$, and $\w^r_{i,t} = \w^r_{i,t-1} - \eta_1 \m^r_{i,t}$}   
\ENDFOR 
\STATE{$\bar{\w}^{r+1} = \frac{1}{N} \sum\limits_{i=1}^N \w^r_{i,I}$, $\bar{s}^{r+1} = \frac{1}{N} \sum\limits_{i=1}^N s^r_{i,I}$} 
\ENDFOR
\STATE{Output: $\tilde{\w} = \frac{1}{N} \sum\limits_{i=1}^{N} \w^{r'}_{i,t'}$, where $r'$ and $t'$ are sampled from $[1,R]$ and $[1, I]$, respectively. }
\end{algorithmic}  
\label{alg:cvar}  
\end{algorithm} 


We present the formalization of our FGDRO-CVaR method in Algorithm \ref{alg:cvar}.
Next, we show the convergence results of FGDRO-CVaR . We make the following assumptions regarding problem (\ref{form:cvar}). 
\begin{assumption}
(1) $\forall i$ and $\forall \z \in D_i$, $\ell(\cdot, \z)$ is $C_g$-Lipschitz and $L_g$-smooth.
(2) $\E_{\z \in \D_i}\|\nabla \ell(\w; \z)  - \nabla g_i(\w)\|^2 \leq \sigma^2$, $\E_{\z \in \D_i}\| \ell(\w; \z)  - g_i(\w)\|^2 \leq \sigma^2$. 
    \label{assumption:cvar} 
\end{assumption}
\textbf{Remark.} The first assumption about Lipschitz continuity and smoothness of $g_i$ is standard in compositional optimization \citep{hu2023non,wang2022finite,DBLP:journals/mp/WangFL17}. The second assumption of bounded variance is also common. 
Assumption \ref{assumption:cvar} leads to $F(\w,s)$ being weakly convex, which is a key step in the analysis as shown in Appendix \ref{app:proof_thm4.3}

The behavior of the estimator $u$ is examined through the following lemma. 
\begin{lemma}
\vspace{-0.05in}   
    Under Assumption \ref{assumption:cvar}, by setting $\eta = O\left(1/(R)^{3/2}\right)$, $\beta_1=O\left(1/R\right)$, $I=O(R)$, Algorithm \ref{alg:cvar} ensures that 
\begin{equation*}
\begin{split}
&\E\|u^r_{i,t} - g_i(\bw^r_{t})\|^2 \leq (1-\beta_1) \E\|u^r_{i,t-1} - g_i(\bw^r_{t-1})\|^2 + 2\beta_1^2 \sigma^2 +  3\beta_2 \eta^2 I^2 C_g^2 
 + \frac{5}{ \beta_1} C_g^2 \|\bw^r_{t} - \bw^r_{t-1}\|^2. 
\end{split}
\end{equation*} 
\label{lem:cvar_u} 
\vspace{-0.2in}  
\end{lemma}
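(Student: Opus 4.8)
The plan is to derive the stated one-step recursion by tracking how the moving average $u^r_{i,t}$ chases the moving target $g_i(\bw^r_t)$, where $\bw^r_t := \frac{1}{N}\sum_{i=1}^N \w^r_{i,t}$ denotes the (virtual) averaged model and $\mathcal{F}^r_{t-1}$ is the $\sigma$-field generated by everything up to, but not including, the fresh samples $\z^r_{\cdot,t}$ drawn at inner step $t$. First I would insert $g_i(\bw^r_{t-1})$ using the convex combination $(1-\beta_1)+\beta_1=1$ and split the one-step error into four pieces:
\begin{align*}
u^r_{i,t}-g_i(\bw^r_t) &= \underbrace{(1-\beta_1)\bigl(u^r_{i,t-1}-g_i(\bw^r_{t-1})\bigr)}_{A} + \underbrace{\beta_1\bigl(\ell(\w^r_{i,t-1};\z^r_{i,t})-g_i(\w^r_{i,t-1})\bigr)}_{B} \\
&\quad + \underbrace{\beta_1\bigl(g_i(\w^r_{i,t-1})-g_i(\bw^r_{t-1})\bigr)}_{C} + \underbrace{\bigl(g_i(\bw^r_{t-1})-g_i(\bw^r_t)\bigr)}_{D}.
\end{align*}
Here $A$ is the contraction term carrying the previous error, $B$ is the stochastic noise which is conditionally mean-zero ($\E[B\mid\mathcal{F}^r_{t-1}]=0$), $C$ measures the local-vs-average client drift, and $D$ accounts for the movement of the averaged model during the step.

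Next I would expand the squared norm. The point that keeps the noise contribution at order $\beta_1^2$ rather than $\beta_1$ is that $A$ and $C$ are $\mathcal{F}^r_{t-1}$-measurable, so the cross term $\E\langle A+C,B\rangle$ vanishes by the tower rule, and $\E\|B\|^2\le\beta_1^2\sigma^2$ follows directly from the bounded-variance part of Assumption \ref{assumption:cvar}. The main obstacle is the term $D$: because $\bw^r_t$ is formed from the fresh samples $\z^r_{\cdot,t}$, it is \emph{not} $\mathcal{F}^r_{t-1}$-measurable, so $D$ is correlated with the noise $B$ and cannot be decoupled by a mean-zero argument. I would therefore peel $D$ off by Young's inequality, $\|A+B+C+D\|^2\le(1+\gamma)\|A+B+C\|^2+(1+1/\gamma)\|D\|^2$, and this is precisely where the $1/\beta_1$ weight in the last term originates: choosing $\gamma=\Theta(\beta_1)$ gives $(1+1/\gamma)\le 5/\beta_1$ while keeping $(1+\gamma)\le 2$, so that the noise stays at $2\beta_1^2\sigma^2$.

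Then I would bound the remaining pieces using Lipschitzness. Since $\ell(\cdot,\z)$ is $C_g$-Lipschitz, so is $g_i$, hence $\|C\|\le\beta_1 C_g\|\w^r_{i,t-1}-\bw^r_{t-1}\|$ and $\|D\|\le C_g\|\bw^r_t-\bw^r_{t-1}\|$; the latter already produces the claimed $\frac{5}{\beta_1}C_g^2\|\bw^r_t-\bw^r_{t-1}\|^2$ term. For $C$ I need a client-drift estimate: within round $r$ every client starts from the same $\bw^r$, and each local update moves the iterate by $\eta\|\m^r_{i,t}\|\le\eta C_g$, because $\partial f=\partial(\cdot)_+\in[0,1]$ together with $\|\nabla\ell\|\le C_g$ gives $\|\m^r_{i,t}\|\le C_g$; telescoping over at most $I$ steps yields $\|\w^r_{i,t-1}-\bw^r_{t-1}\|^2\le 4\eta^2I^2C_g^2$, which feeds the drift term matching $\beta_2\eta^2I^2C_g^2$ (with $\beta_2$ of order $\beta_1$ up to constants). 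Finally I would apply a second Young's inequality inside $\|A+C\|^2$ with weight $\Theta(\beta_1)$ to turn $(1-\beta_1)^2\|u^r_{i,t-1}-g_i(\bw^r_{t-1})\|^2$ into the stated $(1-\beta_1)\|u^r_{i,t-1}-g_i(\bw^r_{t-1})\|^2$, using $(1+\Theta(\beta_1))(1-\beta_1)^2\le(1-\beta_1)$, at the cost of slightly inflating the drift term. Collecting $A$, $B$, $C$, $D$ under $\E[\cdot]$ then yields the four terms of the lemma; the constants $2,3,5$ are simply the outcome of the Young parameter choices, and the schedule $\eta=O(R^{-3/2})$, $\beta_1=O(1/R)$, $I=O(R)$ is what later makes the residual $O(\beta_1)$ factors and the drift contribution small enough to be summable.
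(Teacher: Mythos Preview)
Your proposal is correct and follows essentially the same route as the paper: peel off the moving-target term $D$ via Young's inequality with parameter $\Theta(\beta_1)$, exploit the conditional mean-zero property of the noise to kill the cross term, apply a second Young's inequality to separate the contraction piece $A$ from the client-drift piece $C$, and bound the drift using $\|\m^r_{i,\tau}\|\le C_g$ together with telescoping over at most $I$ local steps. The only cosmetic difference is that the paper centers the noise at the averaged iterate $\bw^r_{t-1}$ (writing the drift as the stochastic $\beta_1(\ell(\w^r_{i,t-1};\z^r_{i,t})-\ell(\bw^r_{t-1};\z^r_{i,t}))$) whereas you center it at the local iterate $\w^r_{i,t-1}$; both choices give the same bound.
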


The convergence result of FGDRO-CVaR is given in the following theorem. 
\begin{theorem}
Under Assumption \ref{assumption:cvar}, by setting $\eta = O\left(1/(R)^{3/2}\right)$, $\beta_1=O\left(1/R\right)$, $I=O(R)$ and $\hrho = 2L_g$, the Algorithm \ref{alg:cvar} ensures that for the output $(\tilde{\w}, \tilde{s})$, there exists $(\w', s')$ that
\begin{equation}
\|dist(\mathbf{0}, F(\w',s'))\|^2 \leq 1/\hrho (\|\tilde{\w}-\w'\|_2^2 + \|\tilde{s}-s'\|_2^2) \leq O\left(\frac{1}{R^{1/2}}\right). 
\end{equation}
\label{thm:cvar}
\vspace{-0.1in} 
\end{theorem}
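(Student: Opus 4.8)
The plan is to follow the Moreau-envelope framework for weakly convex optimization \citep{davis2019stochastic}, adapted to the federated compositional setting. Since $f(\cdot)=(\cdot)_+$ is convex and $1$-Lipschitz and each $g_i$ is $L_g$-smooth, the composition $f(g_i(\w)-s)$ is $L_g$-weakly convex, so $F(\w,s)$ is $\rho$-weakly convex with $\rho=L_g$; this is the ingredient flagged in the remark after Assumption~\ref{assumption:cvar}, and it justifies taking $\hrho=2\rho=2L_g$. With this choice the envelope $\psi_{1/\hrho}(\w,s)=\min_{\w',s'}[F(\w',s')+\frac{\hrho}{2}(\|\w'-\w\|^2+\|s'-s\|^2)]$ is smooth, its proximal map $(\hw,\hs)$ is single valued, $\nabla\psi_{1/\hrho}(\w,s)=\hrho((\w,s)-(\hw,\hs))$, and $dist(\mathbf{0},\partial F(\hw,\hs))\le\|\nabla\psi_{1/\hrho}(\w,s)\|$. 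Consequently, bounding $\E\|\nabla\psi_{1/\hrho}\|^2$ at the averaged iterates and then invoking these envelope identities gives the stated near-stationarity conclusion, with $(\w',s')$ chosen as the proximal point of the sampled iterate $(\tilde\w,\tilde s)$.

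The core of the argument is a one-step recursion on the envelope value at the virtual averaged iterates $\bw^r_t=\frac1N\sum_i\w^r_{i,t}$ and $\bs^r_t=\frac1N\sum_i s^r_{i,t}$. Starting from $\psi_{1/\hrho}(\bw^r_t,\bs^r_t)\le F(\hw^r_{t-1},\hs^r_{t-1})+\frac{\hrho}{2}(\|\bw^r_t-\hw^r_{t-1}\|^2+\|\bs^r_t-\hs^r_{t-1}\|^2)$ and expanding with the averaged updates $\bw^r_t=\bw^r_{t-1}-\eta_1\bm^r_t$ and $\bs^r_t=\bs^r_{t-1}-\eta_2\bar v^r_t$, the cross terms produce inner products $-\eta_1\langle\bm^r_t,\bw^r_{t-1}-\hw^r_{t-1}\rangle$ (and analogously for $s$) that I convert, through the weak convexity of $F$ and proximal optimality, into a descent of order $-\eta\,\|\nabla\psi_{1/\hrho}\|^2$ plus error. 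Because $\m^r_{i,t}=\partial f(u^r_{i,t}-s^r_{i,t-1})\nabla\ell(\w^r_{i,t-1},\z^r_{i,t})$ is a biased subgradient estimate, I decompose its deviation from $\partial_\w F$ into three sources and bound each in expectation: (i) the inner estimation error $\E\|u^r_{i,t}-g_i(\bw^r_t)\|^2$, controlled by telescoping the recursion of Lemma~\ref{lem:cvar_u}; (ii) the client drift $\E\|\w^r_{i,t}-\bw^r_t\|^2$ and $\E\|s^r_{i,t}-\bs^r_t\|^2$, which are $O(\eta^2 I^2 C_g^2)$ since per-step increments are bounded by $C_g$ and $\partial f\in[0,1]$; and (iii) the stochastic variance $\sigma^2$ from Assumption~\ref{assumption:cvar}(2). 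The $1$-Lipschitzness of $f$ is what lets me control the mismatch between evaluating $\partial f$ at $u-s$ versus $g_i-s$ through $\|u-g_i\|$.

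With these bounds in hand I telescope the one-step recursion over $t=1,\dots,I$ within each round and over $r=1,\dots,R$ across rounds, noting that the reinitializations $\bw^{r+1}=\frac1N\sum_i\w^r_{i,I}$, $\bs^{r+1}=\frac1N\sum_i s^r_{i,I}$ and the carry-over of the estimator $u$ keep the potential consistent across communication boundaries so that Lemma~\ref{lem:cvar_u} telescopes globally. Dividing by $RI$ yields
\begin{equation*}
\frac{1}{RI}\sum_{r,t}\E\|\nabla\psi_{1/\hrho}(\bw^r_t,\bs^r_t)\|^2 \le \frac{\hrho\bigl(\psi_{1/\hrho}(\bw^1,\bs^1)-F_*\bigr)}{\eta R I}+ E(\eta,\beta_1,I,\sigma^2),
\end{equation*}
where $F_*=\inf F$ (finite by the CVaR interpretation) and $E(\cdot)$ collects the telescoped estimation-error, drift, and variance terms. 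Substituting $\eta=O(R^{-3/2})$, $\beta_1=O(R^{-1})$, $I=O(R)$ makes the optimization term $O(1/(\eta R I))=O(R^{-1/2})$, and a direct check shows each contribution in $E(\cdot)$ is also $O(R^{-1/2})$ under the same schedule. Since $(\tilde\w,\tilde s)$ is sampled uniformly from $\{(\bw^r_t,\bs^r_t)\}$, its expected squared envelope gradient equals this average, and translating back through the Moreau-envelope identities gives the claimed bound.

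The main obstacle I anticipate is simultaneously controlling the compositional bias and the federated drift of the locally updated threshold $s$. Unlike the model $\w$, the scalar $s$ is averaged only at communication rounds, so its drift feeds directly into the argument of the non-smooth $f$ and hence into the bias of both $\m$ and $v$; moreover $u^r_{i,t}$ and $\nabla\ell(\w^r_{i,t-1},\z^r_{i,t})$ share the fresh sample $\z^r_{i,t}$, producing a correlation-induced bias that the small $\beta_1$ is designed to suppress. Coupling Lemma~\ref{lem:cvar_u} (which tracks $\|u-g_i\|$ along the moving average) with the drift bounds for $\w$ and $s$, and verifying that $\eta=O(R^{-3/2})$, $\beta_1=O(R^{-1})$, $I=O(R)$ keeps all three error sources at the common $O(R^{-1/2})$ order, is the delicate, rate-determining step.
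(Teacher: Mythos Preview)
Your overall scaffolding—Moreau envelope, one-step recursion on $\psi_{1/\hrho}(\bw^r_t,\bs^r_t)$, telescoping, and the parameter balancing—matches the paper's argument. The gap is in how you plan to treat the cross term $\langle \bm^r_t,\bw^r_{t-1}-\hw^r_{t-1}\rangle$. You propose to ``decompose its deviation from $\partial_\w F$'' and say that ``the $1$-Lipschitzness of $f$ is what lets me control the mismatch between evaluating $\partial f$ at $u-s$ versus $g_i-s$ through $\|u-g_i\|$.'' That step fails: $f=(\cdot)_+$ is $1$-Lipschitz, but $\partial f$ is the Heaviside indicator and is \emph{not} Lipschitz (nor even continuous), so $\|\partial f(u-s)-\partial f(g_i-s)\|$ cannot be controlled by $\|u-g_i\|$. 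Any decomposition of $\bm^r_t-\partial_\w F(\bw^r_{t-1},\bs^r_{t-1})$ in norm will hit this obstruction.

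The paper circumvents this by never comparing $\partial f$ at two different arguments. Instead it keeps $\partial f$ evaluated at the tracked point $(u^r_{i,t},s^r_{i,t-1})$ and uses the \emph{convexity of $f$} together with the weak convexity of $g_i$ to bound the inner product directly in terms of function values:
\[
\langle \partial_u f(u,s)\,\nabla g_i(\bw),\hw-\bw\rangle+\partial_s f(u,s)(\hs-s)
\;\le\; f(g_i(\hw),\hs)-f(u,s)-\partial_u f(u,s)\bigl(g_i(\bw)-u\bigr)+\tfrac{\rho_g}{2}\|\hw-\bw\|^2.
\]
The right-hand side is then split as $[f(g_i(\hw),\hs)-f(g_i(\bw),s)]+[f(g_i(\bw),s)-f(u,s)-\partial_u f(u,s)(g_i(\bw)-u)]$. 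The first bracket, combined with the proximal strong convexity of $F+\tfrac{\hrho}{2}\|\cdot\|^2$, produces the negative $-c\,\|\nabla\psi_{1/\hrho}\|^2$ term. The second bracket is where the $1$-Lipschitzness of $f$ (and $|\partial_u f|\le 1$) is actually used, yielding an error of order $\|g_i(\bw^r_{t-1})-u^r_{i,t}\|$—note the \emph{first power}, not the square. After telescoping Lemma~\ref{lem:cvar_u} one gets the average of $\|u-g_i\|^2$ at order $O(1/R)$ under your parameter choices, and Jensen converts this to an average $\|u-g_i\|$ of order $O(1/\sqrt{R})$, which is the rate-limiting term. Your write-up should replace the ``deviation from $\partial_\w F$'' decomposition with this convexity-based bound on the inner product; everything else in your plan then goes through.
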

\textbf{Remark.} 
The analysis has utilized Moreau envelop to address the nonsmooth issue \citep{nesterov2013gradient,davis2019stochastic}. To achieve an $\epsilon$-near stationary point of $F(\cdot)$,
we need to set $R=O(1/\epsilon^4)$ and $I = O(1/\epsilon^4)$, and thus the sample complexity on each machine is $RI = O(1/\epsilon^8)$. The total sample complexity of $O(n/\epsilon^6)$ by \citep{hu2023non} is achieved by the STORM estimator \citep{cutkosky2019momentum} which incurs additional memory and computational costs due to the requirement of computing gradients using two models at each iteration. Without the STORM estimator, \citep{hu2023non} would exhibit a total sample complexity of $O(n/\epsilon^8)$. When deployed in a federated setting, the complexity for each machine would be $O(1/\epsilon^8)$, aligning with our results and demonstrating that our approach achieves a linear speed-up in terms of number of machines. 
Additionally, although we aggregate certain scalar variables (in FGDRO-CVaR, the scalar variable s; in FGDRO-KL and FGDRO-KL-Adam to be presented later, the scalar variable v), similar to the technique in the Remark 3.1 of \citep{shen2021agnostic}, we can aggregate these variables using homomorphic encryption, ensuring that their exact values remain confidential. 

\section{FGDRO-KL}
\label{sec:kl}
In this section, we present our FGDRO-KL algorithm for solving problem (\ref{form:general}) with a KL regularization. Unlike the CVaR constraints that focus on the top K clients, KL regularization takes into account all clients, assigning them varying weights. Additionally, FGDRO with KL regularization is smooth, and strongly concave with respect to $\p$. Nevertheless, it is subject to the simplex constraint on $\p$. To address this,
we use an equivalent form derived from the KKT conditions, as referenced in \citep{qi2021online,issaid2022dr}:
\vspace{-0.05in}   
\begin{equation} 
\begin{split}    
\min\limits_{\w} F(\w) = \lambda \log(\frac{1}{N}\sum\limits_{i=1}^{N} \exp(\E_{\z\sim \D_i}\ell(\w;\z)/\lambda)). 
\end{split}
\label{form:kl_composition} 
\end{equation}  
This formulation eliminates the constrained vector $\p$, and $F(\w)$ is smooth since KL regularization is strongly concave \citep{boyd2004convex}. However, this formulation has a three-level composition structure and thus, using a batch of data in a three-level composition can result in biased gradient estimation. Furthermore, the gradients on one machine are depend on other machines. 

Specifically, we denote $g_i(\w) = \exp(\E_{\z\in \D_i} \ell(\w;\z) /\lambda )$ and $g(\w) = \frac{1}{N}\sum\limits_{i=1}^{N} g_i(\w)$, with $\ell(\w;\D_i) = \E_{\z\in \D_i} \ell(\w;\z)$,
then, the gradient of $F(\w)$ in (\ref{form:kl_composition}) is given by:
\begin{equation}
\begin{split} 
\nabla F(\w) = \frac{1}{N} \sum\limits_{i=1}^{N} \frac{g_i (\w)}{g(\w)} \nabla \ell(\w; D_i).  
\end{split}
\end{equation}

It is crucial to recognize that the gradient for machine $i$, i.e., $\nabla \ell(\w; \D_i)$, is scaled by $g_i(\w)/g(\w)$. This scaling indicates that machines experiencing larger loss functions exert more influence over the training process.  
To mitigate the biased gradient estimation, we approximate $g_i(\w)$ and $g(\w)$ based on moving average estimators $u$ and $v$.
On each machine,  $u$ serves as a moving average estimator for the local loss function $\ell(\w;\D_i)$, with $\exp(u^r_{i,t}/\lambda)$ providing a local approximation of $g_i(\w)$.  $u$ is updated and maintained locally without need for averaging during communication rounds.
$v$ estimates the global statistic $g(\w)$,
and is updated locally but averaged during global communication rounds. Subsequently, a moving average estimator of the gradient, denoted as 
$\m$ is constructed using $u$ and $v$. For specific update rules, please refer to Algorithm \ref{alg:kl}. 






For analysis, we make the following assumptions regarding problem (\ref{form:general}) with a KL regularization:
\begin{assumption} 
(1) $\forall i$ and $\forall \z \in D_i$, $g_i(\cdot)$ is $C_g$-Lipschitz and $L_g$-smooth.
(2) $\E_{\z \in \D_i}\|\nabla \ell(\w; \z)  - \nabla \ell(\w; \D_i)\|^2 \leq \sigma^2$, $\E_{\z \in \D_i}\| \ell(\w; \z)  - \ell(\w; \D_i)\|^2 \leq \sigma^2$. 
(3) $f$ is $C_f$-Lipschitz and $L_f$-smooth.  
(4) $\forall i$ and $\forall \z \in D_i$, $0\leq \ell(\cdot; \z) \leq C_0$, $\ell(\cdot)$ is $C_\ell$-Lipschitz and $L_\ell$-smooth. 
    \label{assumption:kl}  
\end{assumption}   

The behavior of the $u$ and $v$ estimators can be bounded similar to the previous section and are shown in Appendix \ref{app:sec:kl}. The estimator $\m$ for gradient can be bounded as 
\begin{lemma}
Under Assumption \ref{assumption:kl}, with proper constants $C_1$ and $G$, by setting $\eta = O\left(\frac{1}{\sqrt{RI}}\right)$, $\beta_1=O\left(\frac{1}{\sqrt{RI}}\right)$, the Algorithm \ref{alg:kl} ensures that 
\begin{equation*}
\begin{split}
&\|\bm^r_t - \nabla F(\wb^r_t)\|^2 \leq 
(1-\frac{\beta_3}{2}) \|\bm^r_{t-1} - \nabla F(\wb^r_{t-1})\|^2 
+ \beta_3 C_\ell^2 C_1^2 \frac{1}{N}\sum\limits_{i=1}^{N} \|u^r_{i,t-1} - \ell(\wb^r_{t-1}; \D_i)\|^2 \\
&  + \beta_3 C  \|\bv^r_t - g(\wb^r_t)\|^2  
+ 3\eta \| \nabla F(\wb^r_{t-1}) \|^2  + \beta_3^2 C_1^2 \frac{\sigma^2}{N} + 2 \beta_3 C_1^2 L_\ell^2 \eta^2 I^2 G^2.  \\
\end{split}
\end{equation*}
\label{lem:kl_m} 
\end{lemma}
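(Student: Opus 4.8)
The plan is to read $\bm^r_t$ as a momentum estimator of the compositional gradient and to establish a one-step contraction for the error $a_t := \bm^r_t - \nabla F(\wb^r_t)$. Writing the update in the form $\bm^r_t = (1-\beta_3)\bm^r_{t-1} + \beta_3 \hat{G}^r_t$, where $\hat{G}^r_t := \frac{1}{N}\sum_{i=1}^N \frac{\exp(u^r_{i,t}/\lambda)}{\bv^r_t}\nabla\ell(\w^r_{i,t};\z^r_{i,t})$ is the plug-in gradient built from the local estimators, I would first decompose
$$a_t = (1-\beta_3)a_{t-1} + (1-\beta_3)\big(\nabla F(\wb^r_{t-1})-\nabla F(\wb^r_t)\big) + \beta_3\big(\hat{G}^r_t - \nabla F(\wb^r_t)\big),$$
and further split the final bracket into its conditional expectation over the fresh samples (a deterministic bias) and a mean-zero noise term. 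Taking the conditional expectation kills the cross terms involving the noise, so that $\E\|a_t\|^2$ equals $\E\|d_t\|^2 + \beta_3^2\,\E\|\text{noise}\|^2$, where $d_t$ collects the previous error, the smoothness drift, and the bias.

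Second, I would apply Young's inequality $\|x+y\|^2 \le (1+\gamma)\|x\|^2 + (1+1/\gamma)\|y\|^2$ to $d_t = (1-\beta_3)a_{t-1} + (\text{drift}+\text{bias})$ with $\gamma \asymp \beta_3$, tuned so that $(1+\gamma)(1-\beta_3)^2 \le 1-\beta_3/2$ for $\beta_3$ small; this yields the leading contraction factor $(1-\beta_3/2)$ and an $O(1/\beta_3)$ blow-up multiplying the squared drift-plus-bias. The drift term is controlled by $L_F$-smoothness of $F$ and $\|\wb^r_t - \wb^r_{t-1}\| \le \eta\|\bm^r_{t-1}\|$; bounding $\|\bm^r_{t-1}\|^2 \le 2\|a_{t-1}\|^2 + 2\|\nabla F(\wb^r_{t-1})\|^2$ and using the stated scaling $\eta,\beta_3 = O(1/\sqrt{RI})$ (so that $\eta^2/\beta_3 = O(\eta)$) absorbs the $\|a_{t-1}\|^2$ part into the contraction while leaving exactly the advertised $3\eta\|\nabla F(\wb^r_{t-1})\|^2$ contribution.

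Third, the heart of the argument is bounding the bias $\E[\hat{G}^r_t\mid\cdot]-\nabla F(\wb^r_t)$, which I would split along three sources by telescoping through intermediate quantities: (i) replacing $\exp(u^r_{i,t}/\lambda)$ by $g_i(\wb^r_t)$, controlled via Lipschitzness of $\exp(\cdot/\lambda)$ on the bounded loss range (Assumption~\ref{assumption:kl}(4) gives $0\le\ell\le C_0$), producing the $C_\ell^2 C_1^2\frac{1}{N}\sum_i\|u^r_{i,t-1}-\ell(\wb^r_{t-1};\D_i)\|^2$ term; (ii) replacing $\bv^r_t$ by $g(\wb^r_t)$ in the denominator, controlled by a uniform lower bound $g\ge 1$ (since each $g_i=\exp(\ell/\lambda)\ge 1$) together with boundedness of the numerator, giving the $C\|\bv^r_t-g(\wb^r_t)\|^2$ term; and (iii) replacing the local iterates $\w^r_{i,t}$ by the average $\wb^r_t$, controlled by $L_\ell$-smoothness and a consensus bound $\|\w^r_{i,t}-\wb^r_t\|\le O(\eta I G)$ accumulated over the $I$ local steps, producing the $2\beta_3 C_1^2 L_\ell^2\eta^2 I^2 G^2$ term. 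Throughout, $C_1$ denotes the uniform bound on the plug-in weight $\exp(u/\lambda)/\bv$ and $G$ a uniform gradient/momentum bound.

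Finally, the noise term is $\beta_3^2\,\E\|\hat{G}^r_t - \E[\hat{G}^r_t\mid\cdot]\|^2$; since the per-client noises $\nabla\ell(\w^r_{i,t};\z^r_{i,t})-\nabla\ell(\w^r_{i,t};\D_i)$ are independent across the $N$ machines, each weighted by a factor bounded by $C_1$ with variance at most $\sigma^2$, averaging gives the $\beta_3^2 C_1^2\sigma^2/N$ term and the $1/N$ linear speed-up. I expect the main obstacle to lie in steps (ii)–(iii) of the bias analysis: the estimate involves the ratio $\exp(u/\lambda)/\bv$, so I must simultaneously keep $\bv$ bounded away from zero (to control the denominator perturbation) and verify that the intermediate telescoping quantities remain within the region where the exponential is Lipschitz and the weights are uniformly bounded by $C_1$. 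Establishing these uniform bounds and the consensus estimate $\|\w^r_{i,t}-\wb^r_t\|=O(\eta I G)$ cleanly --- rather than the contraction algebra, which is routine once the decomposition is fixed --- is where the real care is needed.
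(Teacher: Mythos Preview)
Your overall architecture---decompose, apply Young's inequality with $\gamma\asymp\beta_3$ to get the $(1-\beta_3/2)$ contraction, split the plug-in error into $u$-bias, $v$-bias, consensus, and independent noise---is exactly what the paper does. Two concrete points, however, are misread from Algorithm~\ref{alg:kl} and would leave holes if you executed the plan as written.

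First, the denominator in $\h^r_{i,t}$ is the \emph{local} $v^r_{i,t}$, not the averaged $\bv^r_t$. Your $\hat G^r_t$ therefore omits a source of error, and your bias step (ii) only compares $\bv^r_t$ with $g(\wb^r_t)$. The paper's decomposition inserts an additional intermediate quantity and picks up a consensus-in-$v$ term $\frac{1}{N}\sum_i\|v^r_{i,t}-\bv^r_t\|^2$, which is then controlled (like the $\w$-consensus) by accumulating the local $v$-updates over $I$ steps; this is what feeds into the final $\eta^2 I^2 G^2$ term alongside your step (iii).

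Second, the weight $\exp(u^r_{i,t}/\lambda)/v^r_{i,t}$ is \emph{not} $\mathcal F_{t-1}$-measurable: both $u^r_{i,t}$ and $v^r_{i,t}$ are updated using the same fresh sample $\z^r_{i,t}$ that appears in the gradient. Hence your clean ``conditional expectation kills the cross terms'' step does not go through as stated---the noise is not of the form (fixed weight)$\times(\nabla\ell(\cdot;\z)-\nabla\ell(\cdot;\D))$. The paper handles this by inserting one more telescoping step that replaces $u^r_{i,t}$ by $u^r_{i,t-1}$ in the weight (their term \textcircled{5}), paying an extra $O(\beta_3\beta_1^2)$ because $|u^r_{i,t}-u^r_{i,t-1}|\le\beta_1 C_0$; only then can the variance across machines be bounded by $\beta_3^2 C_1^2\sigma^2/N$. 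You should build both of these into your decomposition; neither is difficult once noticed, but the proof does not close without them.
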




\begin{algorithm}[htbp] 
\caption {FGDRO-KL} 
\begin{algorithmic}[1]
\STATE{Initialization: $\bar{\w}^{1}$, $u^{0}_{i,I}$, $\bar{v}^1$, $\bar{\m}^{1}$} 
\FOR{$r=1,...,R$}  
\STATE{$\w^r_{i, 0} = \bar{\w}^{r}$, $\m^r_{i, 0} = \bar{\m}^{r}$, $u^r_{i,0}=u^{r-1}_{i,I}$, $v^r_{i, 0} = \bar{v}^{r}$} 
\FOR{$t=1,...,I$} 
\STATE{Each machine samples data $\z_{i,t}^r$} 
\STATE{$u_{i,t}^r = (1-\beta_1)u_{i,t-1}^r + \beta_1 \ell(\w^r_{i,t-1}; \z_{i,t}^r)$, and $v^r_{i,t} = (1-\beta_2) v^r_{i,t-1} + \beta_2 \exp(u_{i,t}^r/\lambda)$ }   
\STATE{$\h^r_{i,t} = \frac{\exp(u^r_{i,t})}{v^r_{i,t}}  \nabla \ell(\w^r_{i,t-1}; \z^r_{i,t})$, and $\m^r_{i,t} \!=\! (1\!-\!\beta_3)\m^r_{i,t-1} + \beta_3 \h^r_{i,t}$}   
\STATE{$\w^r_{i,t} = \w^r_{i,t-1} - \eta \m^r_{i,t}$}   
\ENDFOR 
\STATE{$\bar{\w}^{r+1} = \frac{1}{N} \sum\limits_{i=1}^N \w^r_{i,I}$, $\bar{v}^{r+1} = \frac{1}{N} \sum\limits_{i=1}^N  v^r_{i,I}$, and $\bar{\m}^{r+1} = \frac{1}{N} \sum\limits_{i=1}^N \m^r_{i,I}$} 
\ENDFOR
\STATE{Output: $\tilde{\w} = \frac{1}{N} \sum\limits_{i=1}^{N} \w^{r'}_{i,t'}$, where $r'$ and $t'$ are sampled from $[1,R]$ and $[1, I]$, respectively. }
\end{algorithmic}
\label{alg:kl}
\end{algorithm} 

\begin{algorithm}[htbp]
\caption {FGDRO-KL-Adam}  
\begin{algorithmic}[1]
\STATE{Initialization: $\bar{\w}^{1}$, $u^{0}_{i,I}$, $\bar{v}^1$, $\bar{\m}^{1}$, $\bar{\q}^{1}$} 
\FOR{$r=1,...,R$}  
\STATE{$\w^r_{i, 0} = \bar{\w}^{r}$, $\m^r_{i, 0} = \bar{\m}^{r}$, $\q^r_{i, 0} = \bar{\q}^{r}$, $u^r_{i,0}=u^{r-1}_{i,I}$, and $v^r_{i, 0} = \bar{v}^{r}$} 
\FOR{$t=1,...,I$} 
\STATE{Each machine samples data $\z_{i,t}^r$} 
\STATE{$u_{i,t}^r = (1-\beta_1)u_{i,t-1}^r + \beta_1 \ell(\w^r_{i,t-1}; \z_{i,t}^r)$, and $v^r_{i,t} = (1-\beta_2) v^r_{i,t-1} + \beta_2 \exp(u_{i,t}^r/\lambda)$ }  
\STATE{$\h^r_{i,t} = \frac{\exp(u^r_{i,t})}{v^r_{i,t}}  \nabla \ell(\w^r_{i,t-1}; \z^r_{i,t})$}
\STATE{$\m^r_{i,t} \!=\! (1\!-\!\beta_3)\m^r_{i,t-1} + \beta_3 \h^r_{i,t}$, and $\q^r_{i,t} =  (1\!-\!\beta_4) \q^r_{i,t-1} + \beta_4 (\h^r_{i,t})^2$}
\STATE{$\w^r_{i,t} = \w^r_{i,t-1} - \eta \frac{\m^r_{i,t}}{\sqrt{\q^r_{i,t}} + \tau}$  }   
\ENDFOR 
\STATE{$\bar{\w}^{r+1} = \frac{1}{N} \sum\limits_{i=1}^N \w^r_{i,I}$, $\bar{v}^{r+1} = \frac{1}{N} \sum\limits_{i=1}^N v^r_{i,I}$, $\bar{\m}^{r+1} = \frac{1}{N} \sum\limits_{i=1}^N \m^r_{i,I}$ and ${\q}^{r+1} = \frac{1}{N}\sum\limits_{i=1}^{N} \q^r_{i,I}$} 
\ENDFOR
\STATE{Output: $\tilde{\w} = \frac{1}{N} \sum\limits_{i=1}^{N} \w^{r'}_{i,t'}$, where $r'$ and $t'$ are sampled from $[1,R]$ and $[1, I]$, respectively. } 
\end{algorithmic}
\label{alg:kl_adam}
\end{algorithm} 

The precisions of the $u,v,\m$ estimators depend on each other. The idea is to get $\E\|u^r_{i,t} - \ell(\wb^r_{t}; \D_i)\|^2$, $\E\|\bv^r_t - g(\wb^r_t)\|^2$, $\|\bm^r_t - \nabla F(\wb^r_t)\|^2$ and $\E\|\nabla F(\tilde{\w})\|^2 $ jointly converge, and then we finally have the following theorem to guarantee the convergence:  
\begin{theorem} 
    Under Assumption \ref{assumption:kl}, by setting $\eta = O\left(\frac{1}{\sqrt{RI}}\right)$, $\beta_1=O\left(\frac{1}{\sqrt{RI}}\right)$, $I=R^{1/3}$, Algorithm \ref{alg:kl} ensures that 
    \begin{equation}  
        \E\|\nabla F(\tilde{\w})\|^2 \leq O\left( \frac{1}{R^{2/3}}  \right). 
    \end{equation}
    \label{thm:kl}  
    \vspace{-0.1in}
\end{theorem}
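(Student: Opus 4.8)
The plan is to run a standard smooth–nonconvex descent analysis on the \emph{virtual averaged iterate} $\wb^r_t := \frac{1}{N}\sum_i \w^r_{i,t}$, treating the entire execution as one trajectory. The averaging steps are benign for this purpose: because $\bar\w^{r+1}=\frac1N\sum_i\w^r_{i,I}$ and likewise for $\bv,\bm$, the mean is preserved across communication rounds, so $\wb^{r+1}_0=\wb^r_I$ and the update reads $\wb^r_t=\wb^r_{t-1}-\eta\bm^r_t$ uniformly in $t$. First I would establish that $F$ in \eqref{form:kl_composition} is $L$-smooth (a log-sum-exp of the bounded, smooth losses under Assumption \ref{assumption:kl}) and that $\|\bm^r_t\|\le G$ for the constant $G$ of Lemma \ref{lem:kl_m} (from boundedness of $\ell$ and of $\exp(u)/v$). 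Applying the descent lemma together with the identity $-\langle a,b\rangle=-\tfrac12\|a\|^2-\tfrac12\|b\|^2+\tfrac12\|a-b\|^2$ and dropping the nonpositive $\|\bm^r_t\|^2$ term for $\eta\le 1/L$ gives
\begin{equation*}
\E F(\wb^r_t)\le \E F(\wb^r_{t-1})-\tfrac{\eta}{2}\E\|\nabla F(\wb^r_{t-1})\|^2+\tfrac{\eta}{2}\E\|\bm^r_t-\nabla F(\wb^r_{t-1})\|^2,
\end{equation*}
after which the gradient-error term is handled by Lemma \ref{lem:kl_m} (modulo the cheap index shift $\|\nabla F(\wb^r_t)-\nabla F(\wb^r_{t-1})\|\le L\eta\|\bm^r_t\|$).

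Next I would assemble the coupled error recursions. Lemma \ref{lem:kl_m} already bounds $\Delta^m_t:=\|\bm^r_t-\nabla F(\wb^r_t)\|^2$ by a $(1-\beta_3/2)$-contraction plus the local-loss error $\Delta^u_t:=\frac1N\sum_i\|u^r_{i,t}-\ell(\wb^r_t;\D_i)\|^2$, the global-statistic error $\Delta^v_t:=\|\bv^r_t-g(\wb^r_t)\|^2$, a feedback term $3\eta\|\nabla F\|^2$, and additive noise of order $\beta_3^2\sigma^2/N+\beta_3\eta^2 I^2$. The analogous lemmas for $u$ and $v$ (Appendix \ref{app:sec:kl}) give contraction-plus-noise recursions for $\Delta^u_t,\Delta^v_t$, each of which also produces a client-drift term $\frac1N\sum_i\|\w^r_{i,t}-\wb^r_t\|^2$. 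Since every machine restarts from the common point $\wb^r$ at each communication round and then takes at most $I$ steps of magnitude $\eta G$, this drift is $O(\eta^2 I^2 G^2)$ and, crucially, resets to zero at each round.

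The core step is to combine these into a single Lyapunov potential
\begin{equation*}
\Phi^r_t=\E F(\wb^r_t)+a\,\Delta^m_t+b\,\Delta^u_t+c\,\Delta^v_t,
\end{equation*}
choosing $a,b,c=\Theta(\eta/\beta)$ so that each contraction $-\tfrac{\beta}{2}\Delta$ dominates the corresponding error fed in by the descent step and by the other recursions, while simultaneously enforcing $3a\eta\le\tfrac{\eta}{4}$ so that the $\|\nabla F\|^2$ feedback from Lemma \ref{lem:kl_m} is absorbed by the $-\tfrac{\eta}{2}\|\nabla F\|^2$ descent term. This yields a one-step decrease $\E\Phi^r_t\le\E\Phi^r_{t-1}-\tfrac{\eta}{4}\E\|\nabla F(\wb^r_{t-1})\|^2+(\text{noise})$; convexity of $\|\cdot\|^2$ ensures the communication-round averaging never increases $\Delta^m,\Delta^v$ (and $u$ is kept local), so the inequality chains across rounds. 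Telescoping over all $RI$ inner steps and dividing by $\eta RI$ bounds $\frac1{RI}\sum_{r,t}\E\|\nabla F(\wb^r_{t-1})\|^2$ by $O\!\big(\tfrac{1}{\eta RI}\big)$ plus the normalized noise; since $\tilde\w=\wb^{r'}_{t'}$ is the uniformly sampled averaged iterate, the left side is exactly $\E\|\nabla F(\tilde\w)\|^2$. Finally, with $\eta$ and the $\beta$'s of order $1/\sqrt{RI}$ the leading term is $\Theta(1/\sqrt{RI})$; the variance noise contributes $\Theta(\sigma^2/(N\sqrt{RI}))$, while the drift noise contributes $\Theta(\eta I^2)=\Theta(I^2/\sqrt{RI})$, and demanding that this match $\Theta(1/\sqrt{RI})$ forces $I^2\le\sqrt{RI}$, i.e.\ $I=R^{1/3}$, whence $\sqrt{RI}=R^{2/3}$ and every term is $O(1/R^{2/3})$.

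I expect the main obstacle to be the bookkeeping of the coupled recursions: verifying that a single set of constants $a,b,c$ simultaneously makes all four contractions work and absorbs the cross-terms, and in particular that the \emph{client-drift} contributions---which are new relative to the centralized analysis of \citep{hu2023non} and which enter the $u$ and $v$ recursions because those estimators are updated at the local iterates $\w^r_{i,t}$ rather than at $\wb^r_t$---can be controlled. It is exactly the size of this drift, $O(\eta^2 I^2)$ per step, that caps the number of local steps at $I=R^{1/3}$ and thereby fixes the final rate.
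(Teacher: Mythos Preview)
Your proposal is correct and follows essentially the same route as the paper: smoothness descent on the virtual average $\wb^r_t$, coupled contraction recursions for the $u,v,m$ errors (the paper's Lemmas~\ref{lem:kl_u}, \ref{lem:kl_v}, \ref{lem:kl_m}), and telescoping; the paper simply telescopes each recursion separately rather than bundling them into one Lyapunov potential, which is only a bookkeeping difference. Two minor corrections: (i) the drift contribution to the final bound is $\Theta(\eta^2 I^2)=\Theta(\beta_3^2 I^2)=\Theta(I/R)$, not $\Theta(\eta I^2)$---once you multiply the per-step noise $\beta_3\eta^2I^2$ from Lemma~\ref{lem:kl_m} by your weight $a=\Theta(\eta/\beta_3)$, sum over $RI$ steps, and divide by $\eta RI$, you get $\eta^2 I^2$; this still yields $I\le R^{1/3}$, so your conclusion stands; (ii) the convexity-of-$\|\cdot\|^2$ remark is unnecessary, since you already track the \emph{means} $\bm,\bv$, which are exactly preserved by the averaging step, so $\Delta^m,\Delta^v$ are unchanged (not merely non-increased) across communication rounds.
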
  
\textbf{Remark.} To achieve an $\epsilon$-stationary point, i.e., $\|\nabla F(\tilde{\w})\|^2 \leq \epsilon^2$, we need to set $R=O(1/\epsilon^3)$, $I=O(1/\epsilon)$, $\eta = O(\epsilon^2)$ and $\beta_1 = O(\epsilon^2)$. Compared to \citep{issaid2022dr}, our approach maintains a communication complexity of $O(1/\epsilon^3)$,  but significantly reduces the sample complexity on each machine from $O(1/\epsilon^6)$ from $O(1/\epsilon^4)$,  requiring only a batch size of $O(1)$ rather than a large batch size of $O(1/\epsilon^2)$. Our results match the communication and sample complexity in \citep{guo2023fedxl}, which tackles a simpler two-level compositional problem and achieved sample complexity of $O(1/\epsilon^4)$ per machine. Considering that the sample complexity for a two-level compositional problem in a centralized setting would be
$O(n/\epsilon^4)$ \citep{wang2022finite}, our approach realizes a linear speed-up proportional to the number of machines. 
\vspace{-0.05in}    
\section{FGDRO-KL-Adam} 
\label{sec:kl_adam}
In this section, we introduce an adaptive algorithm, FGDRO-KL-Adam, to address the problem (\ref{form:kl_composition}) with KL regularization, as detailed in Algorithm \ref{alg:kl_adam}. 
This algorithm incorporates Adam-type updates at local steps, which have been shown to outperform SGD in centralized settings. While previous studies \citep{reddi2020adaptive} in federated settings have implemented Adam-type updates at the global step but retained SGD for local updates, which may be sub-optimal.  

Similar to Algorithm \ref{alg:kl}, $u$ and $v$ are used to estimate the local loss and the global function $g(\w)$, respectively. The variables $\h$ and $\m$ are updated in a manner consistent with Algorithm \ref{alg:kl}. Under Assumption \ref{assumption:kl}, the behavior of the $u, v, \m$ estimators is addressed as previously discussed.

The primary distinction in Algorithm \ref{alg:kl_adam} lies in its adaptive updates for the local model $\w^r_{i,t}$. Here, $\m$ serves a role akin to the first-order momentum in Adam, and we introduce $\q^r_{i,t}$ to estimate the second-order momentum:
\begin{equation}
\q^r_{i,t} = (1-\beta_4) \m^r_{i,t-1} + \beta_4 \h^r_{i,t}.
\end{equation}
Subsequently, the local models are updated adaptively using the formula:
\begin{equation}
\begin{split}
\w^r_{i,t} = \w^r_{i,t-1} - \eta \frac{\m^r_{i,t}}{\sqrt{\q^r_{i,t}} + \tau},
\end{split}
\end{equation}
where both the square root and division are performed element-wise.

A key step in the analysis is to address the coordinate-wise update, as in the following lemma.
\begin{lemma}
Using $L$-smooth of $F$, for some proper constants $C$ and $G$, by setting $\eta = O\left(\frac{1}{\sqrt{RI}}\right)$, $\beta_1=O\left(\frac{1}{\sqrt{RI}}\right)$, we have
\begin{equation}
\begin{split}
&F(\wb^r_{t}) \leq F(\wb^r_{t-1}) + \frac{\eta}{\tau} \|\nabla F(\wb^r_{t-1}) - \bm^r_{t-1}\|^2
 + \frac{\eta}{\tau} \beta_3^2  C 
- \frac{\eta}{2(G + \tau)} \|\nabla F(\wb^r_{t-1})\|^2. 
\end{split} 
\end{equation} 
\end{lemma}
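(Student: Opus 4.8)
The plan is to establish a one-step descent inequality for the averaged iterate $\wb^r_t$, adapting the usual smoothness argument to the coordinate-wise adaptive preconditioner and to the fact that the drift direction is the momentum $\bm$ rather than $\nabla F$ itself. Since the local update in Algorithm \ref{alg:kl_adam} gives, after averaging, $\wb^r_t - \wb^r_{t-1} = -\frac{\eta}{N}\sum_{i=1}^N \m^r_{i,t}/(\sqrt{\q^r_{i,t}}+\tau)$ (with the square root and division taken element-wise), I would first apply $L$-smoothness of $F$ to write
\[
F(\wb^r_t) \le F(\wb^r_{t-1}) + \langle \nabla F(\wb^r_{t-1}),\, \wb^r_t - \wb^r_{t-1}\rangle + \frac{L}{2}\|\wb^r_t - \wb^r_{t-1}\|^2,
\]
and then extract the descent from the inner product while treating the quadratic term, which is $O(\eta^2)$, as lower order.

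The core observation is that every coordinate of every preconditioner lies in $[1/(G+\tau),\,1/\tau]$. This follows from Assumption \ref{assumption:kl}: bounded losses and a bounded Lipschitz gradient make $\h^r_{i,t} = \frac{\exp(u^r_{i,t})}{v^r_{i,t}}\nabla\ell$ bounded, and since $\q^r_{i,t}$ is a moving average of $(\h^r_{i,t})^2$ we get $\sqrt{\q^r_{i,t}}\le G$, hence the stated interval. Working coordinate-wise, I would decompose the momentum against the true gradient as $\m = \nabla F + (\m-\nabla F)$ and apply Young's inequality to the cross piece. Using the lower bound $1/(G+\tau)$ on the retained quadratic-in-gradient term and the upper bound $1/\tau$ on the error term produces precisely $-\frac{\eta}{2(G+\tau)}\|\nabla F(\wb^r_{t-1})\|^2 + \frac{\eta}{2\tau}\|\bm^r_t - \nabla F(\wb^r_{t-1})\|^2$.

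It remains to move from $\bm^r_t$ to $\bm^r_{t-1}$, which simultaneously explains the doubled coefficient and the additive $\beta_3^2 C$ term. Splitting $\|\bm^r_t - \nabla F(\wb^r_{t-1})\|^2 \le 2\|\bm^r_{t-1} - \nabla F(\wb^r_{t-1})\|^2 + 2\|\bm^r_t - \bm^r_{t-1}\|^2$ and using the momentum recursion $\bm^r_t = (1-\beta_3)\bm^r_{t-1} + \beta_3\bar{\h}^r_t$, so that $\|\bm^r_t - \bm^r_{t-1}\|^2 = \beta_3^2\|\bar{\h}^r_t - \bm^r_{t-1}\|^2 \le \beta_3^2 C$ by boundedness of $\h$ and $\m$, turns $\frac{\eta}{2\tau}\|\bm^r_t - \nabla F\|^2$ into $\frac{\eta}{\tau}\|\bm^r_{t-1} - \nabla F\|^2 + \frac{\eta}{\tau}\beta_3^2 C$. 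The remaining $O(\eta^2)$ smoothness term, bounded via $\|\m^r_{i,t}\|/(\sqrt{\q^r_{i,t}}+\tau)\le G/\tau$, is then folded into the same constant $C$ under the prescribed scaling $\eta=O(1/\sqrt{RI})$.

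The main obstacle I anticipate is the federated mismatch concealed in $\frac{1}{N}\sum_i \m^r_{i,t}/(\sqrt{\q^r_{i,t}}+\tau)$: because each client carries its own preconditioner and momentum, this average is not a single diagonal matrix applied to $\bm^r_t$, so the clean coordinate-wise decomposition above does not literally apply. I would handle this by writing, per coordinate $j$, $\frac1N\sum_i p^r_{i,t,j}[\m^r_{i,t}]_j = \bar{p}^r_{t,j}[\bm^r_t]_j + \frac1N\sum_i (p^r_{i,t,j}-\bar{p}^r_{t,j})[\m^r_{i,t}]_j$, where the averaged preconditioner $\bar{p}^r_{t,j}$ still lies in $[1/(G+\tau),\,1/\tau]$, and controlling the spread $|p^r_{i,t,j}-\bar{p}^r_{t,j}|$ by the within-round client drift of the $\q$-estimators (itself $O(\eta I)$). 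Ensuring that the sharp constants $1/(G+\tau)$ and $1/\tau$ survive this reduction, rather than degrading to a looser ratio, is the delicate part of the argument.
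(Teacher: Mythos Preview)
Your proposal is correct and follows essentially the same route as the paper: smoothness, coordinate-wise preconditioner bounds $\tau \le \sqrt{\q}+\tau \le G+\tau$, then shifting $\bm^r_t$ to $\bm^r_{t-1}$ via the momentum recursion. Two minor differences are worth noting. First, instead of Young's inequality the paper uses the polarization identity $-\nabla F^\top(\teta\circ\bm) = \tfrac12\|\sqrt{\teta}\circ(\nabla F-\bm)\|^2 - \tfrac12\|\sqrt{\teta}\circ\nabla F\|^2 - \tfrac12\|\sqrt{\teta}\circ\bm\|^2$, which yields an extra $-\tfrac12\|\sqrt{\teta}\circ\bm\|^2$ term that exactly absorbs the $\tfrac{L}{2}\|\teta\circ\bm\|^2$ smoothness remainder (so the $O(\eta^2)$ term disappears rather than being folded into $C$). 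Second, the federated preconditioner mismatch you carefully flag---that $\tfrac1N\sum_i \m^r_{i,t}/(\sqrt{\q^r_{i,t}}+\tau)$ is not a single diagonal matrix applied to $\bm^r_t$---is simply not addressed in the paper's argument, which writes the averaged update as $\teta_t\circ\bm^r_t$ without further comment; your proposed decomposition into an averaged preconditioner plus client drift is more rigorous than what the paper does.
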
 

Finally, we show that FGDRO-KL-Adam has same convergence rate as FGDRO-KL. 
\begin{theorem}
Under Assumption \ref{assumption:kl}, by setting of $\eta = O\left(\frac{1}{\sqrt{RI}}\right)$, $\beta_1=O\left(\frac{1}{\sqrt{RI}}\right)$, and $I=R^{1/3}$, Algorithm \ref{alg:kl_adam} achieves:
\begin{equation} 
\E[\|\nabla F(\tilde{\w})\|^2] \leq O\left( \frac{1}{R^{2/3}} \right). 
\end{equation}
\label{thm:kl_adam}
\end{theorem}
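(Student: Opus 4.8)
<br>

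The proof of Theorem~\ref{thm:kl_adam} should parallel the analysis of Theorem~\ref{thm:kl}, replacing the descent guarantee for plain SGD with the adaptive descent lemma stated just above. The plan is to build a coupled Lyapunov argument that tracks four error quantities jointly: the local loss estimator error $\E\|u^r_{i,t} - \ell(\wb^r_t;\D_i)\|^2$, the global statistic error $\E\|\bv^r_t - g(\wb^r_t)\|^2$, the gradient estimator error $\E\|\bm^r_t - \nabla F(\wb^r_t)\|^2$, and the function value $F(\wb^r_t)$. The first three recursions are inherited essentially unchanged from the FGDRO-KL analysis, since $u$, $v$, $\m$, and $\h$ are updated by the same rules in Algorithm~\ref{alg:kl_adam}; in particular Lemma~\ref{lem:kl_m} controls the $\bm$ recursion. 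What changes is the descent inequality on $F$, which now comes from the adaptive lemma above rather than a standard smoothness-based descent.

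First I would fix a communication round $r$ and telescope the adaptive descent lemma over the inner iterations $t=1,\dots,I$, together with the consensus/drift bounds showing that the averaged iterate $\wb^r_t$ moves by $O(\eta)$ per step and that the local deviations $\|\w^r_{i,t}-\wb^r_t\|$ stay controlled (these drift bounds are where $I=R^{1/3}$ and the step-size choice enter). Summing the descent lemma produces a term $-\frac{\eta}{2(G+\tau)}\sum_t\|\nabla F(\wb^r_{t-1})\|^2$ on the left that we want to lower-bound the progress, against the positive error term $\frac{\eta}{\tau}\sum_t\|\nabla F(\wb^r_{t-1})-\bm^r_{t-1}\|^2$ and the bias term $\frac{\eta}{\tau}\beta_3^2 C$. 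The key is that the gradient-estimator error is already being driven down through its own recursion, so this positive term can be absorbed.

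Next I would form a weighted potential $\Phi^r_t = F(\wb^r_t) + c_1 \E\|\bm^r_t-\nabla F(\wb^r_t)\|^2 + c_2\,\frac{1}{N}\sum_i\E\|u^r_{i,t}-\ell(\wb^r_t;\D_i)\|^2 + c_3\E\|\bv^r_t-g(\wb^r_t)\|^2$ with constants $c_1,c_2,c_3$ chosen so that the cross terms cancel: the $+\frac{\eta}{\tau}\|\bm-\nabla F\|^2$ from the descent lemma is dominated by the contraction $-\frac{\beta_3}{2}\|\bm-\nabla F\|^2$ in Lemma~\ref{lem:kl_m}, and the $u$- and $v$-error couplings fed into Lemma~\ref{lem:kl_m} are in turn dominated by the contractions in their own recursions. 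Telescoping $\Phi^r_t$ across all $r$ and $t$, using that $F$ is bounded below, leaves $\frac{\eta}{2(G+\tau)}\sum_{r,t}\E\|\nabla F(\wb^r_{t-1})\|^2 \le \Phi^1 + (\text{accumulated noise})$, and dividing by the total number of steps $RI$ and the output-sampling identity gives $\E\|\nabla F(\tilde\w)\|^2 \le O(1/(RI) \cdot \eta^{-1}) + O(\text{step-dependent noise})$. Plugging $\eta=O(1/\sqrt{RI})$, $\beta_i=O(1/\sqrt{RI})$, and $I=R^{1/3}$ and balancing the terms yields the $O(1/R^{2/3})$ rate.

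The main obstacle will be verifying that the Lyapunov constants $c_1,c_2,c_3$ can be chosen consistently: the adaptive step introduces the preconditioner $1/(\sqrt{\q^r_{i,t}}+\tau)$, so I must show that $\q^r_{i,t}$ stays bounded above by $G^2$ (guaranteeing the effective step size never collapses, which is what makes $-\frac{\eta}{2(G+\tau)}\|\nabla F\|^2$ a genuine descent) and bounded below away from degeneracy. The boundedness of $\h^r_{i,t}=\frac{\exp(u^r_{i,t})}{v^r_{i,t}}\nabla\ell$, and hence of $\q$, must be established from Assumption~\ref{assumption:kl}(1),(4), controlling $\exp(u)/v$ uniformly. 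The second delicate point is that the factor $\frac{\eta}{\tau}$ multiplying the gradient-error term in the descent lemma has a different $\eta$-scaling than the $\beta_3$-contraction it must be absorbed into, so the telescoping requires $\eta = O(\beta_3)$ (consistent with the stated choices) to keep the potential monotone. These two closures — the uniform two-sided control of $\q$ and the consistent balancing of the step-size and momentum parameters — are the crux; once they hold, the remainder is the same bookkeeping as in Theorem~\ref{thm:kl}.
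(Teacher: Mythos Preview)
Your proposal is correct and follows essentially the same approach as the paper: reuse the error recursions for $u$, $v$, and $\bm$ from the FGDRO-KL analysis (Lemmas~\ref{lem:kl_u}--\ref{lem:kl_m} carry over unchanged since those estimators are updated identically in Algorithm~\ref{alg:kl_adam}), replace the SGD descent step with the adaptive descent inequality (Lemma~6.1), and telescope. The paper combines the telescoped recursions sequentially rather than packaging them into a single weighted potential $\Phi^r_t$, but this is only a presentational difference; your identification of the two technical closures---the two-sided bound $\tau \le \sqrt{\q^r_{i,t}}+\tau \le G+\tau$ coming from Assumption~\ref{assumption:kl}(4), and the need for $\eta=O(\beta_3)$ so the $\frac{\eta}{\tau}\|\bm-\nabla F\|^2$ term is absorbed by the $\frac{\beta_3}{2}$ contraction---matches exactly what the paper relies on.
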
 
\textbf{Remark.} To achieve an $\epsilon$-stationary point, i.e., $\|\nabla F(\tilde{\w})\|^2 \leq \epsilon^2$, we just need to set $R=O(1/\epsilon^3)$, $I=O(1/\epsilon)$, $\eta = O(\epsilon^2)$ and $\beta_1 = O(\epsilon^2)$. The communication and sample complexities are the same as in Theorem \ref{thm:kl}. Our analysis, following the framework in 
\citep{guo2021novel}, requires $\sqrt{\q^r_{i,t}} + \tau$ to be both upper and lower bounded. It is achieved by the upper bound assumption and choice of $\tau$, ensuring $\tau \leq \sqrt{\q^r_{i,t}} + \tau \leq G+\tau$, which is utilized similarly in \citep{reddi2020adaptive}. 
However, \citet{guo2021novel} did not cover the federated learning scenario or the compositional problems. It is important to note that we have not developed an Adam-type variant for FGDRO-CVaR. This is due to the need for accurate gradient estimation in the analysis of Adam-type updates, which is achieved using the moving estimator $\m$. But in CVaR variant, the nonsmooth nature renders a moving average for subgradient $\partial F(\w)$ not provably accurate. 
\vspace{-0.05in}   
\section{Experiments} 
\label{sec:experiment} 

\paragraph{Datasets and Neural Networks}  
We use Pile \cite{gao2020pile}, CivilComments \citep{borkan2019nuanced}, Camelyon17 \citep{bandi2018detection}, iWildCam2020 \citep{beery2021iwildcam}, and Poverty \citep{yeh2020using}. For Pile, we preprocess it as \citep{xie2023doremi}, for the others, we use the preprocessed version by \citep{koh2021wilds}. We utilized natural data splits where data from the same hospital, web source, location, or demographic group were placed on the same client machine. These experiments have involved with highly imbalanced number of data on clients. Data statistics are presented in the Appendix \ref{app:data}. Additiona experiments on Cifar 10 with Dirichlet distributions over 100 clients are reported in Appendix \ref{app:cifar10}. 

The \textbf{Pile} data set is a large language data set. We use the uncopyrighted version \citep{pile-uncopyrighted} which has 17 domains, and each domain is allocated to one machines. We use the GPT2 model \cite{radford2019language} as implemented by \cite{huggingface_gpt2} with 12 hidden layers, 12 attention heads, and 768 embeddings and hidden states. We measure the performance using worst log-perplexity and average log-perplexity of testing groups. 
\textbf{CivilComments} is a toxicity classification of the online comment task in diversified demographic identities. 
We train on four groups based on the presence of 'Black' and toxicity labels, deploying each on a separate machine, and use the DistilBERT base-uncased model \citep{sanh2019distilbert} to predict toxicity. We measure the performance using worst group accuracy and average accuracy of testing groups.
\textbf{Camelyon17} focuses on tumor detection from lymph node images \citep{bandi2018detection}, with data from five hospitals split into training (3), validation (1), and testing  (1) sets. Training uses three machines, each processing data from one hospital, using DenseNet-121 \citep{huang2017densely}.
The \textbf{iWildCam2020} dataset consists of wildlife images from various camera traps \citep{beery2021iwildcam}, the dataset is split into training, validation, and testing segments. We use ResNet50 \citep{he2016deep} across all datasets and measure performance via Macro F1 score.
The \textbf{Poverty} dataset contains geographic data aimed at predicting regional poverty levels \citep{yeh2020using}. We use ResNet50 \citep{he2016deep} for our models and evaluate performance using both the Pearson correlation on the worst-performing region and the average across regions.
\begin{table*}[htbp]  
  \centering
  \small
   \caption{Experiments on Natural Language Task. PPL is abbrevation of perplexity.}   
    \begin{tabular}{c|c|c|c|c} 
    \toprule 
     Datasets &   \multicolumn{2}{|c|}{\textbf{Pile}}  &   \multicolumn{2}{|c}{\textbf{CivilComments}}  \\  
     \hline 
     Metric  & Worst Log-PPL & Average Log-PPL &  Worst Acc & Average Acc  \\ 
     \hline 
     FedAvg  & 8.085($\pm 0.0012$) & 6.785 ($\pm 0.0020$) & 0.6415 ($\pm 0.0007$) & 0.7635 ($\pm 0.0012$)   \\ 
     \hline 
      SCAFFOLD & 7.975 ($\pm 0.0024$) & 6.901 ($\pm 0.0031$)& 0.6436 ($\pm 0.0016$)& 0.7633 ($\pm 0.0019$) \\
     \hline
     FedProx & 8.079 ($\pm 0.0038$) & 6.724 ($\pm 0.0046$) & 0.6430 ($\pm 0.0021$) & 0.7692 ($\pm 0.0025$) \\ 
     \hline
     FedAdam & 7.242 ($\pm 0.0048$) & 6.479 ($\pm 0.0037$) & 0.6567 ($\pm 0.0023$) &0.7664 ($\pm 0.0020$)       \\ 
    \hline
    DRFA  & 8.014 ($\pm 0.0053$) &6.702 ($\pm 0.0062$) & 0.6327 ($\pm 0.0019$) & 0.7413  ($\pm 0.0022$)  \\
    \hline
    DR-DSGD & 8.023 ($\pm 0.0033$) & 6.693 ($\pm 0.0030$) &0.6272 ($\pm 0.0006$)  & 0.7523 ($\pm 0.0011$)\\
    \hline
    \hline 
    FGDRO-CVaR & 8.145  ($\pm 0.0039$) &6.907  ($\pm 0.0046$)  & 0.6693 ($\pm 0.0015$) & 0.7571  ($\pm 0.0012$) \\    
    \hline
    FGDRO-KL&  7.932  ($\pm 0.0048$) &6.664  ($\pm 0.0051$)  & \textbf{0.6921} ($\pm 0.0016$) & \textbf{0.7734}  ($\pm 0.0020$) \\
    \hline
    FGDRO-KL-Adam & \textbf{3.608}  ($\pm 0.0052$) & \textbf{2.653}  ($\pm 0.0040$) & 0.6628 ($\pm 0.0007$) & 0.7614  ($\pm 0.0005$) \\
    \bottomrule  
    \end{tabular}%
    \label{tab:results:nlp}
\end{table*}%

\begin{table*}[htbp]
  \centering
  \small 
   \caption{Experiments on Image Classification Task} 
    \begin{tabular}{c|c|c|c|c} 
    \toprule 
     Datasets &  \textbf{Camelyon17} &  \textbf{iWildCam2020} & \multicolumn{2}{|c}{\textbf{PovertyMap}} \\  
     \hline 
     Metric &  Acc &  Macro F1  & Worst Pearson & Average Pearson \\ 
     \hline 
     FedAvg & 0.8723 ($\pm 0.0074$) &0.4964  ($\pm 0.0125$) &0.7301 ($\pm 0.0064$) &0.7782 ($\pm 0.0077$)  \\
     \hline
     SCAFFOLD & 0.8851 ($\pm 0.0063$) & 0.4527 ($\pm 0.0331$) & 0.7229 ($\pm 0.0049$) & 0.7814 ($\pm 0.0042$)  \\
     \hline
     FedProx & 0.8703 ($\pm 0.0157$) & 0.3925 ($\pm 0.0228$) & 0.7305 ($\pm 0.0063$) & 0.7641 ($\pm 0.0070$) \\ 
     \hline 
     FedAdam &\textbf{0.9493} ($\pm 0.0122$) &0.3570  ($\pm 0.0203$) &0.7294 ($\pm 0.0058$) &\textbf{0.8273}  ($\pm 0.0041$)  \\ 
    \hline 
    DRFA &0.8301 ($\pm 0.0174$)  &0.4200 ($\pm 0.0149$) & 0.7071($\pm 0.0026$) & 0.7665 ($\pm 0.0023$)  \\  
    \hline
    DR-DSGD &0.9270 ($\pm 0.0095$)  &0.3157 ($\pm 0.0227$)&0.7155 ($\pm 0.0063$) &0.7770 ($\pm 0.0059$) \\
    \hline 
    \hline 
    FGDRO-CVaR & 0.8667 ($\pm 0.0110$) & 0.5080 ($\pm 0.0174$) & 0.7443 ($\pm 0.0052$)  & 0.7977 ($\pm 0.0051$)  \\
    \hline
   FGDRO-KL &  0.9243 ($\pm 0.0129$) &\textbf{0.5201} ($\pm 0.0239$) &0.7254 ($\pm 0.0066$)  &0.7829 ($\pm 0.0062$)  \\
    \hline 
    FGDRO-KL-Adam & 0.9399 ($\pm 0.0154$) &0.4489($\pm 0.0205$)&\textbf{0.7827} ($\pm 0.0071$)  & 0.8225 ($\pm 0.0060$)  \\   
    \bottomrule  
    \end{tabular}%
  \label{tab:results:cv}
\end{table*}

\paragraph{Baselines} 
We compare our algorithms FGDRO-CVaR, FGDRO-KL, and FGDRO-KL-Adam with four baselines: FedAvg \cite{mcmahan2017communication}, SCAFFOLD \cite{karimireddy2020scaffold}, FedProx\cite{li2020federated}, FedAdam \citep{reddi2020adaptive}, DRFA\cite{deng2020distributionally}, and DR-DSGD \cite{issaid2022dr}. 

We tune the initial step size in [1e-4, 1e-3, 1e-2, 1e-1]. 
All algorithms set the communication interval $I=32$ unless otherwise specified. The local mini-batch sizes are set to 32. Experiments are run for 20K local iterations except for Pile, which runs for 200K iterations. 
The $\beta$ parameters of FGDRO-KL and FGDRO-KL-Adam are tuned in $[0.01, 0.1, 0.2, 0.5]$. 
For each algorithm, we repeat the experiments 3 times with different random seeds and report the averaged performance. 
Following \citep{xie2023doremi}, our FGDRO algorithms for Pile initially train for 20K iterations to obtain domain weights, which are then fixed during subsequent training phases.

\paragraph{Results} We report the experimental results for natural language processing in Table \ref{tab:results:nlp} and those for computer vision in Table \ref{tab:results:cv}. We can see that our methods outperform the baselines in most tasks. Our approaches improve worst-case performance without hurting average case performance. Furthermore, FGDRO-KL-Adam has demonstrated superior performance compared to FGDRO-KL in most cases.

\paragraph{Ablation Studies}
Here we present some ablation study to examine some aspects of our algorithm design.
First, in Figure \ref{fig:aba_camelyon}, we vary the communication interval $I$ in experiments on the Camelyon dataset. We can see that both our FGDRO-CVaR and FGDRO-KL-Adam algorithms can tolerate skipping a large number of communications without degrading the performance. 

To demonstrate the effect of the local adaptive updates. We develop a LocalAdam algorithm (see Appendix \ref{app:sec:localadam}), which optimizes ERM using our design of using Adam steps in local updates. The results are plotted in Figure \ref{fig:aba_pile}. We can see that the LocalAdam algorithm outperforms FedAdam, which uses SGD in local steps and only uses adaptive steps in global communication rounds. 
\begin{figure*}[htbp] 
    \subfigure[Varying $I$]{
        \includegraphics[scale=0.22]{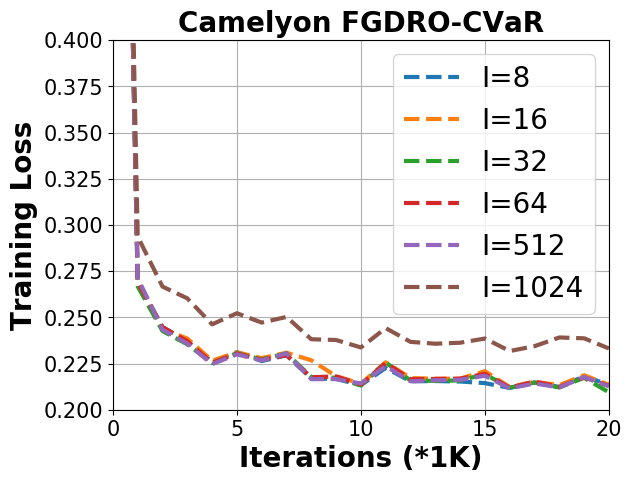}
    \includegraphics[scale=0.22]{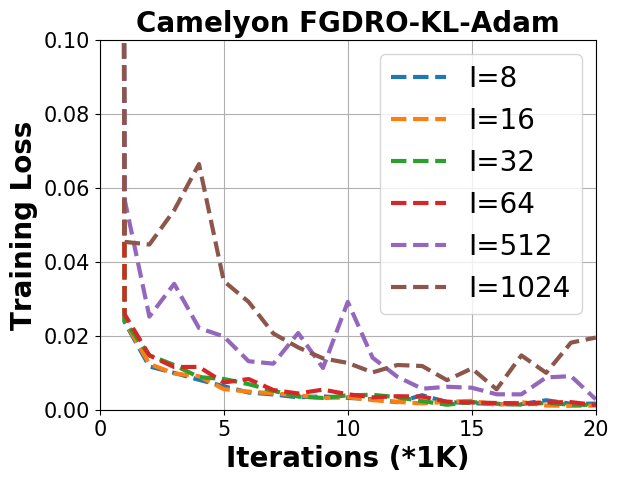}
    \label{fig:aba_camelyon} 
    } 
    \subfigure[Effectiveness of Local Adam Steps]{
    \includegraphics[scale=0.22]{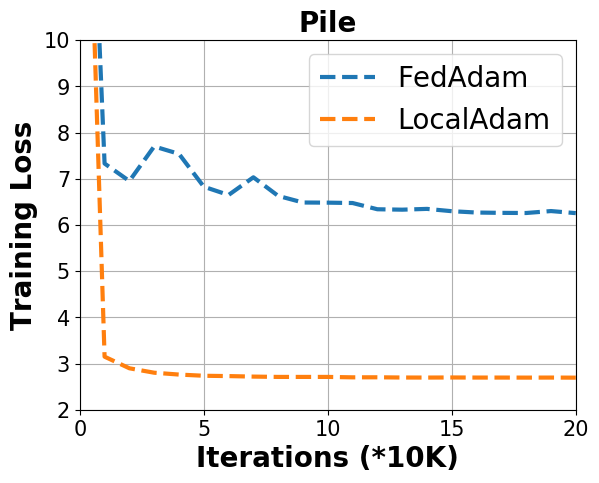}
    \includegraphics[scale=0.22] {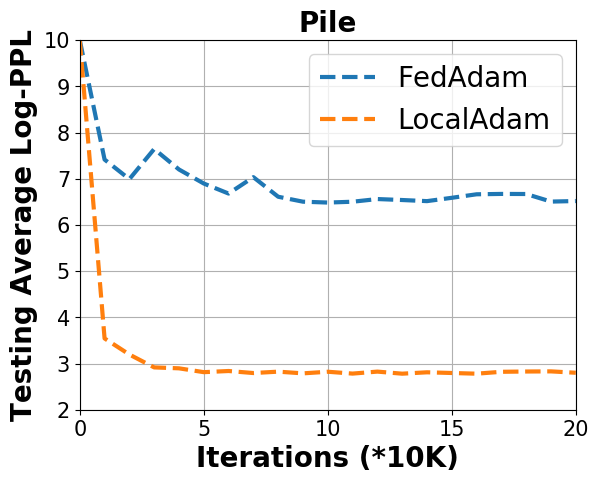} 
    \label{fig:aba_pile} } 
    \caption{Ablation Experiments} 
\end{figure*}
\section{Conclusions} 
\label{sec:conclusion} 
Our algorithm provides a significant advantage in addressing federated group distributionally robust optimization while maintaining low communication and computational complexity. Furthermore, incorporating local adaptive steps has the potential to accelerate the training process beyond the capabilities of traditional approaches that employ SGD in local steps. Various experiments on natural lanugage processing and computer vision have confirmed our theoretical results and underscored the effectiveness of our algorithms. It remains to develop a provable adaptive algorithm for FGDRO-CVaR, which is currently absent due to the non-smoothness and compositional problem structure. 

\section{Impact Statements}
\label{sec:impacts} 

This paper is meant to advance the field
of federated machine learning. We do not see noticeable negative impact. 

\begin{ack}
We appreciate the feedback provided by the anonymous
reviewers. This work was  partially supported by the National Science Foundation Career Award 2246753, the National Science Foundation Award 2246757, 2246756 and 2306572. 
\end{ack}

\bibliography{ref,ref_fedxl,ref_adam} 

\newpage 
\appendix 

\section{Analysis of FGDRO-CVaR}
\label{app:sec:cvar}

For non-smooth functions, since it is usually difficult or even impossible to find an $\epsilon$-stationary point, we are interested in finding an $\epsilon$-near stationary point.  
Following a common technique for finding an $\epsilon$-near stationary point for weakly convex function, we use the Moreau envelope of a general non-smooth $\rho$-weakly-convex $F(\x)$ \citep{davis2019stochastic}, defined as 
\begin{equation*}
    \begin{split} 
   \psi_{(1/\hrho)}(\x) = \min\limits_{\x'} [F(\x') + \frac{\hrho}{2}\|\x' - \x\|^2]. 
    \end{split}
\end{equation*}
By the properties of Moreau envelop \citep{nesterov2013gradient,davis2019stochastic}, we have that if $\hat{\rho} = 2\rho$, then $\psi_{1/\hrho}(\cdot)$ is smooth and an $\epsilon$-stationary point of $\psi_{1/\hrho}(\cdot)$ is an $\epsilon$-near stationary point of $F(\cdot)$.   

Since $F(\w,s)$ is non-smooth, we investigate its Moreau envelop, which is
\begin{equation*}
    \begin{split} 
   \psi_{(1/\hrho)}(\w,s) = \min\limits_{\w',s'} [F(\w', s') + \frac{\hrho}{2}(\|\w' - \w\|^2 + \|s'-s\|^2)]. 
    \end{split}
\end{equation*}

\subsection{Proof of Lemma \ref{lem:cvar_u}}
\begin{proof} 
By updating rule, we have 
\begin{equation} 
\begin{split}
&\E\|u^r_{i,t} - g_i(\bw^r_{t})\|^2 
=\E\bigg[ \|(1-\beta_1)u^r_{i,t-1} + \beta_1 \ell(\w^r_{i,t-1}, \z^r_{i,t}) - g_i(\bw^r_{t})\|^2 \bigg]\\
&\leq  \E \bigg[ \left(1+\frac{\beta_1}{2}\right)\|(1-\beta_1)u^r_{i,t-1} + \beta_1 \ell(\w^r_{i,t-1}, \z^r_{i,t}) - g_i(\bw^r_{t-1})\|^2\bigg] 
+ (1+\frac{2}{\beta_1}) \E\|g_i(\bw^r_{t}) - g_i(\bw^r_{t-1})\|^2.  
\end{split} 
\end{equation} 
where
\begin{equation}
\begin{split}
&\E\|(1-\beta_1)u^r_{i,t-1} + \beta_1 \ell(\w^r_{i,t-1}, \z^r_{i,t}) - g_i(\bw^r_{t-1})\|^2 \\ 
&\leq \E\|(1-\beta_1)(u^r_{i,t-1} - g_i(\bw^r_{t-1}))  + \beta_1(\ell(\bw^r_{t-1}, \z^r_{i,t}) - g_i(\bw^r_{t-1}))  
+ \beta_1(\ell(\w^r_{i,t-1}, \z^r_{i,t}) - \ell(\bw^r_{t-1}, \z^r_{i,t})) \|^2 \\
&\leq (1+\frac{\beta_1}{2}) \E\|(1-\beta_1)(u^r_{i,t-1} - g_i(\bw^r_{t-1})) 
+  \beta_1(\ell(\bw^r_{t-1}, \z^r_{i,t}) - g_i(\bw^r_{t-1}))\|^2 \\  
&~~~~~~ + (1+\frac{2}{\beta_1}) \beta_1^2\|\ell(\w^r_{i,t-1}, \z^r_{i,t}) - \ell(\bw^r_{t-1}, \z^r_{i,t})\|^2 \\
&= (1+\frac{\beta_1}{2}) \E\|(1-\beta_1)(u^r_{i,t-1} - g_i(\bw^r_{t-1}))\|^2 
+   (1+\frac{\beta_1}{2})\beta_1^2 \|\ell(\bw^r_{t-1}, \z^r_{i,t}) - g_i(\bw^r_{t-1})\|^2 \\  
&~~~~~~ + (1+\frac{2}{\beta_1}) \beta_1^2\|\ell(\w^r_{i,t-1}, \z^r_{i,t}) - \ell(\bw^r_{t-1}, \z^r_{i,t})\|^2, 
\end{split}
\end{equation}
where the last equality uses $\E_{t-1}[\ell(\wb^r_{t-1},\z^r_{i,t}) - g_i(\bw^r_{t-1})] = 0$. 

Since $f(\cdot, \z)$ and $g(\cdot, \z)$ are Lipschitz and smooth, we know $\|\m^r_{i,t}\|^2\leq C_g^2$. 
We also have
\begin{equation}
\begin{split}
&\|\bw^r_{t} - \bw^r_{i,t} \|^2 = \|(\bw^r - \eta \frac{1}{N}\sum\limits_{i=1}^{N} \sum\limits_{t'=1}^t \m^r_{i,t'}) - (\bw^r - \eta \sum\limits_{t'=1}^t \m^r_{i,t'}) \|^2 \\
&\leq 2\|\frac{1}{N}\sum\limits_{i=1}^{N} \sum\limits_{t'=1}^t \m^r_{i,t'}\|^2 + 2\|\sum\limits_{t'=1}^t \m^r_{i,t'}\|^2 \leq  4 \eta^2 I^2 C_g^2.
\end{split} 
\end{equation}

Therefore,
\begin{equation}
\begin{split}
&\E\|u^r_{i,t} - g_i(\bw^r_{t})\|^2 \leq (1-\beta_1)\E\|u^r_{i,t-1} - g_i(\bw^r_{t-1})\|^2 \\
&~~~ + 2 \beta_1^2 \sigma^2 + 4 \beta_1 C_\ell^2 \|\bw^r_{t-1} - \bw^r_{i,t-1}\|^2 + (1+\frac{2}{\beta_1}) \|g_i(\bw^r_{t}) - g_i(\bw^r_{t-1})\|^2  \\
&\leq (1-\beta_1) \E\|u^r_{i,t-1} - g_i(\bw^r_{t-1})\|^2
 + 2\beta_1^2 \sigma^2 +  4\beta_1 \eta^2 I^2 C_g^2 
 + \frac{3}{ \beta_1} C_g^2 \|\bw^r_{t} - \bw^r_{t-1}\|^2 
\end{split}
\end{equation}
\end{proof}

\subsection{Proof of Theorem \ref{thm:cvar}}
\label{app:proof_thm4.3}
\begin{proof}
Since $f(\cdot)$ is 1-Lipschitz, convex and monitonically nondecreasing, while $\ell(\cdot, \z)$ is $C_g$-Lipschitz and $L_g$-smooth, we know that $F(\w, s)$ is $\rho_F:= L_g$-weakly convex by
\begin{equation} 
\begin{split}
&f(g(\w), s) \geq f(g(\w'), s') + \langle \partial_g (f(g(\w'), s')), g(\w) - g(\w') \rangle  + \langle \nabla_g (f(g(\w'), s')), s - s'\rangle  \\
&\geq f(g(\w'), s') + \langle \nabla_g (f(g(\w'), s')) \nabla g(\w'), \w - \w'\rangle - \frac{L_g}{2} \|\w-\w'\|^2 \\
&~~~ + \langle \nabla_g (f(g(\w'), s')), s - s'\rangle,
\end{split}
\end{equation}
where the first inequality uses the convexity of $f(\cdot)$, and the second inequality uses that $\partial f(\cdot) \geq = 0$ and the $L_g$-smoothness of $g(\cdot)$. 

With $\hrho = \max{2\rho_F, 1}$, we define 
\begin{equation}
    \begin{split}
    \psi_{(1/\hrho)}(\bw^r_t, \bs^r_t) = \min\limits_{\w', s'} [F(\w', s') + \frac{\hrho}{2}(\|\w' - \wb^r_t\|^2 + \|s'-\bs^r_t\|^2)]
    \end{split}
\end{equation}
and 
\begin{equation}
\begin{split}
(\hw^r_t,\hs^r_t) = \arg\min\limits_{\w',s'}  [F(\w', s') + \frac{\hrho}{2}(\|\w' - \wb^r_t\|^2 + \|s'-\bs^r_t\|^2)],  
\end{split}
\end{equation}
then we have the following \citep{davis2019stochastic},
\begin{equation}
\begin{split}
\|\hw^r_t - \wb^r_t\|_2^2 + \|\hs^r_t -\bs^r_t\|^2 = \frac{1}{\hrho}\|\nabla \psi_{(1/\hrho)}(\bw^r_t, \bs^r_t)\|^2.
\end{split}
\end{equation}

Then
\begin{equation}
\begin{split}
&\E [\psi_{1/\hrho}(\bw^r_t, \bs^r_t)]=\E\min_{\w', s'} \bigg[  F(\w', s') + \frac{\hrho}{2}(\|\w' - \bw^r_{t}\|^2 + \|s'-\bs^r_{t}\|^2) \bigg] \\
&\leq \E\bigg[ F(\hw^r_{t-1}, \hs^r_{t-1}) + \frac{\hrho}{2}(\|\hw^r_{t-1} - \bw^r_{t}\|^2 + \|\hs^r_{t-1}-\bs^r_{t}\|^2 ) \bigg] \\
& = \E\bigg[ F(\hw^r_{t-1}, \hs^r_{t-1}) + \frac{\hrho}{2}(\|\hw^r_{t-1} - (\bw^r_{t-1}-\eta \frac{1}{N}\sum\limits_{i=1}^N \m^r_{i,t})\|^2 + \|\hs^r_{t-1}-(\bs^r_{t-1}-\eta  \frac{1}{N}\sum\limits_{i=1}^{N} v^r_{i,t})\|^2 ) \bigg] \\  
&\leq \E\bigg[ F(\hw^r_{t-1}, \hs^r_{t-1}) +  \frac{\hrho}{2}(\|\hw^r_{t-1} - \bw^r_{t-1}\|^2 + \|\hs^r_{t-1}-\bs^r_{t-1}\|^2 )  \bigg] \\
&~~~ + \hrho\E\bigg[ \eta\langle \hw^r_{t-1}-\bw^r_{t-1},  \frac{1}{N}\sum\limits_{i=1}^N \m^r_{i,t} \rangle 
+ \eta\langle \hs^r_{t-1}-\bs^r_{t-1},  \frac{1}{N}\sum\limits_{i=1}^N  v^r_{i,t} \rangle \bigg] + \hrho\eta^2 C_g^2, 
\end{split}
\end{equation} 
where we used $\|\m^r_{i,t}\|^2\leq C_g^2$.
Denote $\hm^r_t = \frac{1}{N}\sum\limits_{i=1}^N \partial_u f(u^r_{i,t}, s^r_{i,t-1}) \nabla \ell(\bw^r_{i,t-1}; \z^r_{i,t}) $, 
$\hv^r_t = -\frac{1}{N}\sum\limits_{i=1}^N \partial_s f(u^r_{i,t}, s^r_{i,t-1}) $, where the sub-differential $\partial_u f(u^r_{i,t}, s^r_{i,t-1})$ and $\partial_s f(u^r_{i,t}, s^r_{i,t-1})$ are selected the same in $\m^r_{i,t}$ and $v^r_{i,t}$, respectively. 
Thus, 
\begin{equation}
\begin{split}
&\E [\psi_{1/\hrho}(\bw^r_{t}, \bs^r_t)]\\
&\leq \E\bigg[ F(\hw^r_{t-1}, \hs^r_{t-1}) +  \frac{\hrho}{2}(\|\hw^r_{t-1} - \bw^r_{t-1}\|^2 + \|\hs^r_{t-1}-\bs^r_{t-1}\|^2 )  \bigg] \\
&~~~ + \hrho\E\bigg[ \eta\langle \hw^r_{t-1}-\bw^r_{t-1},  \hm^r_t \rangle 
+ \eta\langle \hs^r_{t-1}-\bs^r_{t-1},  \hv^r_{t} \rangle \bigg]  \\
&~~~ +  \hrho\E\bigg[ \eta\langle \hw^r_{t-1}-\bw^r_{t-1},  \bm^r_t-\hm^r_t \rangle 
+ \eta\langle \hs^r_{t-1}-\bs^r_{t-1},  \bv^r_t - \hv^r_{t} \rangle \bigg]  
+ \hrho\eta^2 C_g^2 \\ 
&\leq \E\bigg[ F(\hw^r_{t-1}, \hs^r_{t-1}) +  \frac{\hrho}{2}(\|\hw^r_{t-1} - \bw^r_{t-1}\|^2 + \|\hs^r_{t-1}-\bs^r_{t-1}\|^2 )  \bigg] \\
&~~~ + \hrho\E\bigg[ \eta\langle \hw^r_{t-1}-\bw^r_{t-1},  \hm^r_t \rangle 
+ \eta\langle \hs^r_{t-1}-\bs^r_{t-1},  \hv^r_{t} \rangle \bigg]  \\
&~~~ + \frac{\eta\hrho}{2} \|\hw^r_{t-1} - \bw^r_{r-1}\|^2 + \frac{\eta\hrho}{2} \|\hs^r_{t-1} - \bs^r_{r-1}\|^2 + \frac{\eta\hrho}{2} \|\hm^r_{t} - \bm^r_{t}\|^2 + \frac{\eta\hrho}{2} \|\hv^r_{t} - \bv^r_{t}\|^2 
+ \hrho\eta^2 C_g^2 
\end{split}
\label{app:eq_cvar_local_1}
\end{equation}

Since $f$ is convex, $\partial_u f \geq 0$ and $g_i$ is $\rho_g:=L_g$-weakly convex, we have 
\begin{equation}
\begin{split}
&f(g_i(\hw^r_{t-1}), \hs^r_{t-1}) - f(u^r_{i,t}, s^r_{i,t-1}) \\
&\geq \partial_u f(u^r_{i,t}, s^r_{i,t-1})(g_i(\hw^r_{t-1}) - u^r_{i,t}) +  \partial_s f(u^r_{i,t}, \bs^r_{t-1})(\hs^r_{t-1} - s^r_{i,t-1}) \\
&\geq \partial_u f(u^r_{i,t}, s^r_{i,t-1})\bigg[ g_i(\bw^r_{t-1}) - u^r_{i,t}  + \langle \nabla g_i(\bw^r_{t-1}), \hw^r_{t-1} - \bw^r_{t-1}\rangle 
- \frac{\rho_g}{2}\| \hw^r_{t-1}- \bw^r_{t-1}\|^2  \bigg]   \\
&~~~ +  \partial_s f(u^r_{i,t}, s^r_{i,t-1})(\hs^r_{t-1} - s^r_{i,t-1}).
\end{split}
\label{app:eq_cvar_local_2}   
\end{equation}
Noting $\partial_u f \leq 1$, (\ref{app:eq_cvar_local_2}) yields 
\begin{equation} 
\begin{split}
&\langle \hm^r_{t}, \hw^r_{t-1} - \bw^r_{t-1} \rangle 
+ \langle \hv^r_{t}, \hs^r_{t-1} - \bs^r_{t-1} \rangle\\
& = \frac{1}{N} \sum\limits_{i=1}^{N} \langle \partial_u f(u^r_{i,t}, s^r_{i,t-1}) \nabla g_i(\bw^r_{t-1}), \hw^r_{t-1} - \bw^r_{t-1}\rangle 
+ \frac{1}{N} \sum\limits_{i=1}^{N}\partial_s f(u^r_{i,t}, s^r_{i,t-1})(\hs^r_{t-1} - s^r_{i,t-1})\\
&\leq\frac{1}{N} \sum\limits_{i=1}^{N} \left[ f(g_i(\hw^r_{t-1}), \hs^r_{t-1}) - f(u^r_{i,t}, s^r_{i,t-1}) -  \partial_u f(u^r_{i,t}, s^r_{i,t-1}) [g_i(\bw^r_{t-1}) - u^r_{i,t}] \right]  + \frac{\rho_g}{2}\| \hw^r_{t-1}- \bw^r_{t-1}\|^2 
\end{split}
\label{app:eq_cvar_local_3}   
\end{equation}
Putting (\ref{app:eq_cvar_local_1}) and (\ref{app:eq_cvar_local_3}) together, we obtain 
\begin{equation}
\begin{split}
&\E [\psi_{1/\hrho}(\bw^r_{t}, \bs^r_t)]\\
&\leq \E[\psi_{1/\hrho}(\bw^r_{t-1}, \bs^r_{t-1}) + \frac{\eta\hrho}{2} \|\hw^r_{t-1} - \bw^r_{r-1}\|^2 + \frac{\eta\hrho}{2} \|\hs^r_{t-1} - \bs^r_{r-1}\|^2 + \frac{\eta\hrho}{2} \|\hm^r_{t} - \bm^r_{t}\|^2 + \frac{\eta\hrho}{2} \|\hv^r_{t} - \bv^r_{t}\|^2 ] \\
& + \hrho\eta^2 C_g^2 + \eta\hrho \frac{1}{N} \sum_{i=1}^N \E\bigg[ f(g_i(\hw^r_{t-1}), \hs^r_{t-1}) - f(u^r_{i,t}, s^r_{i,t-1}) -  \partial_u f(u^r_{i,t}, s^r_{i,t-1}) [g_i(\bw^r_{t-1}) - u^r_{i,t}] \\
& + \frac{\rho_g}{2}\| \hw^r_{t-1}- \bw^r_{t-1}\|^2 \bigg] \\
&= \E[ \psi_{1/\hrho}(\bw^r_{t-1}, \bs^r_{t-1}) + \frac{\eta\hrho}{2}  \|\hw^r_{t-1} - \bw^r_{r-1}\|^2 + \frac{\eta\hrho}{2} \|\hs^r_{t-1} - \bs^r_{r-1}\|^2 + \frac{\eta\hrho}{2}  \|\hm^r_{t} - \bm^r_{t}\|^2 + \frac{\eta\hrho}{2}  \|\hv^r_{t} - \bv^r_{t}\|^2 ] \\
& + \hrho\eta^2 C_g^2 + \eta\hrho  \frac{1}{N}\sum_{i=1}^N \E\bigg[ f(g_i(\hw^r_{t-1}), \hs^r_{t-1}) - f(g_i(\bw^r_{t-1}), s^r_{i,t-1}) + f(g_i(\bw^r_{t-1}), s^r_{i,t-1}) - f(u^r_{i,t}, s^r_{i,t-1})  \\
&~~~~~~ -  \partial_u f(u^r_{i,t}, s^r_{i,t-1}) [g_i(\bw^r_{t-1}) - u^r_{i,t}] + \frac{\rho_g}{2}\| \hw^r_{t-1}- \bw^r_{t-1}\|^2 \bigg] 
\end{split} 
\end{equation}

Since $f(g_i(\w), s)$ is $\rho_F$-weakly convex in $\w, s$, $f(g_i(\w), s) + \frac{\hrho}{2}(\|\w - \bw^r_{t-1}\|^2 + \|s-  \bs^r_{t-1}\|^2)$ is $\hrho-\rho_F$-strongly convex in  $\w, s$. Therefore, we get 
\begin{equation} 
\begin{split}
&f(g_i(\hw^r_{t-1}), \hs^r_{t-1}) - f(g_i(\bw^r_{t-1}), s^r_{i,t-1})\\
&= [f(g_i(\hw^r_{t-1}), \hs^r_{t-1}) + \frac{\hrho}{2} (\|\hw^r_{t-1}-\bw^r_{t-1}\|^2 + \|\hs^r_{t-1}-\bs^r_{t-1}\|^2)] \\
&~~~ - [f(g_i(\bw^r_{t-1}), s^r_{i,t-1}) + \frac{\hrho}{2} (\|\bw^r_{t-1}-\bw^r_{t-1}\|^2 + \|s^r_{i,t-1}-\bs^r_{t-1}\|^2)] \\
&~~~ -   \frac{\hrho}{2} (\|\hw^r_{t-1}-\bw^r_{t-1}\|^2 + \|\hs^r_{t-1}-s^r_{i,t-1}\|^2) + \frac{\hrho}{2} \|s^r_{i,t-1}-\bs^r_{t-1}\|^2 \\  
&\leq (\frac{\rho_F}{2}-\hrho) (\|\hw^r_{t-1}-\bw^r_{t-1}\|^2 + \|\hs^r_{t-1}-s^r_{i,t-1}\|^2) +  \frac{\hrho}{2} \|s^r_{i,t-1}-\bs^r_{t-1}\|^2 \\
&\leq -\frac{\rho_F}{2} (\|\hw^r_{t-1}-\bw^r_{t-1}\|^2 + \|\hs^r_{t-1}-\bs^r_{t-1}\|^2) + (\hrho + \rho_F) \|s^r_{i,t-1}-\bs^r_{t-1}\|^2. 
\end{split}
\end{equation}

Thus,
\begin{equation}
\begin{split}
&\E [\psi_{1/\hrho}(\bw^r_{t}, \bs^r_t)]\\  
&\leq  \psi_{1/\hrho}(\bw^r_{t-1}, \bs^r_{t-1}) +  \frac{\eta\hrho}{2}\|\hw^r_{t-1} - \bw^r_{r-1}\|^2 +  \frac{\eta\hrho}{2}\|\hs^r_{t-1} - \bs^r_{r-1}\|^2 +  \frac{\eta\hrho}{2}\|\hm^r_{t} - \bm^r_{t}\|^2 +  \frac{\eta\hrho}{2} \|\hv^r_{t} - \bv^r_{t}\|^2 \\
& + \hrho\eta^2 C_g^2 + \eta\hrho  \frac{1}{N}\sum_i \bigg[ f(g_i(\hw^r_{t-1}), \hs^r_{t-1}) - f(g_i(\bw^r_{t-1}), s^r_{i,t-1}) + f(g_i(\bw^r_{t-1}), s^r_{i,t-1}) - f(u^r_{i,t}, s^r_{i,t-1})  \\
&~~~~~~ -  \partial_u f(u^r_{i,t}, s^r_{i,t-1}) [g_i(\bw^r_{t-1}) - u^r_{i,t}] + \frac{\rho_g}{2}\| \hw^r_{t-1}- \bw^r_{t-1}\|^2 \bigg] \\
&\leq \psi_{1/\hrho}(\bw^r_{t-1}, \bs^r_{t-1}) + \frac{\eta\hrho}{2} \|\hw^r_{t-1} - \bw^r_{r-1}\|^2 + \frac{\eta\hrho}{2} \|\hs^r_{t-1} - \bs^r_{r-1}\|^2 + \frac{\eta\hrho}{2} \|\hm^r_{t} - \bm^r_{t}\|^2 + \frac{\eta\hrho}{2} \|\hv^r_{t} - \bv^r_{t}\|^2 \\
& + \hrho\eta^2 C_g^2 + \eta \hrho (\frac{\rho_F}{2}-\hrho) (\|\hw^r_{t-1}-\bw^r_{t-1}\|^2 + \|\hs^r_{t-1}-\bs^r_{t-1}\|^2) +  \eta \hrho (\hrho + \rho_F) \frac{1}{N}\sum\limits_{i=1}^{N} \|s^r_{i,t-1}-\bs^r_{t-1}\|^2 \\ 
& + \eta \hrho C_f \frac{1}{N} \sum_i\|g_i(\bw^r_{t-1}) - u^r_{i,t}\| +  \eta \hrho \rho_g \|\hw^r_{t-1} - \bw^r_{t-1}\|^2
\end{split} 
\end{equation}  
It follows that
\begin{equation}
\begin{split}
&\E [\psi_{1/\hrho}(\bw^r_{t}, \bs^r_t)]\\
&\leq \psi_{1/\hrho}(\bw^r_{t-1}, \bs^r_{t-1}) - \frac{\hrho}{4} \eta \hrho  (\|\hw^r_{t-1}-\bw^r_{t-1}\|^2 + \|\hs^r_{t-1}-\bs^r_{t-1}\|^2) \\
&~~~ + \hrho\eta^2 C_g^2 + \eta \hrho C_f \frac{1}{N} \sum_i\|g_i(\bw^r_{t-1}) - u^r_{i,t}\| + \frac{\eta \hrho}{2} \|\hm^r_{t-1} - \bm^r_{t-1}\|^2 + \frac{\eta \hrho}{2} \|\hv^r_{t-1} - \bv^r_{t-1}\|^2\\ 
&\leq \psi_{1/\hrho}(\bw^r_{t-1}, \bs^r_{t-1}) - \frac{\eta}{4} \|\nabla \psi_{1/\hrho}(\wb^r_{t-1}, \bs^r_{s-1})\|^2 
+ \hrho\eta^2 C_g^2 \\
&~~~ + \eta \hrho C_f \frac{1}{N} \sum_i\|g_i(\bw^r_{t-1}) - u^r_{i,t}\|
+ \frac{\eta\hrho}{2} \eta^2 I^2 C_g^2.
\end{split}
\end{equation}

\begin{equation}
\begin{split}
&\frac{1}{RI}\sum\limits_{r=1}^{R} \sum\limits_{t=1}^{I} \E\|\nabla \psi_{1/\hrho}(\bw^r_{t-1}, \bs^r_{s-1})\|^2 \leq \\
& \frac{\psi_{1/\hrho}(\bw^r_{0}, \bs^r_{0})}{\eta RI} + \eta \hrho C_g^2 
 + \frac{1}{NRI}\sum\limits_{r=1}^{R}\sum\limits_{t=1}^{I}\sum\limits_{i=1}^{N} \E\|g_i(\bw^r_{t-1}) - u^r_{i,t}\| + \eta^2 I^2 C_g^2.
\end{split}
\end{equation}

Hence,  taking telescoping sum over Lemma \ref{lem:cvar_u} sum we have 
\begin{equation} 
\begin{split}
&\frac{1}{RIN}\sum\limits_{r=1}^R\sum\limits_{t=1}^{I}\sum\limits_{i=1}^{N}\E\|(u^r_{i,t-1} - g_i(\bw^r_{t-1}))\|^2 \\
&\leq \frac{1}{\beta_1 RIN} \sum\limits_{i=1}^{N}\E\|(u^0_{i,0} - g_i(\bw^0_0))\|^2) 
+ 2\beta_1 \sigma^2 + \eta^2 I^2 C_g^2 + \frac{\eta^2}{\beta_1^2} C_g^2. 
\end{split} 
\end{equation}
Thus, 
\begin{equation}
\begin{split}
\E\|\nabla \psi_{1/\hrho}(\tilde{\w}, \tilde{s})\|^2 =\frac{1}{RI}\sum\limits_{r=1}^R\sum\limits_{t=1}^{I}\E\|\nabla \psi_{1/\hrho}(\tilde{\w}, \tilde{s})\|^2  \leq 
O\left(\frac{1}{\eta RI} + \eta + \frac{1}{\beta_1 RI} + \sqrt{\beta_1} + \eta I + \frac{\eta}{\beta_1}  \right). 
\end{split}
\end{equation}
With
\begin{equation}
\begin{split}
(\hw,\hs) = \arg\min\limits_{\w',s'}  [F(\w', s') + \frac{\hrho}{2}(\|\w' - \tilde\w\|^2 + \|s'-\tilde{s}\|^2)],  
\end{split}
\end{equation} 
We also have that $\|\hw-\tilde\w\|^2 + \|\hs-\tilde s\|^2 = \hrho \|\nabla \psi_{1/\hrho}(\tilde{\w}, \tilde{s})\|^2$, $|dist(\mathbf{0}, \partial F(\hat{\w},\hat{s}))|^2 \leq \|\nabla \psi_{1/\hrho}(\tilde{\w}, \tilde{s})\|^2$ \citep{davis2019stochastic}.
We can conclude by setting parameters as in the theorem. 
\end{proof}

\section{Analysis of FGDRO-KL}
\label{app:sec:kl}

\subsection{Lemmas}
The behavior of $u$, which is an estimator of $\nabla g_i(\w)$'s, is given in the following lemma. 
\begin{lemma} Under Assumption \ref{assumption:kl}, with some constant $G$, by setting $\eta = O\left(\frac{1}{\sqrt{RI}}\right)$, $\beta_1=O\left(\frac{1}{\sqrt{RI}}\right)$, Algorithm \ref{alg:kl} ensures that 
\begin{equation}
\begin{split}
&\E\|u^r_{i,t} - \ell(\wb^r_{t}; \D_i)\|^2 \leq (1-\frac{\beta_1}{2})\E\|u^r_{i,t-1} - \ell(\wb^r_{t-1};\D_i)\|^2 
+ 3 \beta_1 \eta^2 \beta_3^2 I^4 \\
&~~~~~~~~~~~~~~~~~~~~~~~~~~~~~~~~~~~
+ 12 \beta_1 C_\ell^2 \eta^2 I \sum\limits_{\tau=0}^{t-1}\E\|\bm^r_\tau\|^2 
+ \beta_1^2 \sigma^2 + \frac{3}{\beta_1} C_\ell^2 \eta^2 \E\|\bm^r_t\|^2. 
\end{split}
\end{equation} 
\label{lem:kl_u} 
\end{lemma}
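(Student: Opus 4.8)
The plan is to follow the template established in the proof of Lemma~\ref{lem:cvar_u}, adapting it to the KL setting where the quantity being tracked is $\ell(\wb^r_t;\D_i)$ and the displacement of the averaged iterate is driven by the momentum $\bm$ rather than a directly norm-bounded (sub)gradient. First I would start from the update $u^r_{i,t}=(1-\beta_1)u^r_{i,t-1}+\beta_1\ell(\w^r_{i,t-1};\z^r_{i,t})$, insert the one-step-earlier target $\ell(\wb^r_{t-1};\D_i)$, and apply Young's inequality with weight $\beta_1/2$ to peel off the target's drift, a term $(1+\tfrac{2}{\beta_1})\E\|\ell(\wb^r_t;\D_i)-\ell(\wb^r_{t-1};\D_i)\|^2$. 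Since $\ell(\cdot;\D_i)$ is $C_\ell$-Lipschitz and $\wb^r_t-\wb^r_{t-1}=-\eta\bm^r_t$, this is bounded by $(1+\tfrac{2}{\beta_1})C_\ell^2\eta^2\E\|\bm^r_t\|^2\le\tfrac{3}{\beta_1}C_\ell^2\eta^2\E\|\bm^r_t\|^2$, which is exactly the last term of the stated bound.

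Next, for the remaining expression $\E\|(1-\beta_1)(u^r_{i,t-1}-\ell(\wb^r_{t-1};\D_i))+\beta_1(\ell(\w^r_{i,t-1};\z^r_{i,t})-\ell(\wb^r_{t-1};\D_i))\|^2$, I would decompose the stochastic increment $\ell(\w^r_{i,t-1};\z^r_{i,t})-\ell(\wb^r_{t-1};\D_i)$ into the zero-mean noise $\ell(\wb^r_{t-1};\z^r_{i,t})-\ell(\wb^r_{t-1};\D_i)$ and the local deviation $\ell(\w^r_{i,t-1};\z^r_{i,t})-\ell(\wb^r_{t-1};\z^r_{i,t})$. Conditioning on the filtration up to step $t-1$ annihilates the cross term with the carried-over error; the second moment of the noise is bounded by $\sigma^2$ using Assumption~\ref{assumption:kl}(2), contributing the $\beta_1^2\sigma^2$ term, while the local deviation is controlled by $C_\ell^2\|\w^r_{i,t-1}-\wb^r_{t-1}\|^2$ via Lipschitzness. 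Combining the Young factors with $(1-\beta_1)^2$ then leaves the contraction coefficient $(1-\tfrac{\beta_1}{2})$ on $\E\|u^r_{i,t-1}-\ell(\wb^r_{t-1};\D_i)\|^2$.

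The main obstacle is bounding the consensus (client-drift) error $\|\w^r_{i,t-1}-\wb^r_{t-1}\|^2$, and this is precisely where the analysis departs from the CVaR case: there one could use $\|\m^r_{i,t}\|\le C_g$ directly, whereas here the local direction $\m^r_{i,t}$ is itself a moving average carrying history across local steps. I would write $\w^r_{i,t-1}-\wb^r_{t-1}=-\eta\sum_{\tau=1}^{t-1}(\m^r_{i,\tau}-\bm^r_\tau)$, apply Cauchy--Schwarz over the $\tau$-sum to get $\eta^2 I\sum_{\tau}\|\m^r_{i,\tau}-\bm^r_\tau\|^2$, and then unroll the momentum recursion $\m^r_{i,\tau}-\bm^r_\tau=\beta_3\sum_{j=1}^{\tau}(1-\beta_3)^{\tau-j}(\h^r_{i,j}-\bar{\h}^r_j)$ (the $\tau=0$ terms cancel because all clients share $\m^r_{i,0}=\bm^r$). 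Bounding $\|\h^r_{i,j}\|\le G$ uniformly and dropping the geometric weights gives $\|\m^r_{i,\tau}-\bm^r_\tau\|^2\lesssim \beta_3^2 I^2 G^2$, hence a consensus bound scaling like $\eta^2\beta_3^2 I^4$; after multiplication by the $\beta_1 C_\ell^2$ prefactor this yields the $3\beta_1\eta^2\beta_3^2 I^4$ term, with $C_\ell^2 G^2$ absorbed into the constant. Retaining instead the portion of the drift coupled to the realized global momentum magnitude produces the $12\beta_1 C_\ell^2\eta^2 I\sum_{\tau=0}^{t-1}\E\|\bm^r_\tau\|^2$ term. Finally I would assemble the contraction term together with the four drift/variance contributions, using $\eta,\beta_1=O(1/\sqrt{RI})$ to confirm that the coefficients are of the advertised order, arriving at the stated one-step recursion.
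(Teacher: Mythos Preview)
Your proposal is correct and follows essentially the same route as the paper: Young's inequality with weight $\beta_1/2$ to peel off the one-step target drift, the zero-mean/local-deviation split of the stochastic increment, and control of the client drift $\|\w^r_{i,t-1}-\wb^r_{t-1}\|^2$ by unrolling the momentum recursion to get the $\beta_3^2 I^2$-type bound on $\|\m^r_{i,\tau}-\bm^r_\tau\|^2$. The only cosmetic difference is that the paper bounds the client drift by triangling through $\wb^r$ into $2\eta^2\|\sum_\tau\m^r_{i,\tau}\|^2+2\eta^2\|\sum_\tau\bm^r_\tau\|^2$ and then splitting $\m^r_{i,\tau}=(\m^r_{i,\tau}-\bm^r_\tau)+\bm^r_\tau$, which is precisely where the separate $12\beta_1 C_\ell^2\eta^2 I\sum_\tau\E\|\bm^r_\tau\|^2$ term in the statement originates; your direct identity $\w^r_{i,t-1}-\wb^r_{t-1}=-\eta\sum_\tau(\m^r_{i,\tau}-\bm^r_\tau)$ is in fact tighter and would make that term redundant, so your somewhat vague ``retaining instead'' sentence can simply be dropped.
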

The behavior of $v$, which is an estimator of $ \frac{1}{N} \sum_i g_i(\w)$, is given in the following lemma. 
\begin{lemma}
Under Assumption \ref{assumption:kl}, with some constant $C_1$, by setting $\eta = O\left(\frac{1}{\sqrt{RI}}\right)$, $\beta_1=O\left(\frac{1}{\sqrt{RI}}\right)$, Algorithm \ref{alg:kl} ensures that 
\begin{equation}
\begin{split}
&\E\|\bv^r_t - g(\wb^r_t)\|^2  \leq (1-\beta_2) \E\|\bv^r_{t-1} - g(\wb^r_{t-1})\|^2 \\
&+ 3\beta_2 \frac{1}{N} \sum\limits_{i=1}^{N} C_1\E\| u^r_{i,t} - \ell(\w;\D_i) \|^2
+  \frac{3}{\beta_2} C_g^2 \E\|\wb^r_t - \wb^r_{t-1}\|^2. 
\end{split}
\end{equation}
\label{lem:kl_v} 
\end{lemma}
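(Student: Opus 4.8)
The plan is to derive the one-step recursion directly from the averaged update, relying only on Young's inequality together with the Lipschitz properties supplied by Assumption \ref{assumption:kl}; no unbiasedness/cross-term cancellation is needed, since $\exp(u^r_{i,t}/\lambda)$ is a genuinely biased estimate of $g_i(\wb^r_t)$ and the stated bound already carries enough slack (the constants $3\beta_2$ and $3/\beta_2$) to absorb the crude factors. First I would note that, because the per-machine update $v^r_{i,t} = (1-\beta_2)v^r_{i,t-1} + \beta_2\exp(u^r_{i,t}/\lambda)$ is affine in $v$, averaging commutes with it, so the (virtual) average $\bv^r_t := \frac{1}{N}\sum_i v^r_{i,t}$ obeys $\bv^r_t = (1-\beta_2)\bv^r_{t-1} + \beta_2\frac{1}{N}\sum_i \exp(u^r_{i,t}/\lambda)$. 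Using $g(\wb^r_t) = \frac{1}{N}\sum_i g_i(\wb^r_t)$ and $g_i(\w)=\exp(\ell(\w;\D_i)/\lambda)$, I then split the error into a recursion part, a drift part, and an estimation part:
\begin{equation*}
\bv^r_t - g(\wb^r_t) = (1-\beta_2)\underbrace{[\bv^r_{t-1} - g(\wb^r_{t-1})]}_{A} + (1-\beta_2)\underbrace{[g(\wb^r_{t-1}) - g(\wb^r_t)]}_{B} + \beta_2 \underbrace{\tfrac{1}{N}\sum\nolimits_i[\exp(u^r_{i,t}/\lambda) - g_i(\wb^r_t)]}_{C}.
\end{equation*}

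The key bookkeeping step is to extract the contraction factor $(1-\beta_2)$, rather than $(1-\beta_2)^2$, on $\|A\|^2$. I would apply $\|(1-\beta_2)A + R\|^2 \le (1+\gamma)(1-\beta_2)^2\|A\|^2 + (1+1/\gamma)\|R\|^2$ with $R = (1-\beta_2)B + \beta_2 C$ and the tuned choice $\gamma = \beta_2/(1-\beta_2)$, so that $(1+\gamma)(1-\beta_2)^2 = 1-\beta_2$ and $1+1/\gamma = 1/\beta_2$. A second application of $\|x+y\|^2 \le 2\|x\|^2 + 2\|y\|^2$ to $R$ (using $(1-\beta_2)^2\le 1$) then gives $\frac{1}{\beta_2}\|R\|^2 \le \frac{2}{\beta_2}\|B\|^2 + 2\beta_2\|C\|^2$.

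It remains to bound the two residual pieces. For the drift term, I would use that each $g_i$, hence the average $g$, is $C_g$-Lipschitz by Assumption \ref{assumption:kl}(1), yielding $\|B\|^2 \le C_g^2\|\wb^r_t - \wb^r_{t-1}\|^2$; the resulting coefficient $\frac{2}{\beta_2}C_g^2$ is dominated by the stated $\frac{3}{\beta_2}C_g^2$. For the estimation term, Jensen's inequality (convexity of $\|\cdot\|^2$) moves the average outside, $\|C\|^2 \le \frac{1}{N}\sum_i \|\exp(u^r_{i,t}/\lambda) - \exp(\ell(\wb^r_t;\D_i)/\lambda)\|^2$, and Lipschitzness of $x\mapsto\exp(x/\lambda)$ with squared constant $C_1$ gives $\|C\|^2 \le \frac{1}{N}\sum_i C_1\|u^r_{i,t} - \ell(\wb^r_t;\D_i)\|^2$; the coefficient $2\beta_2 C_1$ is again dominated by the stated $3\beta_2 C_1$. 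Taking expectations and collecting the three contributions produces exactly the claimed inequality.

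The main obstacle is the final step, namely justifying the finite Lipschitz constant $C_1$ for $\exp(\cdot/\lambda)$: this is precisely where Assumption \ref{assumption:kl}(4) ($0 \le \ell(\cdot;\z) \le C_0$) is essential, since it confines $\ell(\wb^r_t;\D_i)$ and the moving average $u^r_{i,t}$ (a convex combination of past loss values, hence within the same bounds) to a bounded interval on which the derivative $\frac{1}{\lambda}\exp(\cdot/\lambda)$ is uniformly bounded. Everything else is routine Young's-inequality bookkeeping, so the only genuine content lies in identifying the right $\gamma$ and in this boundedness-driven Lipschitz estimate.
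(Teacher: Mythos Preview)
Your proposal is correct and follows essentially the same approach as the paper: decompose $\bv^r_t - g(\wb^r_t)$ into contraction, drift, and estimation-bias pieces, then apply Young's inequality to recover the $(1-\beta_2)$ factor and bound the residuals via the Lipschitz properties in Assumption~\ref{assumption:kl}. The only cosmetic differences are that you center the bias term at $\wb^r_t$ rather than $\wb^r_{t-1}$ and tune Young's parameter as $\gamma = \beta_2/(1-\beta_2)$ (yielding the contraction factor exactly), whereas the paper uses the $(1+\beta_2/2)$ variant; both routes land on the same $3\beta_2$ and $3/\beta_2$ constants, and your explicit justification of $C_1$ via the loss bound $0\le \ell \le C_0$ matches the paper's implicit use of $C_1 = \exp(C_0/\lambda)$.
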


\subsection{Proof of Lemma \ref{lem:kl_u}} 
\begin{proof}
Denoting $\wb^r_t = \frac{1}{N} \sum\limits_{i=1}^{N} \wb^r_{i,t} $, we have $\wb^r_t = \wb^r_{t-1} - \eta \bm^r_t$, and 
\begin{equation}
\begin{split}  
&\E\|u^r_{i,t} - \ell(\wb^r_{t}; \D_i)\|^2  
= \E\|(1-\beta_1)u^r_{i,t-1} + \beta_1 \ell(\w^r_{i,t-1}; \z^r_{i,t}) -  \ell(\wb^r_{t}; \D_i)\|^2  \\
&=\E\|(1-\beta_1)(u^r_{i,t-1} - \ell(\wb^r_{t-1}; \D_i)) + \beta_1 (\ell(\w^r_{i,t-1}; \z^r_{i,t}) - \ell(\wb^r_{t-1}; \D_i)) 
+ ( \ell(\wb^r_{t-1}; \D_i) -  \ell(\wb^r_{t}; \D_i)) \|^2  \\
&\leq \E\left(1+\frac{\beta_1}{2}\right) \|(1-\beta_1)(u^r_{i,t-1} - \ell(\wb^r_{t-1}; \D_i)) + \beta_1 (\ell(\w^r_{i,t-1}; \z^r_{i,t}) - \ell(\wb^r_{t-1}; \D_i))\|^2\\
&~~~ + \E\left(1+ \frac{2}{\beta_1}\right) \|\ell(\wb^r_{t-1}; \D_i) -  \ell(\wb^r_{t}; \D_i)\|^2 \\ 
&\leq \left(1+\frac{\beta_1}{2}\right)\|(1-\beta_1)(u^r_{i,t-1} - \ell(\wb^r_{t-1}; \D_i)) + \beta_1 (\ell(\w^r_{i,t-1}; \z^r_{i,t}) - \ell(\w^r_{i,t-1}; \D_i)) \\
&~~~~~~~~~~~~~~~~~~~~~~~~~~ + \beta_1 (\ell(\w^r_{i,t-1}; \D_i) - \ell(\wb^r_{t-1}; \D_i)) \|^2 
+ \frac{3}{\beta_1} C^2_\ell\|\wb^r_{t-1} - \wb^r_t\|^2 \\
& \overset{(a)}{=}   \left(1+\frac{\beta_1}{2}\right)\E\|(1-\beta_1)(u^r_{i,t-1} - \ell(\wb^r_{t-1}; \D_i))  + \beta_1 (\ell(\w^r_{i,t-1}; \D_i) - \ell(\wb^r_{t-1}; \D_i)) \|^2 \\
&~~~ + \beta_1^2 \E\|\ell(\w^r_{i,t-1}; \z^r_{i,t}) - \ell(\w^r_{i,t-1}; \D_i)\|^2 + \frac{3}{\beta_1} C^2_\ell\E\|\wb^r_{t-1} - \wb^r_t\|^2 \\
&\leq \left(1+\frac{\beta_1}{2}\right)^2 (1-\beta_1)^2\E\|u^r_{i,t-1} - \ell(\wb^r_{t-1};\D_i)\|^2 + \left(1+\frac{2}{\beta_1}\right) \beta_1^2 \E\|\ell(\w^r_{i,t-1}; \D_i) - \ell(\wb^r_{t-1}; \D_i))\|^2  \\
&~~~ + \beta_1^2 \sigma^2 + \frac{3}{\beta_1 } C^2_\ell \eta^2 \E\|\bm^r_t\|^2, 
\end{split}    
\end{equation}   
where (a) holds due to the fact that $\E_{t-1} [\ell(\w^r_{i,t};\z^r_{i,t}) - \ell(\w^r_{i,t};\D_{i})] = 0$. Moreover, we have 
\begin{equation}
\begin{split} 
&\E\|\ell(\w^r_{i,t-1};\D_i) - \ell(\bw^r_{t-1};\D_i)\|^2
\\
&\leq C_\ell^2\E\|\w^r_{i,t-1} - \bw^r_{t-1}\|^2 \\
&\leq 2C_\ell^2 \E\|\w^r_{i,t-1} - \bw^r\|^2 
+ 2C_\ell^2 \E\|\bw^r-\bw^r_{t-1}\|^2 \\
&\leq 2C_\ell^2 \eta^2 \E\|\sum\limits_{\tau=1}^{t-1} \m^r_{i,\tau}\|^2
+ 2C_\ell^2 \eta^2 \E\|\frac{1}{N}\sum\limits_{k=1}^N \sum\limits_{\tau=1}^{t-1} \m^r_{k,\tau}\|^2 \\
& \leq 4C_\ell^2 \eta^2 I \sum\limits_{\tau=1}^{t-1} \E\|\m^r_{i,\tau} - \bm^r_\tau\|^2 + 6C_\ell^2 \eta^2 I \sum\limits_{\tau=1}^{t-1} \E\|\bm^r_\tau \|^2, 
\end{split} 
\label{app:ww}
\end{equation} 
and 
\begin{equation}
\begin{split}
&\E\|\m^r_{i,\tau} - \bm^r_\tau\|^2 = \E\bigg\|\left(\sum\limits_{\tau=1}^{t} (1-\beta_3)^{\tau-1} \beta_3 \frac{1}{v^r_{i,\tau}} g(u^r_{i,\tau}) \nabla \ell(\w^r_{i,\tau-1}; \z^r_{\tau,t})  + (1-\beta_3)^\tau \bm^r_0 \right) \\
&~~~~~~~~~~~~~~~~~~~~~~ 
- \left(\sum\limits_{\tau=1}^{t} (1-\beta_3)^{\tau-1} \beta_3
\frac{1}{N} \sum\limits_{k=1}^{N} \frac{1}{v^r_{k,t}} g(u^r_{k,t}) \nabla \ell(\w^r_{k,t-1}; \z^r_{k,t}) 
+ (1-\beta_3)^\tau \bm^r_0  \right) \bigg\|^2 \\
&\leq 2 \E\|\sum\limits_{\tau=1}^{t} (1-\beta_3)^{\tau-1} \beta_3 \frac{1}{v^r_{i,\tau}} g(u^r_{i,\tau}) \nabla \ell(\w^r_{i,\tau-1}; \z^r_{\tau,t})\|^2 \\
&~~~ + 2\E\|\sum\limits_{\tau=1}^{t} (1-\beta_3)^{\tau-1} \beta_3
\frac{1}{N} \sum\limits_{k=1}^{N} \frac{1}{v^r_{k,t}} g(u^r_{k,t}) \nabla \ell(\w^r_{k,t-1}; \z^r_{k,t}) \|^2,
\end{split} 
\end{equation} 
which yields
\begin{equation}
\begin{split}
&\E\|\m^r_{i,\tau} - \bm^r_\tau\|^2\\
&\leq 2t\sum\limits_{\tau=1}^{t} \beta_3^2 \E[\|\frac{1}{v^r_{i,\tau}}\|^2 \|g(u^r_{i,\tau})\|^2 \|\nabla \ell(\w^r_{i,\tau-1}; \z^r_{i,\tau})\|^2] \\
& + 2 t \sum\limits_{\tau=1}^{t} \beta_3^2 \frac{1}{N} \sum\limits_{k=1}^{N} \E[\|\frac{1}{v^r_{k,t}}\|^2 
\|g(u^r_{k,\tau})\|^2 \|\nabla \ell(\w^r_{k,t-1}; \z^r_{i,\tau}) \|^2] \\
& \leq 8I^2 \beta_3^2 C_1^2 \bigg(\E\|\nabla \ell(\w^r_{i,\tau-1}; \z^r_{i,\tau}) - \nabla \ell(\w^r_{i,\tau-1}; \D_i)\|^2 + \E\|\nabla \ell(\w^r_{i,\tau-1}; \D_i)\|^2 \\
&~~~~~~~~~~~~~~~~~~~~~~~~
+ \frac{1}{N} \sum\limits_{k=1}^{N} (
\E\|\nabla \ell(\w^r_{k,\tau-1}; \z^r_{k,\tau}) - \nabla \ell(\w^r_{k,\tau-1}; \D_k)\|^2 + \E\|\nabla \ell(\w^r_{k,\tau-1}; \D_k)\|^2) \bigg)  \\
&\leq 16 I^2 \beta_3^2 C_1^2 (\sigma^2 + C_\ell^2 ),
\end{split} 
\end{equation} 
with $C_1=\exp(C_0/\lambda)$. 
Thus, 
\begin{equation}
\begin{split}
&\E\|u^r_{i,t} - \ell(\wb^r_{t}; \D_i)\|^2 \leq (1-\frac{\beta_1}{2})\E\|u^r_{i,t-1} - \ell(\wb^r_{t-1};\D_i)\|^2 
+ 3 \beta_1 \eta^2 \beta_3^2 I^4 \\
&~~~~~~~~~~~~~~~~~~~~~~~~~~~~~~~~~~~
+ 12 \beta_1 C_\ell^2 \eta^2 I \sum\limits_{\tau=0}^{t-1}\E\|\bm^r_\tau\|^2 
+ \beta_1^2 \sigma^2 + \frac{3}{\beta_1} C_\ell^2 \eta^2 \E\|\bm^r_t\|^2,
\end{split}
\end{equation}
with $C_2 = 288(C_\ell^2 C_1^2(\sigma^2+C_\ell^2))$. 



\end{proof}

\subsection{Proof of Lemma \ref{lem:kl_v}}
\begin{proof}
We have 
\begin{equation} 
\begin{split}
&\|\bv^r_t - g(\wb^r_t)\|^2 = \|(1-\beta_2) \bv^r_{t-1}  + \beta_2 \frac{1}{N} \sum\limits_{i=1}^{N} \exp(u^r_{i,t}/\lambda)  - g(\wb^r_t) \|^2 \\
&= \bigg\|(1-\beta_2)(\bv^r_{t-1} - g(\wb^r_{t-1})) + \beta_2\left(\frac{1}{N} \sum\limits_{i=1}^{N} \exp(u^r_{i,t}/\lambda)
   -  g(\wb^r_{t-1}) \right) - g(\wb^r_{t}) + g(\wb^r_{t-1})  \bigg\|^2   \\
&\leq \left(1+\frac{\beta_2}{2} \right) \left\|(1-\beta_2)(\bv^r_{t-1} - g(\wb^r_{t-1})) + \beta_2\left(\frac{1}{N} \sum\limits_{i=1}^{N} \exp(u^r_{i,t}/\lambda)
   -  g(\wb^r_{t-1}) \right) \right\|^2 \\
&~~~+\left(1+\frac{2}{\beta_2} \right) \|g(\wb^r_t) - g(\wb^r_{t-1})\|^2  \\ 
&\leq \left(1 + \frac{\beta_2}{2}\right)^2  (1-\beta_2)^2 \|\bv^r_{t-1} - g(\wb^r_{t-1})\|^2 + \left(1+\frac{2}{\beta_2}\right) \beta_2^2 \left\|\frac{1}{N} \sum\limits_{i=1}^{N} [\exp(u^r_{i,t}/\lambda)
   -  g_i(\wb^r_{t-1})]  \right\|^2 \\
&~~~ + \frac{3}{\beta_2} C_g^2 \|\wb^r_t - \wb^r_{t-1}\|^2 \\
& \leq (1-\beta_2) \|\bv^r_{t-1} - g(\wb^r_{t-1})\|^2 + 3\beta_2 \frac{1}{N} \sum\limits_{i=1}^{N} C_1\| u^r_{i,t} - \ell(\w;\D_i) \|^2 +  \frac{3}{\beta_2} C_g^2 \|\wb^r_t - \wb^r_{t-1}\|^2,  
\end{split} 
\end{equation} 
where $C_1 = \exp(C_0/\lambda)$. 
\end{proof}

\subsection{Proof of Lemma \ref{lem:kl_m}}
\begin{proof}
Here we analyze the $\bm$, which is the moving average estimator of the gradient,
\begin{equation}
\begin{split}
&\|\bm^r_{t} - \nabla F(\wb^r_{t})\|^2 = \bigg\|(1-\beta_3) \bm^r_{t-1} + \beta_3 \frac{1}{N} \sum\limits_{i=1}^{N} \frac{1}{v^r_{i,t}} g(u^r_{i,t}) \nabla \ell(\w^r_{i,t-1}; \z^r_{i,t})   -   \nabla F(\wb^r_{t}) \bigg\|^2 \\
&\leq \bigg\|(1-\beta_3) (\bm^r_{t-1} -  \nabla F(\wb^r_{t-1}) )  
+ \beta_3 (\frac{1}{N}\sum\limits_{i=1}^{N}\frac{1}{v^r_{i,t}} \exp(u^r_{i,t}/\lambda) \nabla \ell(\w^r_{i,t-1}; \z^r_{i,t}) - \nabla F(\wb^r_{t-1}) )  \\  
&~~~~~~ +  \nabla F(\wb^r_{t-1}) - \nabla F(\wb^r_{t}) \bigg\|^2 \\
& \leq \left(1+\frac{\beta_3}{2}\right) \underbrace{\left\|(1-\beta_3) (\bm^r_{t-1} -  \nabla F(\wb^r_{t-1}) )  
+ \beta_3 (\frac{1}{N}\sum\limits_{i=1}^{N}\frac{1}{v^r_{i,t}} \exp(u^r_{i,t}/\lambda) \nabla \ell(\w^r_{i,t-1}; \z^r_{i,t}) - \nabla F(\wb^r_{t-1}) ) \right\|^2}\limits_{(A)} \\
&~~~ + \left(1+\frac{2}{\beta_3}\right) \|\nabla F(\wb^r_{t-1}) - \nabla F(\wb^r_{t})\|^2,
\end{split}
\end{equation} 
where
\begin{equation}
\begin{split} 
&(A) = \bigg\|(1-\beta_3) (\bm^r_{t-1} -  \nabla F(\wb^r_{t-1}) )  
\\
&~~~~~ + \beta_3 \left(\frac{1}{N}\sum\limits_{i=1}^{N}\frac{1}{v^r_{i,t}} \exp(u^r_{i,t}/\lambda) \nabla \ell(\w^r_{i,t-1}; \z^r_{i,t}) 
    - \frac{1}{N}\sum\limits_{i=1}^{N}\frac{1}{v^r_{i,t}} \exp(u^r_{i,t-1}/\lambda) \nabla \ell(\w^r_{i,t-1}; \z^r_{i,t}) \right) \\
&~~~~~ + \beta_3 \left(\frac{1}{N}\sum\limits_{i=1}^{N}\frac{1}{v^r_{i,t}} \exp(u^r_{i,t-1}/\lambda) \nabla \ell(\w^r_{i,t-1}; \z^r_{i,t}) 
    - \frac{1}{N}\sum\limits_{i=1}^{N}\frac{1}{v^r_{i,t}} \exp(u^r_{i,t-1}/\lambda) \nabla \ell(\w^r_{i,t-1}; \D_{i}) \right) \\
&~~~~~~ + \beta_3  \left(\frac{1}{N}\sum\limits_{i=1}^{N}\frac{1}{v^r_{i,t}} \exp(u^r_{i,t-1}/\lambda) \nabla \ell(\w^r_{i,t-1}; \D_{i}) 
- \frac{1}{N}\sum\limits_{i=1}^{N}\frac{1}{v^r_{i,t}} \exp(\ell(\wb^r_{t-1}; \D_i)/\lambda) \nabla \ell(\w^r_{i,t-1}; \D_{i}) \right) \\ 
&~~~~~~ + \beta_3 \left(\frac{1}{N}\sum\limits_{i=1}^{N}\frac{1}{v^r_{i,t}} \exp(\ell(\wb^r_{t-1}; \D_i)/\lambda) \nabla \ell(\w^r_{i,t-1}; \D_{i}) 
- \frac{1}{N}\sum\limits_{i=1}^{N}\frac{1}{\bv^r_{t}} \exp(\ell(\wb^r_{t-1}; \D_i)/\lambda) \nabla \ell(\wb^r_{t-1}; \D_{i})
\right)  \\
&~~~~~~ +\beta_3 \left(\frac{1}{N}\sum\limits_{i=1}^{N}\frac{1}{\bv^r_{t}} \exp(\ell(\wb^r_{t-1}; \D_i)/\lambda) \nabla \ell(\wb^r_{t-1}; \D_{i}) - \nabla F(\wb^r_{t-1}) \right) \bigg\|^2, \\   
& \leq \left(1+\frac{\beta_3}{2}\right) \bigg\|(1-\beta_3) (\bm^r_{t-1} -  \nabla F(\wb^r_{t-1}) )  
\\
&~~~~~ + \beta_3 \left(\frac{1}{N}\sum\limits_{i=1}^{N}\frac{1}{v^r_{i,t}} \exp(u^r_{i,t-1}/\lambda) \nabla \ell(\w^r_{i,t-1}; \z^r_{i,t}) 
    - \frac{1}{N}\sum\limits_{i=1}^{N}\frac{1}{v^r_{i,t}} \exp(u^r_{i,t-1}/\lambda) \nabla \ell(\w^r_{i,t-1}; \D_{i}) \right) \\
&~~~~~~ + \beta_3  \left(\frac{1}{N}\sum\limits_{i=1}^{N}\frac{1}{v^r_{i,t}} \exp(u^r_{i,t-1}/\lambda) \nabla \ell(\w^r_{i,t-1}; \D_{i}) 
- \frac{1}{N}\sum\limits_{i=1}^{N}\frac{1}{v^r_{i,t}} \exp(\ell(\wb^r_{t-1}; \D_i)/\lambda) \nabla \ell(\w^r_{i,t-1}; \D_{i}) \right) \\ 
&~~~~~~ + \beta_3 \left(\frac{1}{N}\sum\limits_{i=1}^{N}\frac{1}{v^r_{i,t}} \exp(\ell(\wb^r_{t-1}; \D_i)/\lambda) \nabla \ell(\w^r_{i,t-1}; \D_{i}) 
- \frac{1}{N}\sum\limits_{i=1}^{N}\frac{1}{\bv^r_{t}} \exp(\ell(\wb^r_{t-1}; \D_i)/\lambda) \nabla \ell(\wb^r_{t-1}; \D_{i})
\right)  \\
&~~~~~~ +\beta_3 \left(\frac{1}{N}\sum\limits_{i=1}^{N}\frac{1}{\bv^r_{t}} \exp(\ell(\wb^r_{t-1}; \D_i)/\lambda) \nabla \ell(\wb^r_{t-1}; \D_{i}) - \nabla F(\wb^r_{t-1}) \right) \bigg\|^2, \\   
& ~~~ + (1+\frac{2}{\beta_3})\beta_3^2 \left\|\frac{1}{N}\sum\limits_{i=1}^{N}\frac{1}{v^r_{i,t}} \exp(u^r_{i,t}/\lambda) \nabla \ell(\w^r_{i,t-1}; \z^r_{i,t}) 
    - \frac{1}{N}\sum\limits_{i=1}^{N}\frac{1}{v^r_{i,t}} \exp(u^r_{i,t-1}/\lambda) \nabla \ell(\w^r_{i,t-1}; \z^r_{i,t}) \right \|^2,
\end{split}
\end{equation}
which is followed by
\begin{small}
\begin{equation}
\begin{split} 
&(A) \leq \left(1+\frac{\beta_3}{2}\right) \bigg\|(1-\beta_3) (\bm^r_{t-1} -  \nabla F(\wb^r_{t-1}) )  
\\  
&~~~~~~ + \beta_3  \left(\frac{1}{N}\sum\limits_{i=1}^{N}\frac{1}{v^r_{i,t}} \exp(u^r_{i,t-1}/\lambda) \nabla \ell(\w^r_{i,t-1}; \D_{i}) 
- \frac{1}{N}\sum\limits_{i=1}^{N}\frac{1}{v^r_{i,t}} \exp(\ell(\wb^r_{t-1}; \D_i)/\lambda) \nabla \ell(\w^r_{i,t-1}; \D_{i}) \right) \\ 
&~~~~~~ + \beta_3 \left(\frac{1}{N}\sum\limits_{i=1}^{N}\frac{1}{v^r_{i,t}} \exp(\ell(\wb^r_{t-1}; \D_i)/\lambda) \nabla \ell(\w^r_{i,t-1}; \D_{i}) 
- \frac{1}{N}\sum\limits_{i=1}^{N}\frac{1}{\bv^r_{t}} \exp(\ell(\wb^r_{t-1}; \D_i)/\lambda) \nabla \ell(\wb^r_{t-1}; \D_{i})
\right)  \\
&~~~~~~ +\beta_3 \left(\frac{1}{N}\sum\limits_{i=1}^{N}\frac{1}{\bv^r_{t}} \exp(\ell(\wb^r_{t-1}; \D_i)/\lambda) \nabla \ell(\wb^r_{t-1}; \D_{i}) - \nabla F(\wb^r_{t-1}) \right) \bigg\|^2, \\   
& ~~~ + \left(1+\frac{\beta_3}{2}\right) \left\| \beta_3 \left(\frac{1}{N}\sum\limits_{i=1}^{N}\frac{1}{v^r_{i,t}} \exp(u^r_{i,t-1}/\lambda) \nabla \ell(\w^r_{i,t-1}; \z^r_{i,t}) 
    - \frac{1}{N}\sum\limits_{i=1}^{N}\frac{1}{v^r_{i,t}} \exp(u^r_{i,t-1}/\lambda) \nabla \ell(\w^r_{i,t-1}; \D_{i}) \right) \right\|^2 \\
& ~~~ + (1+\frac{2}{\beta_3})\beta_3 \left\|\frac{1}{N}\sum\limits_{i=1}^{N}\frac{1}{v^r_{i,t}} \exp(u^r_{i,t}/\lambda) \nabla \ell(\w^r_{i,t-1}; \z^r_{i,t}) 
    - \frac{1}{N}\sum\limits_{i=1}^{N}\frac{1}{v^r_{i,t}} \exp(u^r_{i,t-1}/\lambda) \nabla \ell(\w^r_{i,t-1}; \z^r_{i,t}) \right \|^2.
\end{split}
\end{equation}
\end{small} 
Then it leads to 
\begin{equation}
\begin{split}
&\|\bm^r_{t} - \nabla F(\wb^r_{t})\|^2\\
&\leq \left(1+\frac{\beta_3}{2}\right)^2 \bigg\|(1-\beta_3) (\bm^r_{t-1} -  \nabla F(\wb^r_{t-1}) )  
\\ 
& + \beta_3  \left(\frac{1}{N}\sum\limits_{i=1}^{N}\frac{1}{v^r_{i,t}} \exp(u^r_{i,t-1}/\lambda) \nabla \ell(\w^r_{i,t-1}; \D_{i}) 
- \frac{1}{N}\sum\limits_{i=1}^{N}\frac{1}{v^r_{i,t}} \exp(\ell(\wb^r_{t-1}; \D_i)/\lambda) \nabla \ell(\w^r_{i,t-1}; \D_{i}) \right) \\
& + \beta_3 \left(\frac{1}{N}\sum\limits_{i=1}^{N}\frac{1}{v^r_{i,t}} \exp(\ell(\wb^r_{t-1}; \D_i)/\lambda) \nabla \ell(\w^r_{i,t-1}; \D_{i}) 
- \frac{1}{N}\sum\limits_{i=1}^{N}\frac{1}{\bv^r_{t}} \exp(\ell(\wb^r_{t-1}; \D_i)/\lambda) \nabla \ell(\wb^r_{t-1}; \D_{i})
\right)  \\
& +  \beta_3 \left(\frac{1}{N}\sum\limits_{i=1}^{N}\frac{1}{\bv^r_{t}} \exp(\ell(\wb^r_{t-1}; \D_i)/\lambda) \nabla \ell(\wb^r_{t-1}; \D_{i}) - \nabla F(\wb^r_{t-1}) \right) \bigg\|^2 \\   
& + (1+\frac{\beta_3}{2})^2 \beta_3^2 \left\|\frac{1}{N}\sum\limits_{i=1}^{N}\frac{1}{v^r_{i,t}} \exp(u^r_{i,t-1}/\lambda) \nabla \ell(\w^r_{i,t-1}; \z^r_{i,t}) 
    - \frac{1}{N}\sum\limits_{i=1}^{N}\frac{1}{v^r_{i,t}} \exp(u^r_{i,t-1}/\lambda) \nabla \ell(\w^r_{i,t-1}; \D_{i}) \right\|^2 \\
& + \frac{6}{\beta_3} \beta_3^2 \left\|\frac{1}{N}\sum\limits_{i=1}^{N}\frac{1}{v^r_{i,t}} \exp(u^r_{i,t}/\lambda) \nabla \ell(\w^r_{i,t-1}; \z^r_{i,t}) 
    - \frac{1}{N}\sum\limits_{i=1}^{N}\frac{1}{v^r_{i,t}} \exp(u^r_{i,t-1}/\lambda) \nabla \ell(\w^r_{i,t-1}; \z^r_{i,t}) \right \|^2 \\
& + \left(1+\frac{2}{\beta_3}\right) \|\nabla F(\wb^r_{t-1}) - \nabla F(\wb^r_{t})\|^2.
\end{split}
\end{equation}
Thus, it follows that 
\begin{equation}
\begin{split}
& \|\bm^r_{t} - \nabla F(\wb^r_{t})\|^2 \leq \left(1+\frac{\beta_3}{2}\right)^2(1-\beta_3)^3\|\bm^r_{t-1} - \nabla F(\wb^r_{t-1})\|^2 \\
& + \frac{8}{\beta_3} \beta_3^2 \left\| \frac{1}{N}\sum\limits_{i=1}^{N}\frac{1}{v^r_{i,t}} \exp(u^r_{i,t-1}/\lambda) \nabla \ell(\w^r_{i,t-1}; \D_{i}) 
- \frac{1}{N}\sum\limits_{i=1}^{N}\frac{1}{v^r_{i,t}} \exp(\ell(\wb^r_{t-1}; \D_i)/\lambda) \nabla \ell(\w^r_{i,t-1}; \D_{i}) \right\|^2  ~~~~~~~~~~~~~~ \textcircled{1} \\
&+ \frac{8}{\beta_3} \beta_3^2 \left\| \frac{1}{N}\sum\limits_{i=1}^{N}\frac{1}{v^r_{i,t}} \exp(\ell(\wb^r_{t-1}; \D_i)/\lambda) \nabla \ell(\w^r_{i,t-1}; \D_{i}) 
- \frac{1}{N}\sum\limits_{i=1}^{N}\frac{1}{\bv^r_{t}} \exp(\ell(\wb^r_t; \D_i)/\lambda) \nabla \ell(\wb^r_{t-1}; \D_{i})  \right\|^2  ~~~~~~~~~~~~~~ \textcircled{2}\\
&+ \frac{8}{\beta_3} \beta_3^2 \left\| \frac{1}{N}\sum\limits_{i=1}^{N}\frac{1}{\bv^r_{t}} \exp(\ell(\wb^r_{t-1}; \D_i)/\lambda) \nabla \ell(\w^r_{i,t-1}; \D_{i}) - \nabla F(\wb^r_{t-1}) \right\|^2  ~~~~~~~~~~~~~~ \textcircled{3} \\
& + \beta_3^2 \left\|\frac{1}{N}\sum\limits_{i=1}^{N}\frac{1}{v^r_{i,t}} \exp(u^r_{i,t-1}/\lambda) \nabla \ell(\w^r_{i,t-1}; \z^r_{i,t}) 
- \frac{1}{N}\sum\limits_{i=1}^{N}\frac{1}{v^r_{i,t}} \exp(u^r_{i,t-1}/\lambda) \nabla \ell(\w^r_{i,t-1}; \D_{i}) \right\|^2  ~~~~~~~~~~~~~~ \textcircled{4} \\
& + \frac{8}{\beta_3} \beta_3^2 \left\| \frac{1}{N}\sum\limits_{i=1}^{N}\frac{1}{v^r_{i,t}} \exp(u^r_{i,t}/\lambda) \nabla \ell(\w^r_{i,t-1}; \z^r_{i,t}) 
    - \frac{1}{N}\sum\limits_{i=1}^{N}\frac{1}{v^r_{i,t}} \exp(u^r_{i,t-1}/\lambda) \nabla \ell(\w^r_{i,t-1}; \z^r_{i,t}) \right\|^2 ~~~~~~~~~~~~~~ \textcircled{5} \\  
&~~~~~~  + \left(1+\frac{2}{\beta_3}\right) \|\nabla F(\wb^r_{t-1}) - \nabla F(\wb^r_{t})\|^2   ~~~~~~~~~~~~~~ \textcircled{6}. 
\end{split}
\end{equation}
We address each term as follows. 
 \textcircled{1} can be bounded as 
\begin{equation}
\begin{split}
&\E\left[8 \beta_3 \left\| \frac{1}{N}\sum\limits_{i=1}^{N}\frac{1}{v^r_{i,t}} \exp(u^r_{i,t-1}/\lambda) \nabla \ell(\w^r_{i,t-1}; \D_{i}) 
- \frac{1}{N}\sum\limits_{i=1}^{N}\frac{1}{v^r_{i,t}} \exp(\ell(\wb^r_{t-1}; \D_i)/\lambda) \nabla \ell(\w^r_{i,t-1}; \D_{i}) \right\|^2 \right] \\
&\leq  \beta_3 \frac{1}{N} \sum\limits_{i=1}^{N} C_\ell^2 C_1^2 \|u^r_{i,t-1} - \ell(\wb^r_{t-1}; \D_i)\|^2.  
\end{split}
\end{equation}
\textcircled{2} can be bounded as 
\begin{equation}
\begin{split}
&\E\left[8 \beta_3 \left\| \frac{1}{N}\sum\limits_{i=1}^{N}\frac{1}{v^r_{i,t}} \exp(\ell(\wb^r_{t-1}; \D_i)/\lambda) \nabla \ell(\w^r_{i,t-1}; \D_{i}) 
- \frac{1}{N}\sum\limits_{i=1}^{N}\frac{1}{\bv^r_{t}} \exp(\ell(\wb^r_t; \D_i)/\lambda) \nabla \ell(\w^r_{i,t-1}; \D_{i})  \right\|^2 \right] \\
&\leq \E\left[16 \beta_3 \left\| \frac{1}{N}\sum\limits_{i=1}^{N}\frac{1}{v^r_{i,t}} \exp(\ell(\wb^r_{t-1}; \D_i)/\lambda) \nabla \ell(\w^r_{i,t-1}; \D_{i}) 
- \frac{1}{N}\sum\limits_{i=1}^{N}\frac{1}{v^r_{i,t}} \exp(\ell(\wb^r_t; \D_i)/\lambda) \nabla \ell(\w^r_{i,t-1}; \D_{i})  \right\|^2 \right] \\
&+ \E\left[16 \beta_3 \left\| \frac{1}{N}\sum\limits_{i=1}^{N}\frac{1}{v^r_{i,t}} \exp(\ell(\wb^r_{t}; \D_i)/\lambda) \nabla \ell(\w^r_{i,t-1}; \D_{i}) 
- \frac{1}{N}\sum\limits_{i=1}^{N}\frac{1}{\bv^r_{t}} \exp(\ell(\wb^r_t; \D_i)/\lambda) \nabla \ell(\w^r_{i,t-1}; \D_{i})  \right\|^2 \right] \\
&\leq 16\beta_3 C_\ell^2 C_1^2 \E\|\bw^r_{t-1} - \bw^r_t\|^2 + 16 \beta_3 \frac{1}{N}\sum\limits_{i=1}^{N} C_\ell^2 C_1^2 \|v^r_{i,t} - \bv^r_t\|^2\\
&\leq 16\beta_3 C_\ell^2 C_1^2 \eta^2 \E\|\bm^r_t\|^2 + 16 \beta_3 \frac{1}{N}\sum\limits_{i=1}^{N} C_\ell^2 C_1^2 \|v^r_{i,t} - \bv^r_t\|^2. 
\end{split}
\end{equation}
\textcircled{3} can be bounded as 
\begin{equation}
\begin{split}
&\E\left[8 \beta_3 \left\| \frac{1}{N}\sum\limits_{i=1}^{N}\frac{1}{\bv^r_{t}} \exp(\ell(\wb^r_{t-1}; \D_i)/\lambda) \nabla \ell(\w^r_{i,t-1}; \D_{i}) - \nabla F(\wb^r_{t-1}) \right\|^2 \right] \\
&= \E\bigg[16\beta_3 \bigg\|\frac{1}{N}\sum\limits_{i=1}^{N}\frac{1}{\bv^r_{t}} \exp(\ell(\wb^r_{t-1}; \D_i)/\lambda) \nabla \ell(\w^r_{i,t-1}; \D_{i}) 
- \frac{1}{N}\sum\limits_{i=1}^{N}\frac{1}{g(\wb^r_{t-1})} \exp(\ell(\wb^r_{t-1}; \D_i)/\lambda) \nabla \ell(\w^r_{i,t-1}; \D_{i}) \bigg\| \bigg] \\
&~~~ +\E\bigg[16\beta_3 \bigg\| \frac{1}{N}\sum\limits_{i=1}^{N}\frac{1}{g(\wb^r_t)} \exp(\ell(\wb^r_{t-1}; \D_i)/\lambda) \nabla \ell(\w^r_{i,t-1}; \D_{i}) 
- \frac{1}{N}\sum\limits_{i=1}^{N}\frac{1}{g(\wb^r_t)} \exp(\ell(\wb^r_{t-1}; \D_i)/\lambda) \nabla \ell(\wb^r_{t}; \D_{i}) \bigg\| \bigg] \\
& \leq 32\beta_3 C_1^2 C_\ell^2 \|\bv^r_t - g(\wb^r_t)\|^2 + 32\beta_3 C_1^2 C_\ell^2 C_g^2 \|\bw^r_{t-1} - \bw^r_t\|^2 + 16 \beta_3 C_1^2 L_\ell^2 \frac{1}{N}\sum\limits_{i=1}^{N} \|\w^r_{i,t-1} - \wb^r_{t-1}\|^2 \\
&\overset{(a)}{\leq} 32\beta_3 C_1^2 C_\ell^2 \|\bv^r_t - g(\wb^r_t)\|^2 + 32 \beta_3 C_1^2 C_\ell^2 C_g^2 \eta^2 \|\bm^r_t\|^2 \\
&~~~ + 64C_\ell^2 \eta^4 I^2 \sum\limits_{\tau=1}^{t-1} \E\|\m^r_{i,\tau} - \bm^r_\tau\|^2 + 96C_\ell^2 \eta^2 I \sum\limits_{\tau=1}^{t-1} \E\|\bm^r_\tau \|^2,
\end{split} 
\end{equation}
where (a) follows from (\ref{app:ww}).
\textcircled{4} can be bounded as 
\begin{equation}
\begin{split}
&\beta_3^2 \E\left\|\frac{1}{N}\sum\limits_{i=1}^{N}\frac{1}{v^r_{i,t}} \exp(u^r_{i,t-1}/\lambda) \nabla \ell(\w^r_{i,t-1}; \z^r_{i,t}) 
- \frac{1}{N}\sum\limits_{i=1}^{N}\frac{1}{v^r_{i,t}} \exp(u^r_{i,t-1}/\lambda) \nabla \ell(\w^r_{i,t-1}; \D_{i}) \right\|^2 \\
& = \beta_3^2 \frac{1}{N^2}\sum\limits_{i=1}^{N}\E\left\|\frac{1}{v^r_{i,t}} \exp(u^r_{i,t-1}/\lambda) \nabla \ell(\w^r_{i,t-1}; \z^r_{i,t}) 
- \frac{1}{N}\sum\limits_{i=1}^{N}\frac{1}{v^r_{i,t}} \exp(u^r_{i,t-1}/\lambda) \nabla \ell(\w^r_{i,t-1}; \D_{i}) \right\|^2\\
&\leq \beta_3^2 C_1^2 \frac{\sigma^2}{N}. 
\end{split}
\end{equation}
as machines are independent conditioned on iteration $t-1$.

\textcircled{5} can be bounded as 
\begin{equation}  
\begin{split}  
&\E\bigg[ 8\beta_3\left\| \frac{1}{N}\sum\limits_{i=1}^{N}\frac{1}{v^r_{i,t}} \exp(u^r_{i,t}/\lambda) \nabla \ell(\w^r_{i,t-1}; \z^r_{i,t}) 
    - \frac{1}{N}\sum\limits_{i=1}^{N}\frac{1}{v^r_{i,t}} \exp(u^r_{i,t-1}/\lambda) \nabla \ell(\w^r_{i,t-1}; \z^r_{i,t}) \right\|^2 \bigg] \\
&\leq \E\bigg[8\beta_3 \frac{1}{N}\sum\limits_{i=1}^{N} C_2^2 \left\|\exp(u^r_{i,t}/\lambda) \nabla \ell(\w^r_{i,t-1}; \z^r_{i,t}) - \exp(u^r_{i,t-1}/\lambda) \nabla \ell(\w^r_{i,t-1}; \z^r_{i,t})  \right\|^2 \bigg] \\
&\leq \E\bigg[24\beta_3 \frac{1}{N}\sum\limits_{i=1}^{N} C_2^2 \left\|\exp(u^r_{i,t}/\lambda) \nabla \ell(\w^r_{i,t-1}; \z^r_{i,t}) -  \exp(u^r_{i,t}/\lambda) \nabla \ell(\w^r_{i,t-1}; \D_i) \right\|^2 \bigg]  \\
&~~~ + \E\bigg[24\beta_3 \frac{1}{N}\sum\limits_{i=1}^{N} C_2^2 \left\|\exp(u^r_{i,t-1}/\lambda) \nabla \ell(\w^r_{i,t-1}; \z^r_{i,t}) - \exp(u^r_{i,t-1}/\lambda) \nabla \ell(\w^r_{i,t-1}; \D_{i}) \right\|^2 \bigg]  \\
&~~~ + \E\bigg[24\beta_3 \frac{1}{N}\sum\limits_{i=1}^{N} C_2^2 \left\|\exp(u^r_{i,t}/\lambda) \nabla \ell(\w^r_{i,t-1}; \D_{i}) - \exp(u^r_{i,t-1}/\lambda) \nabla \ell(\w^r_{i,t-1}; \D_{i})  \right\|^2 \bigg]  \\ 
&\leq 48 \beta_3 C_2^2 C_1^2 \sigma^2 + \E\bigg[24\beta_3 \frac{1}{N}\sum\limits_{i=1}^{N} L_\ell^2 C_1^2 \|u^r_{i,t} - u^r_{i,t-1}\|^2 \bigg] \\
&\leq 48 \beta_3 C_2^2 C_1^2 \sigma^2 + 24 \beta_3 L_\ell^2 C_1^2 C_0^2 \beta_1^2, 
\end{split}
\end{equation} 
where the first inequality uses $v^r_{i,t} \geq 1$ as $\ell(\cdot) \geq 0$. 

$F(\w)$ is $L_F:=C_fL_g + C_g^2 L_f$-smooth. With $\eta \leq \beta_3/(3L_F^2)$, \textcircled{6} can be bounded as 
\begin{equation}
\begin{split}
 &\left(1+\frac{2}{\beta_3}\right) \E\|\nabla F(\wb^r_{t-1}) - \nabla F(\wb^r_{t})\|^2 \leq \frac{3}{\beta_3} L_F^2 \E\|\wb^r_{t-1} - \wb^r_t \|^2 = \frac{3}{\beta_3} L_F^2 \eta^2 \E\| \bm^r_t \|^2 \\
 &\leq \eta \E\| \bm^r_t - \bm^r_{t-1} + \bm^r_{t-1} - \nabla F(\wb^r_{t-1}) + \nabla F(\wb^r_{t-1})\|^2 \\
 &\leq 3\eta \E\|\bm^r_t - \bm^r_{t-1} \|^2 + 3\eta \|\bm^r_{t-1} - \nabla F(\wb^r_{t-1})\|^2 + 3\eta \E \| \nabla F(\wb^r_{t-1}) \|^2 \\
 &\overset{(a)}{\leq} 3 \eta (4\beta_3^2\E\|\bm^r_{t-1}-\nabla F(\bw^r_{t-1})\|^2  + 4\beta_3^2\E\|\nabla F(\bm^r_{t-1})\|^2 + 4\beta_3^2 C_1^2 (C_\ell^2+\sigma^2) ) \\
 &~~~ + 3\eta \E\|\bm^r_{t-1} - \nabla F(\wb^r_{t-1})\|^2 + 3\eta \E\| \nabla F(\wb^r_{t-1}) \|^2 \\
 & \leq 4\eta \E\|\bm^r_{t-1} - \nabla F(\wb^r_{t-1})\|^2 + 4\eta \E\| \nabla F(\wb^r_{t-1}) \|^2 + 4\beta_3^2 C_1^2 (C_\ell^2+\sigma^2) )
\end{split}  
\end{equation}
where (a) uses 
\begin{equation}
\begin{split}
&\E\|\bm^r_t - \bm^r_{t-1}\|^2 = \E\|(1-\beta_3) \bm^r_{t-1} + \beta_3 \frac{1}{N} \sum\limits_{i=1}^{N} \frac{1}{v^r_{i,t}} g(u^r_{i,t}) \nabla \ell(\w^r_{i,t-1}; \z^r_{i,t}) - \bm^r_{t-1}  \|^2\\
&\leq 2\beta_3^2 \E\|\bm^r_{t-1} \|^2 +2\beta_3^2\E\|\frac{1}{N} \sum\limits_{i=1}^{N} \frac{1}{v^r_{i,t}} g(u^r_{i,t}) \nabla \ell(\w^r_{i,t-1}; \z^r_{i,t})\|^2 \\
&\leq 4\beta_3^2 \E\|\bm^r_{t-1} - \nabla F(\bw^r_{t-1})\|^2 + 4\beta_3^2 \E\|\nabla F(\bm^r_{t-1})\|^2
+ 4\beta_3^2 C_1^2 (C_\ell^2  +\sigma^2). 
\end{split} 
\end{equation}

Thus, 
\begin{equation}
\begin{split}
&\|\bm^r_t - \nabla F(\wb^r_t)\|^2 \leq (1-\frac{\beta_3}{2}) \|\bm^r_{t-1} - \nabla F(\wb^r_{t-1})\|^2   +\beta_3 \frac{1}{N} \sum\limits_{i=1}^{N} C_\ell^2 C_1^2 \|u^r_{i,t-1} - \ell(\wb^r_{t-1}; \D_i)\|^2 \\ 
&+ 16\beta_3 C_\ell^2 C_1^2 \eta^2 \E\|\bm^r_t\|^2 + 16 \beta_3 \frac{1}{N}\sum\limits_{i=1}^{N} C_\ell^2 C_1^2 \|v^r_{i,t} - \bv^r_t\|^2 \\ 
& +32\beta_3 C_1^2 C_\ell^2 \|\bv^r_t - g(\wb^r_t)\|^2 + 32 \beta_3 C_1^2 C_\ell^2 C_g^2 \eta^2 \|\bm^r_t\|^2 \\ 
&~~~ + 64C_\ell^2 \eta^4 I^2 \sum\limits_{\tau=1}^{t-1} \E\|\m^r_{i,\tau} - \bm^r_\tau\|^2 + 96C_\ell^2 \eta^2 I \sum\limits_{\tau=1}^{t-1} \E\|\bm^r_\tau \|^2  
 +\beta_3^2 C_1^2 \frac{\sigma^2}{N} \\ 
&~~~ +48 \beta_3 C_2^2 C_1^2 \sigma^2 + 24 \beta_3 L_\ell^2 C_1^2 C_0^2 \beta_1^2 \\ 
& + 4\eta \E\|\bm^r_{t-1} - \nabla F(\wb^r_{t-1})\|^2 + 4\eta \E\| \nabla F(\wb^r_{t-1}) \|^2 + 4\beta_3^2 C_1^2 (C_\ell^2+\sigma^2) ) 
\end{split}
\end{equation} 
We conclude the proof by setting the parameters as in the Lemma. 
\end{proof}


\begin{proof}[Proof of Theorem \ref{thm:kl}]


Using Lemma \ref{lem:kl_m}, with $\beta_2=O(\beta_3)$, we have
\begin{equation}
\begin{split}
&\frac{1}{RI} \sum\limits_{r=1}^{R} \sum\limits_{t=1}^{I} \frac{\beta_3}{2}\E\|\bm^0_{t-1} - \nabla F(\wb^0_{t-1})\|^2 \leq  O\bigg(\frac{\E\|\bm^r_0 - \nabla F(\wb^r_0)\|^2}{RI}  + 2\beta_3^3 I^2 G^2 + \beta_3^2 C_2 \frac{\sigma^2}{N} \\
& + \frac{1}{RKN} \sum\limits_{i=1}^{N} \E\|u^0_{i,0} - \nabla F(\wb^0_0)\|^2
+ \frac{1}{RI} \sum\limits_{r=1}^{R} \sum\limits_{t=1}^{I} \eta \E\|\nabla F(\wb^r_{t-1})\|^2 + \beta_1^2\sigma^2 \bigg). 
\end{split}
\end{equation}

Using $L_F$-smooth of $F$, we have
\begin{equation}
\begin{split}
&F(\wb^r_{t}) \leq F(\wb^r_{t-1}) + \nabla F(\wb^r_{t-1})^\top (\wb^r_{t} - \wb^r_{t-1}) + \frac{L_F}{2} \|\wb^r_{t} - \wb^r_{t-1}\|^2 \\
&= F(\wb^r_{t-1}) - \eta \nabla F(\wb^r_{t-1})^\top \bm^r_{t} + \frac{L_F}{2} \eta^2 \|\bm^r_t\|^2 \\
& \leq F(\wb^r_{t-1}) - \eta \nabla F(\wb^r_{t-1})^\top (\bm^r_t - \nabla F(\wb^r_{t-1}) + \nabla F(\wb^r_{t-1})) \\
&~~~ + L_F\eta^2 (\|\bm^r_t - \nabla F(\wb^r_{t-1})\|^2 + \|\nabla F(\wb^r_{t-1})\|^2) \\
&\leq  F(\wb^r_{t-1}) - \eta \|\nabla F(\wb^r_{t-1})\|^2 - \eta \nabla F(\wb^r_{t-1})^\top (\bm^r_t - \nabla F(\wb^r_{t-1})) \\
&~~~ + \frac{\eta}{4} (\|\bm^r_t - \nabla F(\wb^r_{t-1})\|^2 + \|\nabla F(\wb^r_{t-1})\|^2) \\
&\leq F(\wb^r_{t-1}) - \frac{\eta}{4} \|\nabla F(\wb^r_{t-1})\|^2 
+ \frac{3\eta}{4} \|\bm^r_t - \nabla F(\wb^r_{t-1}))\|^2. 
\end{split} 
\end{equation} 
Therefore,
\begin{equation}
\begin{split}
\E\left[\|\nabla F(\tilde{\w})\|^2 \right] = \E\left[\frac{1}{RI}\sum\limits_{r=1}^{R}\sum\limits_{t=1}^{I} \|\nabla F(\wb^r_{t-1})\|^2 \right] \leq  O\left(\frac{1}{\eta RI} 
 + \beta_1 \sigma^2 + \beta_3^2 I^2+ \eta L_F\right).  
\end{split}
\end{equation}

We conclude the proof by setting the parameters as the theorem.




\end{proof}

\section{Analysis of FGDRO-KL-Adam}
\label{app:sec:kl_adam} 

\begin{proof}[Proof of Theorem \ref{thm:kl_adam}]
Lemma \ref{lem:kl_u}, Lemma \ref{lem:kl_v} and Lemma \ref{lem:kl_m} still hold. Specifically,
denoting $\wb^r_t = \frac{1}{N} \sum\limits_{i=1}^{N} \wb^r_{i,t} $, we have  
\begin{equation}
\begin{split}
&\E\|u^r_{i,t} - \ell(\wb^r_{t}; \D_i)\|^2  \\ 
& \leq (1-\beta_1) \|u^r_{i,t-1} - \ell(\wb^r_{t-1};\D_i)\|^2 + 6\beta_1 \eta^2 I^2 C_\ell^2 G^2  
+ \beta_1^2 \sigma^2 + \frac{\eta^2}{\beta_1} C_2 \|\bm^r_t - \nabla F(\wb^r_t)\|^2 \\
&~~~ + \frac{\eta^2}{\beta_1} C_2 \|\nabla F(\wb^r_t)\|^2  
\end{split}    
\end{equation}   
where (a) holds due to the fact that $\E_{t-1} [\ell(\w^r_{i,t};\z^r_{i,t}) - \ell(\w^r_{i,t};\D_{i})] = 0$. 

And we have 
\begin{equation} 
\begin{split}
&\|\bv^r_t - g(\wb^r_t)\|^2 \leq (1-\beta_2) \|\bv^r_{t-1} - g(\wb^r_{t-1})\|^2 + 3\beta_2 \frac{1}{N} \sum\limits_{i=1}^{N} C_1\| u^r_{i,t} - \ell(\w;\D_i) \|^2 +  \frac{3}{\beta_2} C_g^2 \|\wb^r_t - \wb^r_{t-1}\|^2 
\end{split} 
\end{equation} 
where $C_1 = \exp(C_0/\lambda)$. 

Moreover, we have
\begin{equation}
\begin{split}
&\frac{1}{RI} \sum\limits_{r=1}^{R} \sum\limits_{t=1}^{I} \frac{\beta_3}{2}\E\|\bm^r_{t-1} - \nabla F(\wb^r_{t-1})\|^2 \leq  O\bigg(\frac{\E\|\bm^r_0 - \nabla F(\wb^r_0)\|^2}{RI}  + 2\beta_3^3 I^2 G^2 + \beta_3^2 C_2 \frac{\sigma^2}{N} \\
& + \frac{1}{RKN} \sum\limits_{i=1}^{N} \E\|u^0_{i,0} - \nabla F(\wb^0_0)\|^2
+ \frac{1}{RI} \sum\limits_{r=1}^{R} \sum\limits_{t=1}^{I} \eta \E\|\nabla F(\wb^r_{t-1})\|^2 + \beta_1^2\sigma^2 \bigg). 
\end{split}
\end{equation}

Using $L_F$-smooth of $F$, we have
\begin{equation}
\begin{split}
&F(\wb^r_{t}) \leq F(\wb^r_{t-1}) + \nabla F(\wb^r_{t-1})^\top (\wb^r_{t} - \wb^r_{t-1}) + \frac{L}{2} \|\wb^r_{t} - \wb^r_{t-1}\|^2 \\
&\leq F(\wb^r_{t-1}) - \nabla F(\wb^r_{t-1})^\top (\teta_t \circ \bm^r_{t}) + \frac{L}{2} \|\teta^2 \circ \bm^r_t\|^2 \\
& =  F(\wb^r_{t-1}) + \frac{1}{2} \|\sqrt{\teta_t}\circ (\nabla F(\wb^r_{t-1}) - \bm^r_{t}) \|^2 - \frac{1}{2}\|\sqrt{\teta_t}\circ \nabla F(\wb^r_{t-1})\|^2
+ \frac{L}{2}\|\teta^2 \circ \bm^r_t\|^2 - \frac{1}{2}\| \sqrt{\teta_t}\circ \bm^r_t \|^2 \\
&\leq F(\wb^r_{t-1}) + \frac{\eta}{2\tau} \|\nabla F(\wb^r_{t-1}) - \bm^r_{t}\|^2
- \frac{\eta}{2(G + \tau)} \|\nabla F(\wb^r_{t-1})\|^2 + \frac{\eta^2 L/\tau^2 - \eta/(G+\tau)}{2} \|\bm^r_t\|^2 \\
& \leq F(\wb^r_{t-1}) + \frac{\eta}{\tau} \|\nabla F(\wb^r_{t-1}) - \bm^r_{t-1}\|^2
+ \frac{\eta}{\tau} \|\bm^r_{t-1} - \bm^r_{t}\|^2 
- \frac{\eta}{2(G + \tau)} \|\nabla F(\wb^r_{t-1})\|^2 \\
&\leq F(\wb^r_{t-1}) + \frac{\eta}{\tau} \|\nabla F(\wb^r_{t-1}) - \bm^r_{t-1}\|^2
+ \frac{\eta}{\tau} \beta_3^2  G^2 
- \frac{\eta}{2(G + \tau)} \|\nabla F(\wb^r_{t-1})\|^2
\end{split} 
\end{equation} 
Therefore,
\begin{equation}
\begin{split}
\E\left[\frac{1}{RI}\sum\limits_{r=1}^{R}\sum\limits_{t=1}^{I} \|\nabla F(\wb^r_{t-1})\|^2 \right] \leq  O\left(\frac{1}{\eta RI} 
 + \beta_1 \sigma^2 + \beta_3^2 I^2 G^2 + \eta L G^2 \right).    
\end{split}
\end{equation}
We conclude the proof by setting the parameters as the theorem. 




\section{LocalAdam Algorithm}
\label{app:sec:localadam} 

In this section, we present Algorithm \ref{alg:local_adam}, which uses Adam type updates in local steps to solve an ERM problem.

\begin{algorithm}[htbp]
\caption {LocalAdam} 
\begin{algorithmic}[1]
\STATE{Initialization: $\bar{\w}^{1}$, $\bar{\m}^{1}$, $\bar{\q}^{1}$} 
\FOR{$r=1,...,R$}  
\STATE{$\w^r_{i, 0} = \bar{\w}^{r}$, $\m^r_{i, 0} = \bar{\m}^{r}$, $\q^r_{i, 0} = \bar{\q}^{r}$}  
\FOR{$t=1,...,I$}   
\STATE{Each machine samples data $\z_{i,t}^r$} 
\STATE{$\h^r_{i,t} = \nabla \ell(\w^r_{i,t-1}; \z^r_{i,t})$}
\STATE{$\m^r_{i,t} \!=\! (1\!-\!\beta_3)\m^r_{i,t-1} + \beta_3 \h^r_{i,t}$, $\q^r_{i,t} =  (1\!-\!\beta_4) \q^r_{i,t-1} + \beta_4 (\h^r_{i,t})^2$}
\STATE{$\w^r_{i,t} = \w^r_{i,t-1} - \eta \frac{\m^r_{i,t}}{\sqrt{\q^r_{i,t}} + \tau}$  } 
\ENDFOR 
\STATE{$\bar{\w}^{r+1} = \frac{1}{N} \sum\limits_{i=1}^N \w^r_{i,I}$, $\bar{\m}^{r+1} = \frac{1}{N} \sum\limits_{i=1}^N \m^r_{i,I}$ and $\bar{\q}^{r+1} = \frac{1}{N}\sum\limits_{i=1}^{N} \q^r_{i,I}$} 
\ENDFOR
\end{algorithmic}
\label{alg:local_adam}
\end{algorithm} 

\end{proof}

\section{Statistics of Datasets}
\label{app:data} 
Table \ref{tab:statistics_data} summarizes the sizes of the datasets used. Table \ref{tab:imbalance_ratio} summarizes the client imbalance ratio and the class imbalance ratio. The client imbalance ratio represents the ratio between the number of training samples on the client with the most data and the client with the least data, and the class imbalance ratio reflects the ratio of training data in the largest to the smallest classes in classification tasks. 

\begin{table*}[htb] 
  \centering
   \caption{Number of data for each split, with the number of training domains in the brackets. Number of training clients in the last column, with data from the same training domain to be on the same client. }   
    \begin{tabular}{c|c|c|c|c} 
    \toprule 
     & Train  & Validation & Test & Num of Clients\\ 
         \hline 
         Pile & 192,912,246 (17) & 193,105 (17) & 193,105 (17) & 17 \\
         \hline
         Civilcomments & 269,038 (4)& 45,180 (16) & 133,782 (16) & 4 \\
         \hline
         PovertyMap &9,792 (13) &3,936 (5)&3,968 (5) & 13\\
         \hline
         iWildsCam & 129,809 (243) & 7,314 (243) & 8,154 (243) & 8\\
         \hline
         Camelyon & 302,436 (3)&  34,904 (1) & 85,054 (1) & 3 \\
    \bottomrule  
    \end{tabular}%
    \label{tab:statistics_data}
\end{table*}%

\begin{table*}[h] 
  \centering
   \caption{Imbalance Ratio }   
    \begin{tabular}{c|c|c|c|c|c} 
    \toprule 
     Datasets &Pile&CivilComments&Camelyon17&iWildCam2020&PovertyMap\\ 
         \hline 
         Client Imbalance Ratio&258&36.2&1&1.7&5.9\\
         \hline
Class Imbalance Ratio&N/A&4.6&1&48021&N/A \\
    \bottomrule  
    \end{tabular}%
    \label{tab:imbalance_ratio}
\end{table*}%

\section{Experiments on Cifar 10}
\label{app:cifar10} 
We create an imbalance dataset by reducing the data of 5 classes by 80\% and then distributed the data across 100 clients according to two different Dirichlet distributions: Dirichlet (0.3) and Dirichlet (10), using code released by \citep{shen2021agnostic}. We use a two layer CNN as the model. Results of algorithms are summarized in Table \ref{tab:results:cifar10}.  Figure \ref{fig:cifar10_dirc_03} and Figure \ref{fig:cifar10_dirc_10} illustrate the communication complexity of each method by comparing the worst-case testing accuracy against the number of local updates and the communicated data sizes.  

\begin{table*}[htbp]  
  \centering
  \small 
   \caption{Imbalanced Cifar10, data alloacted to clients accoring to Dirichlet distributions}   
    \begin{tabular}{c|c|c|c|c} 
    \toprule 
     Datasets &   \multicolumn{2}{|c|}{\textbf{Cifar10, Dirichlet(0.3)}}  &   \multicolumn{2}{|c}{\textbf{Cifar10, Dirichlet(10)}}  \\  
     \hline 
     Metric  & Worst Acc & Average Acc &  Worst Acc & Average Acc  \\ 
     \hline 
     FedAvg &0.3140 ($\pm 0.0027$) & 0.6236 ($\pm 0.0019$) & 0.3620 ($\pm 0.0032$) & 0.6742 ($\pm 0.0020$)    \\ 
     \hline 
     SCAFFOLD &0.3245 ($\pm 0.0032$) & 0.6337 ($\pm 0.0038$) & 0.3821 ($\pm 0.0022$) & 0.6816 ($\pm 0.0025$)   \\
     \hline
     FedProx & 0.3102 ($\pm 0.0027$) & 0.6189 ($\pm 0.0036$) & 0.3757 ($\pm 0.0041$) & 0.6925 ($\pm 0.0053$) \\ 
     \hline
     FedAdam & 0.4860 ($\pm 0.0047$) & \textbf{0.7147} ($\pm 0.0058$) & 0.4460 ($\pm 0.0039$) & 0.7042 ($\pm 0.0043$)  \\ 
    \hline
    DRFA  & 0.3215 ($\pm 0.0129$) & 0.6381 ($\pm 0.0131$)  & 0.3752 ($\pm 0.0053$) & 0.6739 ($\pm 0.0048$) \\
    \hline
    DR-DSGD & 0.3277 ($\pm 0.0074$) & 0.6403 ($\pm 0.0058$) & 0.3700 ($\pm 0.0063$) & 0.6792 ($\pm 0.0088$)  \\
    \hline   
    \hline 
    FGDRO-CVaR & 0.4100 ($\pm 0.0032$) & 0.6606 ($\pm 0.0039$) & 0.4010 ($\pm 0.0022$) & 0.6882 ($\pm 0.0027$) \\    
    \hline
    FGDRO-KL& 0.3560 ($\pm 0.0018$) & 0.6369 ($\pm 0.0029$) & 0.4110 ($\pm 0.0035$) & 0.6951 ($\pm 0.0042$) \\
    \hline
    FGDRO-KL-Adam & \textbf{0.5280} ($\pm 0.0101$) & 0.7057 ($\pm 0.0133$) & \textbf{0.5110} ($\pm 0.0072$) & \textbf{0.7286} ($\pm 0.0063$)  \\
    \bottomrule  
    \end{tabular}%
    \label{tab:results:cifar10}
\end{table*}%

\begin{figure*}[htbp] 
    \subfigure[]{
        \includegraphics[scale=0.278]{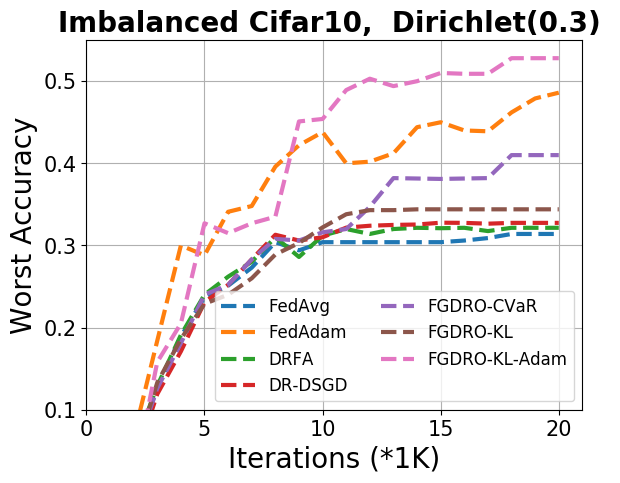} 
    }
     \subfigure[]{
        \includegraphics[scale=0.278]{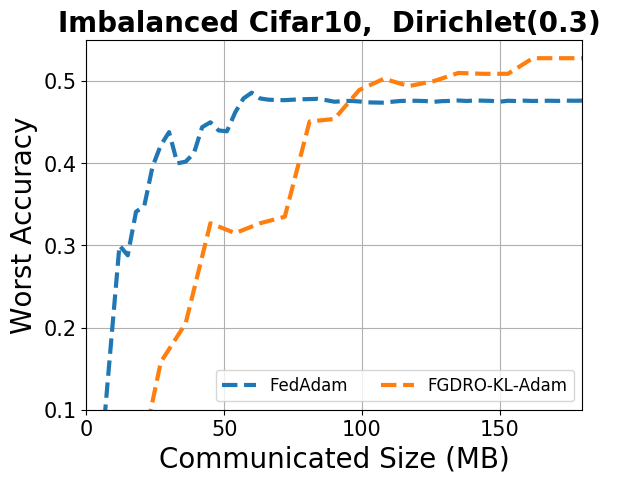} 
     }
     \subfigure[]{
        \includegraphics[scale=0.278]{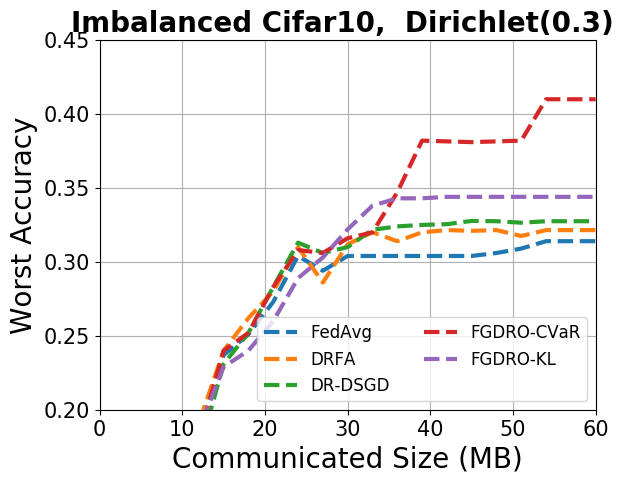} 
        }
    \caption{Convergence on Imbalanced Cifar10 with Dirichlet(0.3)}
    \label{fig:cifar10_dirc_03} 
\end{figure*}

\begin{figure*}[htbp] 
    \subfigure[]{
        \includegraphics[scale=0.28]{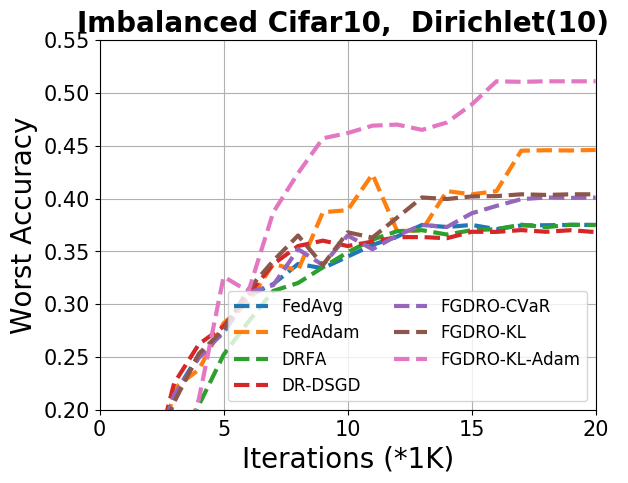} 
     } 
     \subfigure[]{
        \includegraphics[scale=0.28]{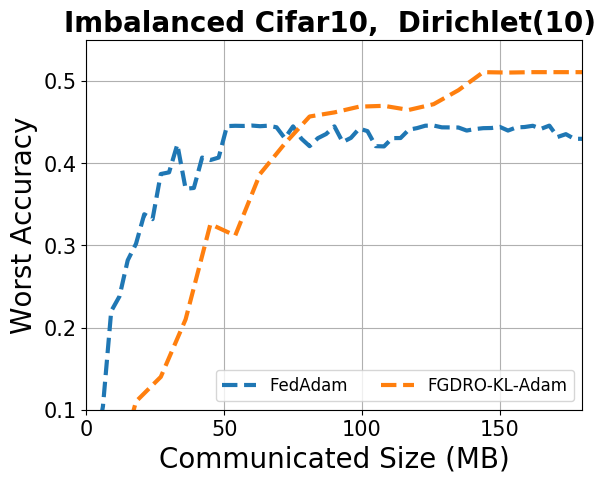} 
    }
     \subfigure[]{
        \includegraphics[scale=0.28]{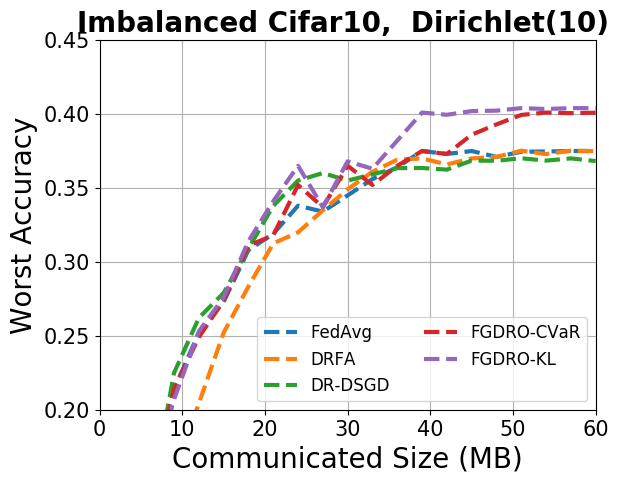} 
     } 
    \caption{Convergence on Imbalanced Cifar10 with Dirichlet(10)}
    \label{fig:cifar10_dirc_10} 
\end{figure*}

\section{Running Time}
\label{app:runningtime}
Running time is reported in Tabel \ref{tab:runningtime}.
Each algorithm was run on a high performance cluster where each machine uses a NVIDIA A100 GPU. 

\begin{table*}[htbp] 
	\caption{Average running time of federated algorithms. We report running (in hours) for each algorithm to finish (200K iterations for Pile data and 20K for others).
}\label{tab:runningtime} 
	\centering 
 \begin{tabular}{c|c|c|c|c|c} 
			\toprule 
 & \textbf{Pile} & \textbf{CivilComments} & \textbf{Camelyon17} & \textbf{iWildCam2020} & \textbf{PovertyMap} \\
		\hline  
{FedAvg}  
& 29.77 &3.05 & 4.59 & 12.90 &  1.77 \\ 
\hline
{FedAdam} 
&35.39 &3.13 & 6.01 & 12.91 & 3.22  \\ 
\hline 
{DRFA} 
& 34.12 & 3.27 & 4.94 & 12.95 & 2.25 \\ 
\hline 
{DR-DSGD} 
& 25.26 & 4.01 &6.26 & 13.02 & 4.71 \\  
\hline 
{FGDRO-CVAR} 
& 34.07 & 4.03 & 7.19 & 13.01 & 5.02  \\  
\hline 
{FGDRO-KL} 
& 34.92 & 3.65 & 7.23 & 14.57 & 5.42 \\  
\hline 
{FGDRO-KL-Adam} & 35.98 & 3.80 &7.54 & 14.82 & 5.50 \\  
\bottomrule 
	\end{tabular}
\end{table*} 

\section{On the Statistical Significance of Proposed Algorithms}

In Table \ref{tab:results:significance_imcifar10}, \ref{tab:results:significance_nlp} and \ref{tab:results:significance_image}, we evaluate the statistical significance of our algorithms' advantages over the baselines. In each cell, the three letters (representing FGDRO-CVaR, FGDRO-KL, and FGDRO-KL-Adam, respectively) indicate whether each of our algorithms has outperformed the corresponding baseline in that row with a confidence level greater than 95\% (p-value less than 0.05). `Y' indicates Yes, and `N' indicates No.

\begin{table*}[htbp]  
  \centering
  \small
   \caption{Statistical Confidence on Imbalanced Cifar10}   
    \begin{tabular}{c|c|c|c|c} 
    \toprule 
     Datasets &   \multicolumn{2}{|c|}{\textbf{Cifar10, Dirichlet(0.3)}}  &   \multicolumn{2}{|c}{\textbf{Cifar10, Dirichlet(10)}}  \\  
     \hline 
     Metric  & Worst Acc & Average Acc &  Worst Acc & Average Acc  \\ 
     \hline 
     FedAvg & Y|Y|Y & Y|Y|Y & Y|Y|Y & Y|Y|Y\\ 
     \hline 
     SCAFFOLD & Y|Y|Y & Y|N|Y & Y|Y|Y & Y|Y|Y \\
     \hline
     FedProx & Y|Y|Y & Y|Y|Y & Y|Y|Y & N|N|Y \\ 
     \hline
     FedAdam & N|N|Y & N|N|N & N|N|Y & N|N|Y \\ 
    \hline
    DRFA  & Y|Y|Y & Y|N|Y & Y|Y|Y & Y|Y|Y \\
    \hline
    DR-DSGD & Y|Y|Y & Y|N|Y & Y|Y|Y & Y|Y|Y \\
    \bottomrule  
    \end{tabular}%
    \label{tab:results:significance_imcifar10}
\end{table*}%

\begin{table*}[htbp]  
  \centering
  \small
   \caption{Statistical Confidence on Piles and CivilComments}   
    \begin{tabular}{c|c|c|c|c} 
    \toprule 
     Datasets &   \multicolumn{2}{|c|}{\textbf{Pile}}  &   \multicolumn{2}{|c}{\textbf{CivilComments}}  \\  
     \hline 
     Metric  & Worst Log-PPL & Average Log-PPL &  Worst Acc & Average Acc  \\ 
     \hline 
     FedAvg & N|Y|Y & N|Y|Y & Y|Y|Y & N|Y|N \\ 
     \hline 
     SCAFFOLD & N|Y|Y & N|Y|Y & Y|Y|Y&N|Y|N   \\
     \hline
     FedProx & N|Y|Y & N|Y|Y & Y|Y|Y & N|Y|Y  \\ 
     \hline
     FedAdam & N|N|Y & N|N|Y & Y|Y|Y & N|Y|N  \\ 
    \hline
    DRFA  & N|Y|Y & N|Y|Y & Y|Y|Y & Y|Y|Y  \\
    \hline
    DR-DSGD & N|Y|Y & N|Y|Y & Y|Y|Y &Y|Y|Y  \\
    \bottomrule  
    \end{tabular}%
    \label{tab:results:significance_nlp}
\end{table*} %

\begin{table*}[htbp]  
  \centering
  \small
   \caption{Statistical Confidence on Camelyon17, iWildCam2020, and PovertyMap}   
    \begin{tabular}{c|c|c|c|c} 
    \toprule 
     Datasets &  \textbf{Camelyon17}  & \textbf{iWildCam2020}  &   \multicolumn{2}{|c}{\textbf{CivilComments}}  \\  
     \hline 
     Metric  & Acc & Macro F1 &  Worst Pearson & Average Pearson  \\ 
     \hline 
     FedAvg & N|Y|Y & Y|Y|N & Y|N|Y & Y|Y|Y \\ 
     \hline 
     SCAFFOLD & N|Y|Y & Y|Y|N & Y|N|Y & Y|N|Y  \\
     \hline
     FedProx & N|Y|Y & Y|Y|Y & Y|N|Y & Y|Y|Y  \\ 
     \hline
     FedAdam & N|N|N & Y|Y|Y & Y|N|Y & N|N|N \\ 
    \hline
    DRFA  & Y|Y|Y & Y|Y|Y & Y|Y|Y & Y|Y|Y  \\
    \hline
    DR-DSGD & Y|Y|Y & Y|Y|Y & Y|Y|Y & Y|Y|Y   \\
    \bottomrule  
    \end{tabular}%
    \label{tab:results:significance_image}
\end{table*}

~\\ 
\newpage
~\\ 
\newpage 
\section*{NeurIPS Paper Checklist}

\begin{enumerate}

\item {\bf Claims}
    \item[] Question: Do the main claims made in the abstract and introduction accurately reflect the paper's contributions and scope?
    \item[] Answer: \answerYes{} 
    \item[] Justification: The abstract and introduction are accurate summary of the paper and well supported by other sections of the paper.
    \item[] Guidelines:
    \begin{itemize}
        \item The answer NA means that the abstract and introduction do not include the claims made in the paper.
        \item The abstract and/or introduction should clearly state the claims made, including the contributions made in the paper and important assumptions and limitations. A No or NA answer to this question will not be perceived well by the reviewers. 
        \item The claims made should match theoretical and experimental results, and reflect how much the results can be expected to generalize to other settings. 
        \item It is fine to include aspirational goals as motivation as long as it is clear that these goals are not attained by the paper. 
    \end{itemize}

\item {\bf Limitations}
    \item[] Question: Does the paper discuss the limitations of the work performed by the authors?
    \item[] Answer: \answerYes{} 
    \item[] Justification: Limitations are discussed in Section \ref{sec:conclusion}. 
    \item[] Guidelines:
    \begin{itemize}
        \item The answer NA means that the paper has no limitation while the answer No means that the paper has limitations, but those are not discussed in the paper. 
        \item The authors are encouraged to create a separate "Limitations" section in their paper.
        \item The paper should point out any strong assumptions and how robust the results are to violations of these assumptions (e.g., independence assumptions, noiseless settings, model well-specification, asymptotic approximations only holding locally). The authors should reflect on how these assumptions might be violated in practice and what the implications would be.
        \item The authors should reflect on the scope of the claims made, e.g., if the approach was only tested on a few datasets or with a few runs. In general, empirical results often depend on implicit assumptions, which should be articulated.
        \item The authors should reflect on the factors that influence the performance of the approach. For example, a facial recognition algorithm may perform poorly when image resolution is low or images are taken in low lighting. Or a speech-to-text system might not be used reliably to provide closed captions for online lectures because it fails to handle technical jargon.
        \item The authors should discuss the computational efficiency of the proposed algorithms and how they scale with dataset size.
        \item If applicable, the authors should discuss possible limitations of their approach to address problems of privacy and fairness.
        \item While the authors might fear that complete honesty about limitations might be used by reviewers as grounds for rejection, a worse outcome might be that reviewers discover limitations that aren't acknowledged in the paper. The authors should use their best judgment and recognize that individual actions in favor of transparency play an important role in developing norms that preserve the integrity of the community. Reviewers will be specifically instructed to not penalize honesty concerning limitations.
    \end{itemize}

\item {\bf Theory Assumptions and Proofs}
    \item[] Question: For each theoretical result, does the paper provide the full set of assumptions and a complete (and correct) proof?
    \item[] Answer: \answerYes{} 
    \item[] Justification: Assumptions are discussed in Section \ref{sec:prelim},\ref{sec:cvar},\ref{sec:kl},\ref{sec:kl_adam}. Theorems are presented in Section \ref{sec:cvar},\ref{sec:kl},\ref{sec:kl_adam}. Proofs are shown in details in Appendix \ref{app:sec:cvar}, \ref{app:sec:kl}, \ref{app:sec:kl_adam}. 
    \item[] Guidelines:
    \begin{itemize}
        \item The answer NA means that the paper does not include theoretical results. 
        \item All the theorems, formulas, and proofs in the paper should be numbered and cross-referenced.
        \item All assumptions should be clearly stated or referenced in the statement of any theorems.
        \item The proofs can either appear in the main paper or the supplemental material, but if they appear in the supplemental material, the authors are encouraged to provide a short proof sketch to provide intuition. 
        \item Inversely, any informal proof provided in the core of the paper should be complemented by formal proofs provided in appendix or supplemental material.
        \item Theorems and Lemmas that the proof relies upon should be properly referenced. 
    \end{itemize}

    \item {\bf Experimental Result Reproducibility}
    \item[] Question: Does the paper fully disclose all the information needed to reproduce the main experimental results of the paper to the extent that it affects the main claims and/or conclusions of the paper (regardless of whether the code and data are provided or not)?
    \item[] Answer: \answerYes{} 
    \item[] Justification: The setting and parameter tuning scope are discussed in Section \ref{sec:experiment}. 
    \item[] Guidelines:
    \begin{itemize}
        \item The answer NA means that the paper does not include experiments.
        \item If the paper includes experiments, a No answer to this question will not be perceived well by the reviewers: Making the paper reproducible is important, regardless of whether the code and data are provided or not.
        \item If the contribution is a dataset and/or model, the authors should describe the steps taken to make their results reproducible or verifiable. 
        \item Depending on the contribution, reproducibility can be accomplished in various ways. For example, if the contribution is a novel architecture, describing the architecture fully might suffice, or if the contribution is a specific model and empirical evaluation, it may be necessary to either make it possible for others to replicate the model with the same dataset, or provide access to the model. In general. releasing code and data is often one good way to accomplish this, but reproducibility can also be provided via detailed instructions for how to replicate the results, access to a hosted model (e.g., in the case of a large language model), releasing of a model checkpoint, or other means that are appropriate to the research performed.
        \item While NeurIPS does not require releasing code, the conference does require all submissions to provide some reasonable avenue for reproducibility, which may depend on the nature of the contribution. For example
        \begin{enumerate}
            \item If the contribution is primarily a new algorithm, the paper should make it clear how to reproduce that algorithm.
            \item If the contribution is primarily a new model architecture, the paper should describe the architecture clearly and fully.
            \item If the contribution is a new model (e.g., a large language model), then there should either be a way to access this model for reproducing the results or a way to reproduce the model (e.g., with an open-source dataset or instructions for how to construct the dataset).
            \item We recognize that reproducibility may be tricky in some cases, in which case authors are welcome to describe the particular way they provide for reproducibility. In the case of closed-source models, it may be that access to the model is limited in some way (e.g., to registered users), but it should be possible for other researchers to have some path to reproducing or verifying the results.
        \end{enumerate}
    \end{itemize}

\item {\bf Open access to data and code}
    \item[] Question: Does the paper provide open access to the data and code, with sufficient instructions to faithfully reproduce the main experimental results, as described in supplemental material?
    \item[] Answer: \answerNo{} 
    \item[] Justification: All used data are publicly available. Code will be released later. 
    \item[] Guidelines:
    \begin{itemize}
        \item The answer NA means that paper does not include experiments requiring code.
        \item Please see the NeurIPS code and data submission guidelines (\url{https://nips.cc/public/guides/CodeSubmissionPolicy}) for more details.
        \item While we encourage the release of code and data, we understand that this might not be possible, so “No” is an acceptable answer. Papers cannot be rejected simply for not including code, unless this is central to the contribution (e.g., for a new open-source benchmark).
        \item The instructions should contain the exact command and environment needed to run to reproduce the results. See the NeurIPS code and data submission guidelines (\url{https://nips.cc/public/guides/CodeSubmissionPolicy}) for more details.
        \item The authors should provide instructions on data access and preparation, including how to access the raw data, preprocessed data, intermediate data, and generated data, etc.
        \item The authors should provide scripts to reproduce all experimental results for the new proposed method and baselines. If only a subset of experiments are reproducible, they should state which ones are omitted from the script and why.
        \item At submission time, to preserve anonymity, the authors should release anonymized versions (if applicable).
        \item Providing as much information as possible in supplemental material (appended to the paper) is recommended, but including URLs to data and code is permitted.
    \end{itemize}

\item {\bf Experimental Setting/Details}
    \item[] Question: Does the paper specify all the training and test details (e.g., data splits, hyperparameters, how they were chosen, type of optimizer, etc.) necessary to understand the results?
    \item[] Answer: \answerYes{} 
    \item[] Justification: These details are shown in Section \ref{sec:experiment}. 
    \item[] Guidelines:
    \begin{itemize}
        \item The answer NA means that the paper does not include experiments.
        \item The experimental setting should be presented in the core of the paper to a level of detail that is necessary to appreciate the results and make sense of them.
        \item The full details can be provided either with the code, in appendix, or as supplemental material.
    \end{itemize}

\item {\bf Experiment Statistical Significance}
    \item[] Question: Does the paper report error bars suitably and correctly defined or other appropriate information about the statistical significance of the experiments?
    \item[] Answer: \answerYes{} 
    \item[] Justification: Experiments are repeated for multiple times with different random seed, and error bars are reported in experimental results in Section \ref{sec:experiment}. 
    \item[] Guidelines:
    \begin{itemize}
        \item The answer NA means that the paper does not include experiments.
        \item The authors should answer "Yes" if the results are accompanied by error bars, confidence intervals, or statistical significance tests, at least for the experiments that support the main claims of the paper.
        \item The factors of variability that the error bars are capturing should be clearly stated (for example, train/test split, initialization, random drawing of some parameter, or overall run with given experimental conditions).
        \item The method for calculating the error bars should be explained (closed form formula, call to a library function, bootstrap, etc.)
        \item The assumptions made should be given (e.g., Normally distributed errors).
        \item It should be clear whether the error bar is the standard deviation or the standard error of the mean.
        \item It is OK to report 1-sigma error bars, but one should state it. The authors should preferably report a 2-sigma error bar than state that they have a 96\% CI, if the hypothesis of Normality of errors is not verified.
        \item For asymmetric distributions, the authors should be careful not to show in tables or figures symmetric error bars that would yield results that are out of range (e.g. negative error rates).
        \item If error bars are reported in tables or plots, The authors should explain in the text how they were calculated and reference the corresponding figures or tables in the text.
    \end{itemize}

\item {\bf Experiments Compute Resources}
    \item[] Question: For each experiment, does the paper provide sufficient information on the computer resources (type of compute workers, memory, time of execution) needed to reproduce the experiments?
    \item[] Answer: \answerYes{} 
    \item[] Justification: Computing resource and time of execution are reported in Appendix \ref{app:runningtime}.
    \item[] Guidelines:
    \begin{itemize}
        \item The answer NA means that the paper does not include experiments.
        \item The paper should indicate the type of compute workers CPU or GPU, internal cluster, or cloud provider, including relevant memory and storage.
        \item The paper should provide the amount of compute required for each of the individual experimental runs as well as estimate the total compute. 
        \item The paper should disclose whether the full research project required more compute than the experiments reported in the paper (e.g., preliminary or failed experiments that didn't make it into the paper). 
    \end{itemize}
    
\item {\bf Code Of Ethics}
    \item[] Question: Does the research conducted in the paper conform, in every respect, with the NeurIPS Code of Ethics \url{https://neurips.cc/public/EthicsGuidelines}?
    \item[] Answer: \answerYes{} 
    \item[] Justification: This paper conform with the NeurIPS Code of Ethics. 
    \item[] Guidelines:
    \begin{itemize}
        \item The answer NA means that the authors have not reviewed the NeurIPS Code of Ethics.
        \item If the authors answer No, they should explain the special circumstances that require a deviation from the Code of Ethics.
        \item The authors should make sure to preserve anonymity (e.g., if there is a special consideration due to laws or regulations in their jurisdiction).
    \end{itemize}

\item {\bf Broader Impacts}
    \item[] Question: Does the paper discuss both potential positive societal impacts and negative societal impacts of the work performed?
    \item[] Answer: \answerYes{} 
    \item[] Justification: The broader impacts have been discussed in Section \ref{sec:impacts}

    \item[] Guidelines:
    \begin{itemize}
        \item The answer NA means that there is no societal impact of the work performed.
        \item If the authors answer NA or No, they should explain why their work has no societal impact or why the paper does not address societal impact.
        \item Examples of negative societal impacts include potential malicious or unintended uses (e.g., disinformation, generating fake profiles, surveillance), fairness considerations (e.g., deployment of technologies that could make decisions that unfairly impact specific groups), privacy considerations, and security considerations.
        \item The conference expects that many papers will be foundational research and not tied to particular applications, let alone deployments. However, if there is a direct path to any negative applications, the authors should point it out. For example, it is legitimate to point out that an improvement in the quality of generative models could be used to generate deepfakes for disinformation. On the other hand, it is not needed to point out that a generic algorithm for optimizing neural networks could enable people to train models that generate Deepfakes faster.
        \item The authors should consider possible harms that could arise when the technology is being used as intended and functioning correctly, harms that could arise when the technology is being used as intended but gives incorrect results, and harms following from (intentional or unintentional) misuse of the technology.
        \item If there are negative societal impacts, the authors could also discuss possible mitigation strategies (e.g., gated release of models, providing defenses in addition to attacks, mechanisms for monitoring misuse, mechanisms to monitor how a system learns from feedback over time, improving the efficiency and accessibility of ML).
    \end{itemize}
    
\item {\bf Safeguards}
    \item[] Question: Does the paper describe safeguards that have been put in place for responsible release of data or models that have a high risk for misuse (e.g., pretrained language models, image generators, or scraped datasets)?
    \item[] Answer: \answerNA{} 
    \item[] Justification: This paper does not release any new data or models. 
    \item[] Guidelines:
    \begin{itemize}
        \item The answer NA means that the paper poses no such risks.
        \item Released models that have a high risk for misuse or dual-use should be released with necessary safeguards to allow for controlled use of the model, for example by requiring that users adhere to usage guidelines or restrictions to access the model or implementing safety filters. 
        \item Datasets that have been scraped from the Internet could pose safety risks. The authors should describe how they avoided releasing unsafe images.
        \item We recognize that providing effective safeguards is challenging, and many papers do not require this, but we encourage authors to take this into account and make a best faith effort.
    \end{itemize}

\item {\bf Licenses for existing assets}
    \item[] Question: Are the creators or original owners of assets (e.g., code, data, models), used in the paper, properly credited and are the license and terms of use explicitly mentioned and properly respected?
    \item[] Answer: \answerYes{} 
    \item[] Justification: The assets we use are all publicly available and properly credited. 
    \item[] Guidelines:
    \begin{itemize}
        \item The answer NA means that the paper does not use existing assets.
        \item The authors should cite the original paper that produced the code package or dataset.
        \item The authors should state which version of the asset is used and, if possible, include a URL.
        \item The name of the license (e.g., CC-BY 4.0) should be included for each asset.
        \item For scraped data from a particular source (e.g., website), the copyright and terms of service of that source should be provided.
        \item If assets are released, the license, copyright information, and terms of use in the package should be provided. For popular datasets, \url{paperswithcode.com/datasets} has curated licenses for some datasets. Their licensing guide can help determine the license of a dataset.
        \item For existing datasets that are re-packaged, both the original license and the license of the derived asset (if it has changed) should be provided.
        \item If this information is not available online, the authors are encouraged to reach out to the asset's creators.
    \end{itemize}

\item {\bf New Assets}
    \item[] Question: Are new assets introduced in the paper well documented and is the documentation provided alongside the assets?
    \item[] Answer: \answerNo{} 
    \item[] Justification: No new assets are introduced in this paper. 
    \item[] Guidelines:
    \begin{itemize}
        \item The answer NA means that the paper does not release new assets.
        \item Researchers should communicate the details of the dataset/code/model as part of their submissions via structured templates. This includes details about training, license, limitations, etc. 
        \item The paper should discuss whether and how consent was obtained from people whose asset is used.
        \item At submission time, remember to anonymize your assets (if applicable). You can either create an anonymized URL or include an anonymized zip file.
    \end{itemize}

\item {\bf Crowdsourcing and Research with Human Subjects}
    \item[] Question: For crowdsourcing experiments and research with human subjects, does the paper include the full text of instructions given to participants and screenshots, if applicable, as well as details about compensation (if any)? 
    \item[] Answer: \answerNA{} 
    \item[] Justification: This paper does not involve crowdsourcing nor research with human subjects. 
    \item[] Guidelines:
    \begin{itemize}
        \item The answer NA means that the paper does not involve crowdsourcing nor research with human subjects.
        \item Including this information in the supplemental material is fine, but if the main contribution of the paper involves human subjects, then as much detail as possible should be included in the main paper. 
        \item According to the NeurIPS Code of Ethics, workers involved in data collection, curation, or other labor should be paid at least the minimum wage in the country of the data collector. 
    \end{itemize}

\item {\bf Institutional Review Board (IRB) Approvals or Equivalent for Research with Human Subjects}
    \item[] Question: Does the paper describe potential risks incurred by study participants, whether such risks were disclosed to the subjects, and whether Institutional Review Board (IRB) approvals (or an equivalent approval/review based on the requirements of your country or institution) were obtained?
    \item[] Answer: \answerNA{} 
    \item[] Justification: This paper does not involve crowdsourcing nor research with human subjects.
    \item[] Guidelines:
    \begin{itemize}
        \item The answer NA means that the paper does not involve crowdsourcing nor research with human subjects.
        \item Depending on the country in which research is conducted, IRB approval (or equivalent) may be required for any human subjects research. If you obtained IRB approval, you should clearly state this in the paper. 
        \item We recognize that the procedures for this may vary significantly between institutions and locations, and we expect authors to adhere to the NeurIPS Code of Ethics and the guidelines for their institution. 
        \item For initial submissions, do not include any information that would break anonymity (if applicable), such as the institution conducting the review.
    \end{itemize}

\end{enumerate}


\end{document}